\documentclass[11pt]{article}

\usepackage[final]{acl}

\usepackage{times}
\usepackage{latexsym}

\usepackage[T1]{fontenc}
\usepackage[utf8]{inputenc}

\usepackage{microtype}

\usepackage{inconsolata}

\usepackage{graphicx}
\graphicspath{{figures/}}

\usepackage{listings}
\lstdefinestyle{mystyle}{
    backgroundcolor=\color{lightgray},   
    commentstyle=\color{green},
    keywordstyle=\color{magenta},
    numberstyle=\tiny\color{gray},
    stringstyle=\color{red},
    basicstyle=\ttfamily\footnotesize,
    breakatwhitespace=false,         
    breaklines=true,                 
    captionpos=b,                    
    keepspaces=true,                 
    numbers=left,                    
    numbersep=5pt,                  
    showspaces=false,                
    showstringspaces=false,
    showtabs=false,                  
    tabsize=2,
    moredelim=[is][\color{red}]{*}{*}, % Define a custom delimiter to color text in red
    moredelim=[is][\color{blue}]{\{}{\}}, % Define a custom delimiter to color text in blue
    moredelim=[is][\color{orange}]{\%}{\%}, % Define a custom delimiter to color text in orange
}
\usepackage[ruled, vlined]{algorithm2e}
\usepackage{multicol}
\usepackage{diagbox}
\usepackage{multirow}
\usepackage{xcolor}
\usepackage{subfiles}
\usepackage{amsmath}
\usepackage{graphicx}
\usepackage{amsthm}
\usepackage{pifont}
\usepackage{subcaption}
\usepackage{booktabs}       % professional-quality tables
\usepackage{placeins}
\usepackage{amsfonts}       % blackboard math symbols
\usepackage[table]{xcolor}         % colors

\newcommand{\tb}[1]{\textbf{#1}}
\newcommand{\mc}[1]{\mathcal{#1}}
\newcommand{\E}{\mathbb{E}}
\newcommand{\revised}[1]{#1}

\newcommand{\revise}[1]{#1}
\newcommand{\newrevise}[1]{#1}
\newcommand{\camerarevise}[1]{#1}
\newenvironment{camerarevision}[1]{#1}

\theoremstyle{plain}
\newtheorem{theorem}{Theorem}[section]
\newtheorem{proposition}[theorem]{Proposition}

\theoremstyle{definition}

\theoremstyle{remark}

\usepackage{tabularx}    % for auto-width tables
\newcolumntype{Y}{>{\centering\arraybackslash}X}
\usepackage{booktabs}    % for \toprule etc.
\newcommand{\cmark}{\textcolor{green!60!black}{\ding{51}}} % ✓ green
\newcommand{\xmark}{\textcolor{red}{\ding{55}}}           % ✗ red
\title{From Prompts to Trees: Effective LLM-Guided Tree Generation for Few-Shot Tabular Classification}

\author{
Yue Qiu$^{1*}$,
Zekang Du$^{1*}$,
Yiqun Diao$^{2}$,
Bingsheng He$^{2}$,
Qinbin Li$^{1\dagger}$ \\
$^{1}$School of Computer Science and Technology, Huazhong University of Science and Technology \\
$^{2}$School of Computing, National University of Singapore \\
\texttt{\{yueqiu, zekangdu, qinbin\}@hust.edu.cn},\\
\texttt{yiqun@comp.nus.edu.sg},
\texttt{dcsheb@nus.edu.sg}
}

\begin{document}
\maketitle
\begingroup
\renewcommand{\thefootnote}{}
\footnotetext{%
\begin{tabular}{@{}l@{}}
\(*\) Equal contribution.\\
\(\dagger\) Corresponding author.
\end{tabular}}
\addtocounter{footnote}{-1}
\endgroup
\begin{abstract}
While Large Language Models (LLMs) possess rich world knowledge and impressive generalization capabilities, their direct application to tabular data classification is hindered by high inference costs and limited interpretability. In contrast, decision trees are fast and transparent but often underperform in low-data regimes. In this work, we propose a novel framework that bridges these paradigms by distilling LLM knowledge into interpretable decision trees under a few-shot learning setting. Instead of directly prompting the LLM to generate full trees, which is often unstable and inefficient, we develop a three-stage paradigm that prompts the LLM to generate rules and organize the rules into a tree. Experiments on multiple real-world tabular datasets demonstrate that our method achieves superior accuracy and interpretability with significantly lower prompting overhead compared to existing baselines.
\end{abstract}

\section{Introduction}
Large Language Models (LLMs)~\cite{openai2023gpt4,touvron2023llama,liu2024deepseek,naveed2023comprehensive,bai2023qwen} have demonstrated remarkable capabilities in various domains, including code generation, planning, and complex problem-solving tasks. 
Despite these advancements, their potential remains significantly underexplored in the context of traditional tabular data classification~\cite{shwartz2022tabular,borisov2022deep}. Tabular data is still the widely used format of data in reality, and is ubiquitous in critical domains such as healthcare diagnostics, financial risk assessment, and customer behavior prediction.

Although LLMs inherently possess extensive knowledge, their direct application to tabular data classification faces two prominent challenges: (1) limited interpretability, making it difficult to provide intuitive justifications for classification decisions, and (2) inefficiency in inference, which restricts their applicability in resource-constrained and real-time scenarios. Decision trees~\cite{loh2011classification,chen2016xgboost,ke2017lightgbm,maimon2014data,quinlan1986induction,quinlan2014c4}, on the other hand, are widely adopted for tabular data classification due to their inherent interpretability, high computational efficiency, and ability to capture non-linear relationships between features. It has been a dominant model for tabular data classification for a long period. \revised{However, traditional tree training requires substantial labeled data to capture non-linear feature interactions. While LLMs can extract prior semantic knowledge from textified features to compensate for this data scarcity, combining them as a knowledge source and an efficient executor remains a challenge in practical few-shot settings.} 

This raises an important question: \emph{can we present the knowledge of LLMs by decision trees for tabular data classification in few-shot settings?} If successful, such a tree representation would simultaneously inherit the LLM’s rich prior knowledge while retaining the interpretability and fast inference of decision trees, thereby addressing the key limitations of directly applying LLMs to tabular classification. A straightforward approach is to prompt the LLM once to directly generate a tree to represent its knowledge. However, this yields unreliable results due to the inherent complexity and instability of long-context reasoning. Another method is to generate the reasoning process by prompting multiple times. For example, Tree-of-Thought~\cite{yao2023tree} approaches involve recursively prompting LLMs to generate the reasoning paths in a tree structure. Nevertheless, such methods suffer from substantial prompting overhead when applied to tabular data classification, rendering them impractical for wide adoption.

In this work, we propose \textbf{LLMT}, a modular and efficient framework that distills LLM knowledge into decision trees through a three-stage pipeline: (1) \emph{Rule Generation}, where the LLM produces structured, machine-parseable classification rules; (2) \emph{Tree Assembly}, where these rules are organized into a tree based on Gini impurity computed from a small labeled dataset; and (3) \emph{Leaf Refinement}, where the LLM corrects noisy leaf decisions. \camerarevise{Unlike prompting-based methods such as Chain-of-Thought (CoT) and Tree-of-Thought (ToT), which construct a reasoning chain or tree in a single pass or through recursive expansion, LLMT breaks the task into distinct stages. Our experiments show that this modular design improves both stability and efficiency.}

Our contributions are as follows:
\begin{itemize}
    \item We introduce a \textbf{novel three-stage distillation paradigm} that bridges the gap between expensive, opaque LLM reasoning and efficient, interpretable tree models.
    \item We provide a \textbf{formal generalization analysis} of LLMT under explicit structural assumptions, clarifying its statistical properties.
    \item We conduct \textbf{extensive experiments} on \camerarevise{11 datasets with 17 baselines}, reporting accuracy and efficiency across different settings.
    \item We show that LLMT achieves consistent and significant accuracy improvements, large token-cost savings, and interpretable outputs in the few-shot setting.
\end{itemize}

\begingroup
\renewcommand{\thefootnote}{}
\footnotetext{Code is available at \href{https://github.com/yueqiu0/LLMTree}{\nolinkurl{github.com/yueqiu0/LLMTree}}.}
\addtocounter{footnote}{-1}
\endgroup
\section{Background and Related Work}

\subsection{Decision Trees}
Decision trees~\citep{loh2011classification,maimon2014data,quinlan1986induction,quinlan2014c4} are among the most fundamental models in machine learning, widely used in tasks requiring interpretable decision-making. A decision tree forms a hierarchy of if--then rules: internal nodes test features, branches denote test outcomes, and leaves assign class labels. Trees are learned by recursively selecting splits that reduce impurity, such as Gini impurity~\citep{loh2011classification}.

\subsection{LLMs for Tabular Data}

Tabular data is one of the most prevalent formats in real-world applications and can be naturally converted into text by enumerating feature-value pairs. Recent work has explored LLMs for tabular prediction via fine-tuning, instruction tuning, synthetic data generation, or direct prompting~\citep{wang2023anypredict,herzig2020tapas,zhang2023generative,hegselmann2023tabllm,jaitly2023better,fang2024large,dinh2022lift,zhang2024towards}. TabLLM~\citep{hegselmann2023tabllm} fine-tunes T0~\citep{sanh2021multitask} with a few tabular examples, while InsightTab~\citep{insighttab2025} and SumBoost~\citep{sumboost2023} use LLMs directly as tabular classifiers with improved prediction pipelines. Although effective, these methods rely on LLMs during inference, leading to limited interpretability and high prediction cost.

Several recent works further investigate whether LLMs can generate rule-based or tree-like structures for tabular data~\citep{knauer2024oh,carrasco2025zero,Xiongetal2024}. Direct-ZSDT~\citep{knauer2024oh} and Step-ZSDT~\citep{carrasco2025zero} build decision trees zero-shot but cannot use labeled examples to guide construction. GPTree~\citep{Xiongetal2024} uses LLMs to pre-generate rule candidates from raw training samples, making it more suitable for data-rich settings but less robust in few-shot regimes where limited or noisy examples may bias rule generation.

Other related methods combine LLMs with downstream estimators. FeatLLM~\citep{Hanetal2024} uses LLMs for feature engineering before training a linear model, while DeLTa~\citep{Yeetal2025} uses LLM-refined rules to guide residual error correction in tree ensembles. In contrast, our method aims to generate a standalone decision tree: it first extracts structured rules from the LLM's prior knowledge and then organizes them statistically using few-shot examples, achieving interpretability, low inference cost, and data-aware tree construction.

\begin{table*}[!]
\centering
{\scriptsize
\setlength{\tabcolsep}{1.2pt}
\renewcommand{\arraystretch}{1.05}
\newcommand{\tableonehead}[1]{{\fontsize{6.5}{7}\selectfont\bfseries #1}}
% Keep method names at their native size and distribute whitespace evenly.
\begin{tabular*}{\textwidth}{@{}>{\raggedright\arraybackslash}p{3.35cm}@{\extracolsep{\fill}}*{12}{c}@{}}
\toprule
\tableonehead{Features}
& \tableonehead{LLMT}
& \revise{\tableonehead{Direct-ZSDT}}
& \revise{\tableonehead{Step-ZSDT}}
& \tableonehead{CoT-Tree}
& \tableonehead{ToT-Tree}
& \tableonehead{IO-Tree}
& \tableonehead{TabLLM}
& \revise{\tableonehead{InsightTab}}
& \revise{\tableonehead{SumBoost}}
& \tableonehead{FeatLLM}
& \revised{\tableonehead{DeLTa}}
& \revised{\tableonehead{GPTree}} \\
\midrule
\mbox{Few-shot ready (no finetune)}      & \cmark & \xmark & \xmark & \cmark & \cmark & \cmark & \xmark & \cmark & \cmark & \cmark & \cmark & \cmark \\
\mbox{Token-efficient build}      & \cmark & \cmark & \xmark & \cmark & \xmark & \cmark & \xmark & \xmark & \xmark & \xmark & \xmark & \xmark \\
\mbox{Tree generation}   & \cmark & \cmark & \cmark & \cmark & \cmark & \cmark & \xmark & \xmark & \xmark & \xmark & \xmark & \cmark \\
\mbox{No LLM at inference}               & \cmark & \cmark & \cmark & \cmark & \cmark & \cmark & \xmark & \xmark & \xmark & \cmark & \cmark & \cmark \\
\mbox{Rule-first (vs path-first)}        & \cmark & \xmark & \xmark & \xmark & \xmark & \xmark & \xmark & \xmark & \xmark & \cmark & \xmark & \cmark \\
\mbox{\revised{Sample-level privacy preservation}} & \cmark & \cmark & \cmark & \xmark & \xmark & \xmark & \xmark & \xmark & \xmark & \xmark & \cmark & \xmark \\
\mbox{Theoretical generalization bound}  & \cmark & \xmark & \xmark & \xmark & \xmark & \xmark & \xmark & \xmark & \xmark & \xmark & \xmark & \xmark \\
\bottomrule
\end{tabular*}
}
\caption{Comparison of our LLM-guided tree method (\textbf{LLMT}) with other related studies}
\label{tab:llmt_comparison}
\end{table*}

\subsection{Prompting with Structured Reasoning}
Few-shot prompting is a widely adopted strategy to elicit reasoning capabilities from LLMs using only a handful of labeled examples. Prior work~\cite{wei2022chain,yao2023tree,ning2024skeletonofthought,besta2024graph} has shown that making the intermediate reasoning process explicit in the prompt significantly enhances LLM performance. Chain-of-Thought (CoT) prompting~\citep{wei2022chain} introduces the idea of decomposing complex tasks into a sequence of intermediate reasoning steps, leading to better accuracy.

Building on this, Tree-of-Thoughts (ToT)~\citep{yao2023tree} generalizes CoT by considering multiple branching reasoning paths, enabling more robust and explorative problem-solving. Skeleton-of-Thought (SoT)~\citep{ning2024skeletonofthought} further refines this approach by prompting the LLM to first generate a high-level reasoning structure and then elaborate on each component in parallel.

\paragraph{Positioning.}
Existing approaches fall into three categories: (i) using LLMs directly as in-context tabular classifiers~\citep{hegselmann2023tabllm,insighttab2025,sumboost2023}; (ii) using LLMs to generate features, rules, or residual corrections for downstream models~\citep{Hanetal2024,Yeetal2025}; and (iii) prompting LLMs to generate reasoning paths or tree-like structures~\citep{wei2022chain,yao2023tree,Xiongetal2024}. 
LLMT differs from these by separating knowledge extraction from tree construction: it extracts structured rules from feature semantics, assembles them statistically with few-shot labeled examples, and then performs leaf refinement.
Unlike data-driven rule-generation methods such as GPTree~\citep{Xiongetal2024}, LLMT does not rely on many raw training samples to induce rules, making it better suited to few-shot settings. 
\newrevise{Moreover, LLMT preserves sample-level privacy by deriving rules solely from feature semantics without transmitting raw data values to the LLM.}
Our design avoids using the LLM as the final classifier and yields an explicit decision tree with interpretability and low inference cost. 
Table~\ref{tab:llmt_comparison} summarizes the comparison.

% --- Preamble (add if not already present) ---
% \usepackage{pifont}      % for \ding symbols
% \usepackage{tabularx}    % for auto-width tables
% \usepackage{booktabs}    % for \toprule etc.
% \newcommand{\cmark}{\ding{51}}  % ✓
% \newcommand{\xmark}{\ding{55}}  % ✗
% \newcommand{\pmark}{\(\triangle\)} % partial/depends

% --- Preamble (add if not already present) ---

% --- Preamble additions ---
% \usepackage{pifont}      % for \ding symbols
% \usepackage{xcolor}      % for text colors
% \usepackage{tabularx}    % flexible column widths
% \usepackage{booktabs}    % nice horizontal rules
% \newcommand{\cmark}{\textcolor{green!60!black}{\ding{51}}} % ✓ green
% \newcommand{\xmark}{\textcolor{red}{\ding{55}}}           % ✗ red
% \newcommand{\pmark}{\(\triangle\)}                        % partial/depends

\section{Motivation}

We start by conducting experiments to gain insights into the performance and capabilities of LLMs in the context of tabular data classification. Our observations highlight both the strengths and limitations of LLMs, motivating our proposed approach.

\begin{figure}
    \centering
    \includegraphics[width=1\linewidth]{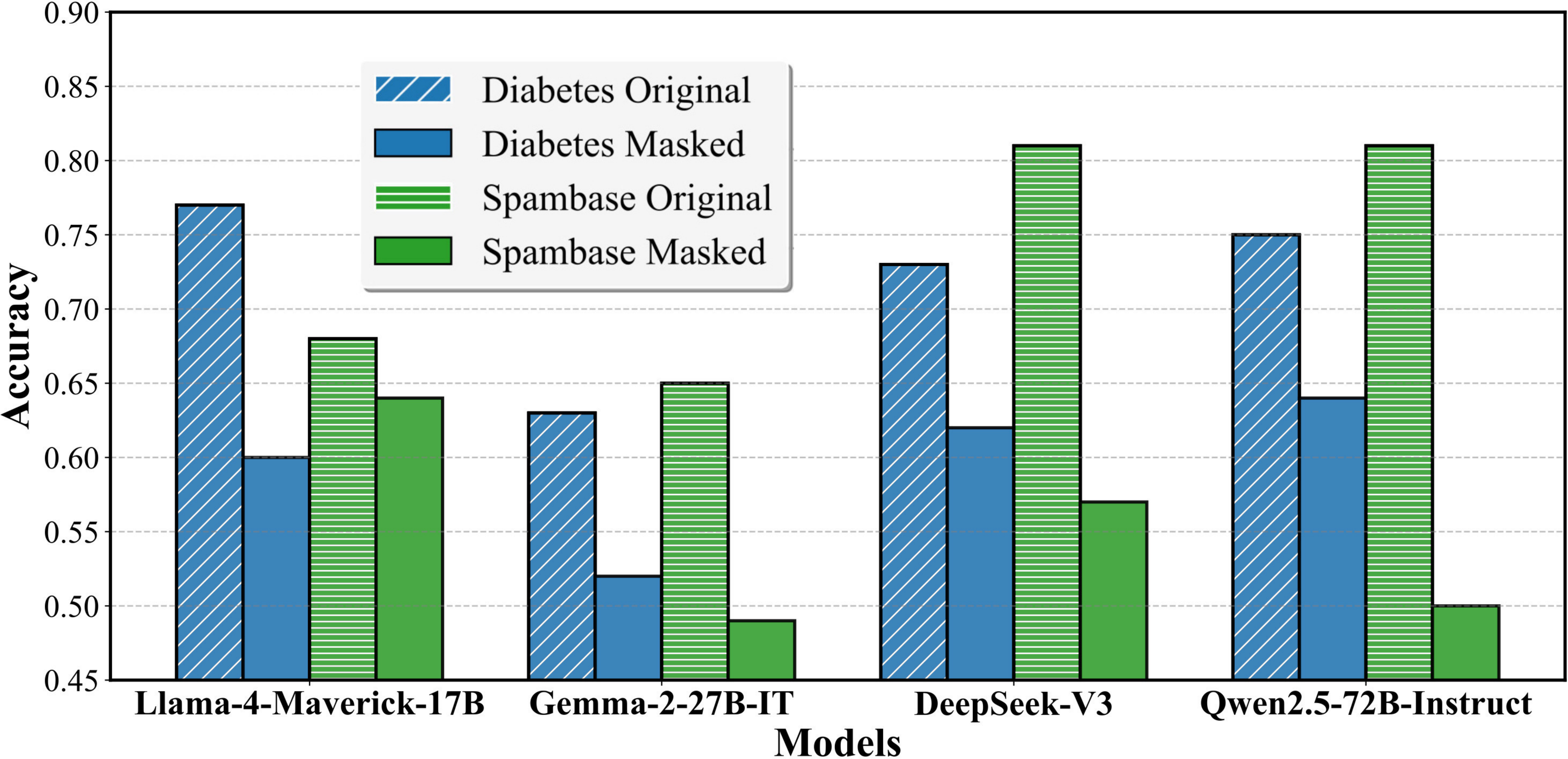}
    \caption{The test accuracy of LLMs with or without the metadata about the dataset.}
    \label{fig:mot1}
\end{figure}

\begin{figure}[t]
    \centering
    \begin{subfigure}[t]{0.48\linewidth}
        \centering
        \includegraphics[width=\linewidth]{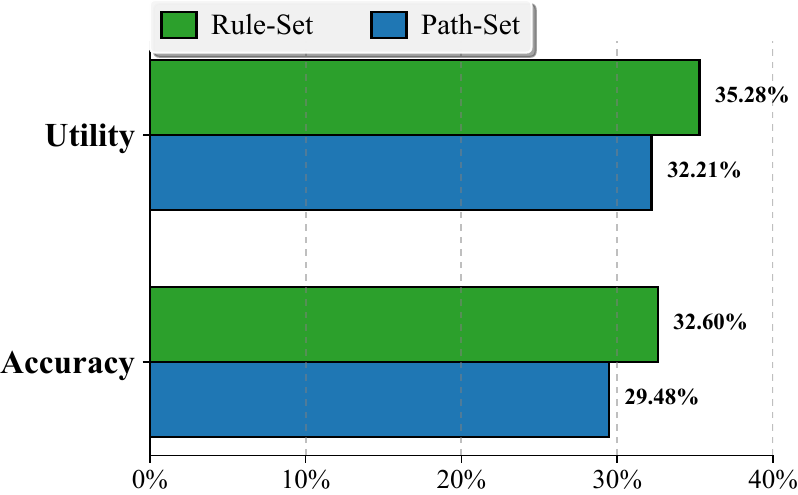}
        \caption{Diabetes}
        \label{fig:rule-path-spambase}
    \end{subfigure}
    \hfill
    \begin{subfigure}[t]{0.48\linewidth}
        \centering
        \includegraphics[width=\linewidth]{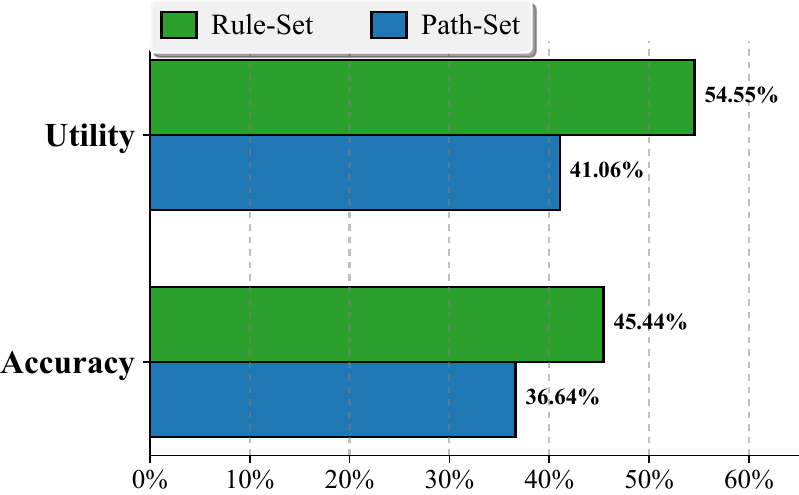}
        \caption{Spambase}
        \label{fig:rule-path-diabetes}
    \end{subfigure}
    \caption{\camerarevise{Comparison of Utility and Accuracy between generated rule sets and path sets}}
    \label{fig:mot2}
\end{figure}

\noindent \textit{
\colorbox{gray!20}{
\parbox{0.96\linewidth}{
\textbf{Observation 1: LLMs have rich background knowledge for tabular data classification.}
}}}

We assess the predictive performance of four LLMs on public tabular datasets under two scenarios: 1) \textit{original}, where we convert the tabular data into text and explicitly provide feature names and task descriptions in the query prompts; 2) \textit{masked}, where feature names and task descriptions are omitted \revise{(see Appendix~\ref{app:exp_settings} for details.)} The experimental results on two public tabular datasets \revise{Diabetes~\cite{diabetes_34} and Spambase~\cite{spambase_94}}, summarized in Figure~\ref{fig:mot1} under zero-shot prompting conditions, clearly show that the prediction accuracy of LLMs substantially deteriorates when metadata is removed. \camerarevise{To minimize the potential influence of pretraining memorization, we extend this comparison to six newer datasets, with complete results reported in Table~\ref{tab:newer_masking} of Appendix~\ref{app:obs1_scenarios}.}

\noindent \textit{
\colorbox{gray!20}{
\parbox{0.96\linewidth}{
\textbf{Observation 2: Compared to generating long reasoning paths, LLMs are better at producing discrete rules for tabular data classification.}
}}}

Recent studies~\cite{Hanetal2024} suggest that LLMs excel at feature engineering. To further investigate this capability, we conduct a comparative analysis of LLM performance in generating long reasoning paths versus discrete rules for tabular data classification. \revise{Reasoning paths can be seen as a series of rule-checks that traverse from the root to a leaf node in the decision tree.} \camerarevise{For a controlled comparison, we prompt LLMs to generate reasoning-path sets and discrete-rule sets containing the same total number of atomic feature conditions.} We then \camerarevise{evaluate the generated sets using Set Utility and Set Accuracy across eight datasets} \revise{(see Appendix~\ref{app:obs2_exp} for detailed settings and complete results).} The representative results, shown in Figure~\ref{fig:mot2}, reveal that rule generation achieves higher \camerarevise{Set Utility and Set Accuracy} than path generation. These findings indicate that while LLMs struggle to produce coherent and complete hierarchical reasoning paths in few-shot settings, they are particularly effective at generating accurate and interpretable individual classification rules.

These findings motivate our design choice: rather than asking the LLM to produce full reasoning paths, which are often unstable, lengthy, and error-prone, we instead focus on \emph{discrete rule generation and organization}.

\section{Methodology}

\subsection{Problem Statement}
We consider the problem of distilling the knowledge embedded in an LLM into an interpretable and efficient decision tree for tabular data classification. Formally, let $\mathcal{D} = \{(x_i, y_i)\}_{i=1}^{n}$ denote a tabular dataset with $n$ samples, where each input $x_i \in \mathbb{R}^m$ is a feature vector of $m$ attributes and $y_i\in\{1,2,\dots,K\}$ is a categorical label over $K$ classes. The goal is to learn a decision tree $T$ such that $T(x)$ approximates the classification capability of the LLM while maintaining interpretability and low computational overhead. 

Crucially, we assume access to only a small number of labeled samples (few-shot setting) and the ability to query an LLM via prompting. This constraint reflects practical scenarios where labeled tabular data is scarce but the model’s background knowledge is rich.

\subsection{Framework Overview}
To tackle the above challenge, we propose a novel three-stage distillation framework named \tb{LLMT} that efficiently extracts and structures the LLM's latent decision-making knowledge into a decision tree. Instead of prompting the LLM to generate the reasoning paths as a tree, we ask it to generate discrete rules and then organize them as a tree in a statistical way. As illustrated in Figure~\ref{fig:llmt}, our framework comprises the following components: (1) \textbf{Rule Generation}: We prompt the LLM with carefully designed instructions to generate discrete classification rules, along with a self-assessed confidence score for each rule. (2) \textbf{Tree Assembly}: We systematically organize the induced rules into a tree structure by selecting high-quality rules in a layer-wise manner using Gini-based splitting criteria on the small labeled dataset. (3) \textbf{Leaf Refinement}: We prompt the LLM to analyze each rule path, evaluate whether rules are satisfied, and assign confidence scores to possible labels, creating a probability distribution that quantifies prediction certainty. \revise{Please refer to Appendix~\ref{app:example} for an example of the execution process.}

\noindent\textbf{Implementation Details.} 
We adopt a level-wise \emph{heap indexing} scheme: the root is indexed as 1, its left and right children as 2 and 3, and so on in breadth-first order. At each node, we consider rules not yet used along the current path. Candidate rules are grouped by their confidence scores into \revised{multiple} bins, processed in descending order. Within each group, ties are broken by highest Gini gain and lowest index to ensure determinism. The process terminates when either the maximum tree depth is reached or no remaining rule yields a positive gain.

\begin{figure*}[t]
    \centering
    \includegraphics[width=0.95\linewidth]{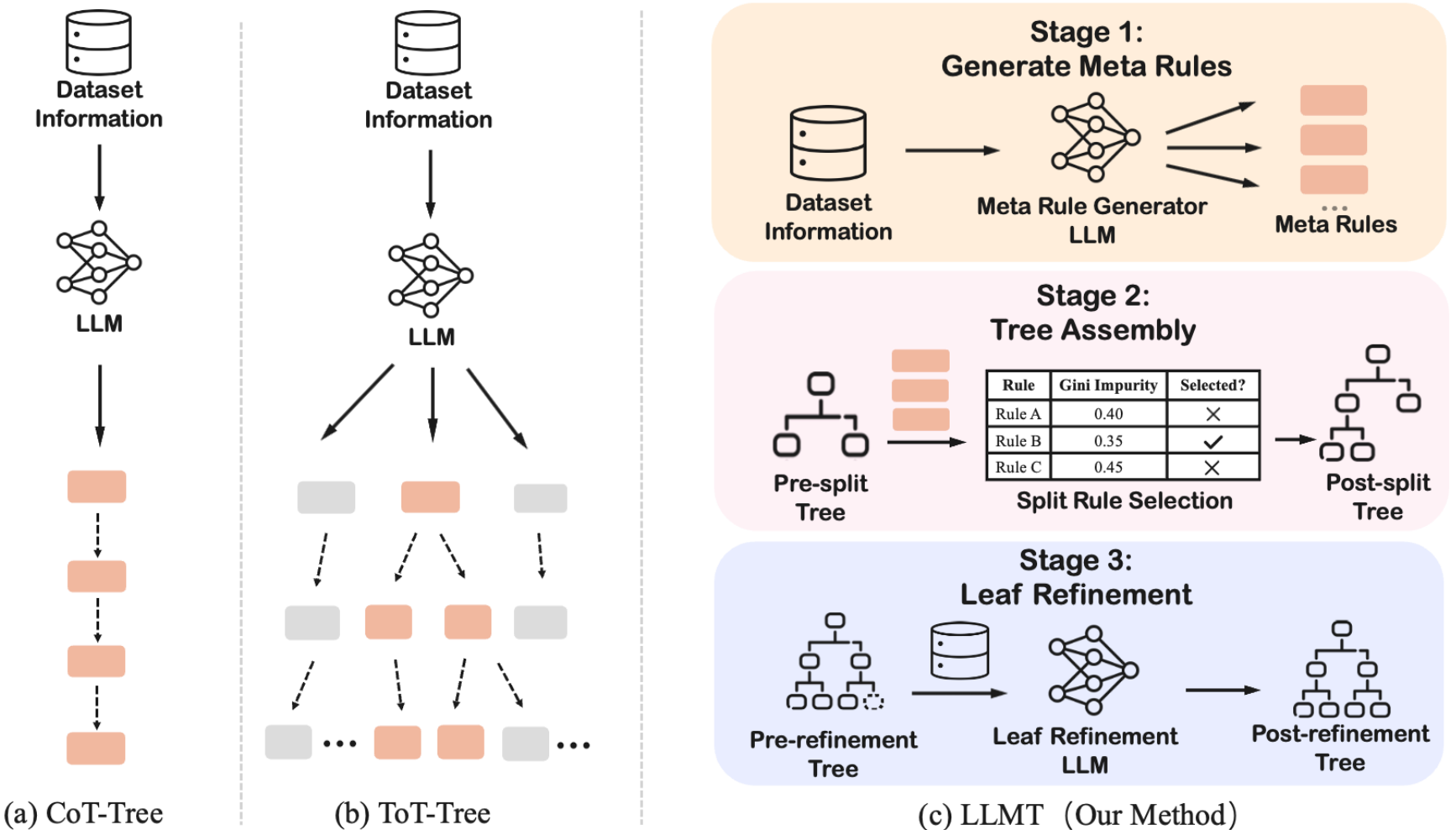}
    \caption{Comparison between our method and prompting-based methods for generating reasoning trees. Instead of (1) asking the LLM to directly generate long reasoning paths (CoT-Tree), or (2) asking the LLM to generate the next thought based on a long reasoning path, our method prompts the LLM to generate discrete rules, organizes them into a tree using statistical methods, and then uses the LLM for final refinement. }
    \label{fig:llmt}
\end{figure*}

\subsection{Rule Generation}

In the first stage, we prompt the LLM to generate a structured set of candidate rules. Each rule follows the grammar:
\revise{\[
 f_j \ \texttt{op}\ \theta \ ,
\]}
where \revise{$\texttt{op} \in \{<, \geq, =, \neq\}$}, $\theta$ is a numerical threshold or categorical value, \revise{and $f_j$ represents the $j$-th feature in the dataset (e.g., age $\geq$ 20).} Categorical features are encoded using one-vs-rest splits.

% The LLM receives (i) a \emph{system prompt} specifying the grammar and output format, and (ii) a \emph{dataset prompt} with feature names,types, and a few labeled examples. 

\revised{The LLM receives (i) a \emph{system prompt} detailing task constraints and rules, and (ii) a \emph{dataset prompt} containing feature names, types, and dataset-agnostic rule set examples for formatting.}

The model outputs rules with self-assessed confidence scores in \revise{$[1,10]$}.

\subsection{Tree Assembly}

The objective of this stage is to construct a decision tree $T$ using the rule set $\mathcal{R}$ and few-shot examples \revise{$\mathcal{D}_{\text{train}}$}. Our idea is to put the important rules on the top of the tree so that they can be shared in more reasoning paths, while using the knowledge in few-shot examples to adjust the node positions.

\noindent\textbf{Rule Grouping and Selection.} 
The process proceeds in a level-wise fashion through the tree. For each node at depth level $l$, we exclude rules already used on the path $R_{\mathrm{path}}$ to that node. We then group the remaining candidates by their confidence scores \revise{ (rules with the same confidence value form one group)}. Starting with the highest-confidence group, we select the optimal rule $R_k$ by minimizing the total Gini impurity of the resulting splits. Specifically, suppose the current tree node corresponds to a subset \revise{$\mc{D}_p \subseteq \mathcal{D}_{\text{train}}$} of samples. Then, we aim to find a rule $R_k$ as the node $T[i]$ such that
\begin{equation}
\label{eq:rule_selection}
    \arg\min_{R_k} \frac{|\mc{D}_l^k|}{|\mc{D}_p|}I_G(\mc{D}_l^k)+\frac{|\mc{D}_r^k|}{|\mc{D}_p|}I_G(\mc{D}_r^k),
\end{equation}
where $\mc{D}_l^k$ is the left child set \revised{split} by $R_k$ and $\mc{D}_r^k$ is the right child set \revised{split} by $R_k$. Intuitively, the following proposition holds:

\begin{proposition}[Expected Impurity Reduction]
Let \(r^*\in\mathcal{R}\) be the selected rule at node \(t\), splitting dataset \revise{\(\mc{D}_p \subseteq \mathcal{D}_{\text{train}}\)} into \(\mc{D}_l,\mc{D}_r\). Then, the expected impurity after splitting satisfies:
\[
\E[\mathrm{Gini}(\mc{D}_l,\mc{D}_r)] \leq \mathrm{Gini}(\mc{D}_p),
\]
with equality if and only if the split yields no information gain.
\end{proposition}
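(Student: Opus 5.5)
The plan is to read $\mathrm{Gini}(\mc{D}_l,\mc{D}_r)$ as the weighted child impurity appearing in Eq.~(\ref{eq:rule_selection}), namely $\frac{|\mc{D}_l|}{|\mc{D}_p|}I_G(\mc{D}_l)+\frac{|\mc{D}_r|}{|\mc{D}_p|}I_G(\mc{D}_r)$. This is the expectation of the child impurity when a sample is drawn uniformly from $\mc{D}_p$ and routed by $r^*$. The claim then becomes the standard fact that conditioning cannot increase Gini impurity. I would prove it for an arbitrary rule $R_k$, and hence also for the minimizer $r^*$. It follows for $r^*$ even more directly, since $r^*$ minimizes this quantity and the weighted impurity can only go down.

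The key step is concavity. Write $p=(p_1,\dots,p_K)$ for the empirical class distribution on $\mc{D}_p$, and $p^{(l)},p^{(r)}$ for the distributions on the two children. Set $w_l=|\mc{D}_l|/|\mc{D}_p|$ and $w_r=1-w_l$. Since every sample goes to exactly one child, the class counts add, which gives
\[
p=w_l\,p^{(l)}+w_r\,p^{(r)}.
\]
The Gini function $I_G(q)=1-\sum_k q_k^2$ is concave on the simplex, because it is $1$ minus a sum of convex squares. Jensen's inequality therefore gives $w_l I_G(p^{(l)})+w_r I_G(p^{(r)})\le I_G(p)$. If one child is empty, its weight is zero, the other child equals $\mc{D}_p$, and equality holds trivially.

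For the equality case I would compute the gap explicitly rather than appeal to strict concavity. Using $\sum_k q_k^2=\|q\|^2$, the gap is
\[
I_G(p)-w_l I_G(p^{(l)})-w_r I_G(p^{(r)}) = w_l\|p^{(l)}\|^2+w_r\|p^{(r)}\|^2-\|p\|^2 = w_l w_r\,\|p^{(l)}-p^{(r)}\|^2,
\]
which is the variance identity for a two-point mixture. The gap is therefore zero exactly when one child is empty or both children have the same class distribution as the parent. This is precisely the situation in which the split carries no information about the label: the Gini gain is zero, and the empirical mutual information between the split indicator and the label is zero. That establishes the ``if and only if''.

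The main obstacle is interpretive rather than technical: fixing what ``expected'' and ``no information gain'' mean so that the statement is a theorem. I would state explicitly that the expectation is over a uniform draw from $\mc{D}_p$, or equivalently over the routing induced by $r^*$. I would define ``no information gain'' as zero Gini gain, equivalently identical child label distributions, and note that under this definition the equality case is exactly the vanishing of $w_l w_r\|p^{(l)}-p^{(r)}\|^2$.
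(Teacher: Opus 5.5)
Your proof is correct, and it takes a different and more rigorous route than the paper's. The paper writes the weighted child impurity, notes that $r^*$ minimizes it, and concludes directly that it is at most $\mathrm{Gini}(\mc{D}_p)$; the equality clause is then asserted with a one-line description of ``no effective class separation.'' The minimization step only compares $r^*$ with the other candidate rules. To reach $\mathrm{Gini}(\mc{D}_p)$ it implicitly needs some candidate whose weighted impurity is already at most the parent impurity, for instance a trivial split in the pool. That missing fact is exactly what you prove.

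Your route uses the mixture identity $p=w_l p^{(l)}+w_r p^{(r)}$ together with concavity of $1-\|q\|^2$. It shows the inequality holds for every rule, so the minimizing property of $r^*$ plays no role. Your exact gap $w_l w_r\|p^{(l)}-p^{(r)}\|^2$ turns the paper's informal equality clause into a checkable condition: an empty child, or identical child class distributions. This is equivalent to zero Gini gain, and to zero empirical mutual information between the split indicator and the label. It therefore proves both directions of the ``if and only if.'' The paper's version gains only brevity and a direct link to the selection criterion in Eq.~\eqref{eq:rule_selection}.

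One small point: your remark that the claim follows ``even more directly'' for $r^*$ because it is the minimizer is valid only once the all-rules inequality is established. I would drop it, so as not to suggest that minimization alone suffices, which is precisely the gap in the paper's argument.
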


\subsection{Leaf Refinement}
While the internal structure of the tree is built from data-driven rule selection, the final classification decisions at leaf nodes may still be noisy or misaligned with LLM’s original knowledge. To address this, we introduce a refinement step that evaluates and corrects the leaf nodes if necessary.

For every path $p$ from the root to a leaf in $T$, we construct a natural language query describing the rule chain and prompt the LLM to evaluate its logical soundness. The LLM returns confidence scores $\mathbf{C_p}$ (a probability distribution over all possible labels). We identify the label $y^*$ with the highest confidence score $c^*$. If $c^* > \tau$ (a predefined threshold) and $y^*$ differs from the current leaf label, we adjust the classification at that leaf node to $y^*$ to improve consistency and generalization. As we show in Section~\ref{sec:ablation}, our leaf refinement process usually improves the accuracy of the tree. Using the PAC theory~\cite{blumer1989learnability}, we have the following theorem for the learned tree:

\begin{theorem}[Generalization Error Bound]
\label{thm:gen}

Consider a decision tree \(T\) constructed using LLMT of maximum depth \(d\) from a finite rule set \(\mathcal{R}\) with size \(K\). Given \(n\) training samples, for any \(\delta>0\), with probability at least \(1-\delta\), the generalization error \(R(T)\) satisfies:
\par \nobreak
\begingroup\small
\setlength{\abovedisplayskip}{2pt plus 1pt minus 2pt}
\setlength{\abovedisplayshortskip}{0pt plus 1pt}
\setlength{\belowdisplayskip}{8pt plus 2pt minus 2pt}
\setlength{\belowdisplayshortskip}{5pt plus 2pt minus 2pt}
\begin{align}
R(T) &\;\le\; \hat R(T)
\;+\; \sqrt{\frac{2\Bigl(\log P(K,2^d-1)+\log \tfrac{2}{\delta}\Bigr)}{n}}\\
&\;\le\; \hat R(T) \;+\;
\sqrt{\frac{2\Bigl((2^d-1)\log K+\log \tfrac{2}{\delta}\Bigr)}{n}},
\label{eq:genbound}
\end{align}
\endgroup\nobreak
where $P(K,2^d-1)=K!/(K-(2^d-1))!$.
\end{theorem}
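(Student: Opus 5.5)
The plan is a finite-hypothesis-class (Occam/PAC) argument combined with a union bound. The key fact is that, once the rule set $\mathcal{R}$ is fixed by the Rule Generation stage, the tree produced by LLMT is determined by which rules occupy which internal nodes. Leaf labels are either fixed by the data or by the LLM's refinement, and we account for them by conditioning, as discussed below. So I would first define a hypothesis class $\mathcal{H}_d$ of trees over $\mathcal{R}$ with depth at most $d$. A tree of depth $d$ under the heap-indexing scheme has at most $2^d-1$ internal slots, indexed $1,\dots,2^d-1$. The selection procedure never reuses a rule along a path. For the counting I would simply upper bound by assignments of \emph{distinct} rules to the $2^d-1$ ordered slots. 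Shallower or pruned trees would be padded with a dummy no-op rule, or counted as sub-assignments, giving the bound $|\mathcal{H}_d|\le P(K,2^d-1)=K!/(K-2^d+1)!$.

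Second, I would fix a single hypothesis $h\in\mathcal{H}_d$. The training samples are i.i.d., and $\mathcal{R}$ is generated from feature semantics only, independent of $\mathcal{D}_{\text{train}}$. The $0$--$1$ losses are bounded in $[0,1]$, so Hoeffding's inequality gives
$\Pr\bigl(|R(h)-\hat R(h)|>\epsilon\bigr)\le 2e^{-2n\epsilon^2}$.
Taking a union bound over $\mathcal{H}_d$ and setting the right-hand side to $\delta$ gives, uniformly over $\mathcal{H}_d$,
$R(h)\le \hat R(h)+\sqrt{(\log|\mathcal{H}_d|+\log(2/\delta))/(2n)}$.
This is in fact tighter than the stated bound by a factor $2$ inside the root. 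Using $1/(2n)\le 2/n$ then yields the first inequality of the theorem. Since the learned $T$ lies in $\mathcal{H}_d$, the bound applies to it regardless of the data-dependent selection. The second inequality is then routine: $P(K,2^d-1)=\prod_{i=0}^{2^d-2}(K-i)\le K^{2^d-1}$, so $\log P(K,2^d-1)\le (2^d-1)\log K$.

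The main obstacle is the leaf labels. They are chosen by majority vote on the training data and possibly overwritten by Leaf Refinement, so a given rule assignment does not by itself pin down a unique classifier. My first attempt would be to treat the refined leaf labeling as a deterministic function of the rule-path structure. The LLM sees only the rule chain, never the samples, so its labels are data-independent once the structure is fixed. For the non-refined leaves, I would absorb the majority-vote labeling by the same convention, namely that $T$ is identified with its internal rule assignment. If that convention is judged too loose, the fallback is to enlarge the class by a factor $K_{\text{cls}}^{2^d}$ for leaf labelings, where $K_{\text{cls}}$ denotes the number of classes. That adds a lower-order term $2^d\log K_{\text{cls}}$, which the slack factor of $4$ gained in the Hoeffding step could partially absorb. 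I would state explicitly which convention is adopted.
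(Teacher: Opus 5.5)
Your skeleton matches the paper's: a finite class indexed by rule assignments to the $2^d-1$ internal slots, Hoeffding for a fixed tree, a union bound, and $\log P(K,2^d-1)\le(2^d-1)\log K$. Your constant handling is cleaner than the paper's. Solving $2|\mathcal{H}|e^{-2n\varepsilon^2}=\delta$ really gives $\sqrt{(\log|\mathcal{H}|+\log(2/\delta))/(2n)}$, which the paper's solving step overstates; this is harmless, since it only loosens the bound. Note the discrepancy is a factor $4$ inside the root, not $2$ as you wrote. Your remark that $\mathcal{R}$ is built without seeing the sample is also a needed hypothesis that the paper leaves implicit.

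The gap is in the counting. You correctly observe that LLMT only excludes rules already used \emph{on the current path}. You then claim that counting assignments of \emph{globally distinct} rules is an upper bound, but it is the other way around. Path-only exclusion lets the same rule sit in two different subtrees, for example the same BMI split under both children of a Glucose root. So the reachable trees include all globally distinct assignments and more. For a fixed complete shape, the path-distinct class has size $\prod_{l=0}^{d-1}(K-l)^{2^l}$. This strictly exceeds $P(K,2^d-1)$ for every $d\ge 2$: for $K=3$, $d=2$ it is $3\cdot 2\cdot 2=12$ versus $3\cdot 2\cdot 1=6$. Hence $|\mathcal{H}_d|\le P(K,2^d-1)$ is false for the class you defined, and the first inequality is not established.

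The paper obtains that inequality only by \emph{assuming} two things: every internal node draws from the global pool without replacement (``rules are not reused globally''), and the shape is the fixed complete binary tree. You must adopt that assumption explicitly. Without it, your argument still proves the second inequality, since each factor of $\prod_{l}(K-l)^{2^l}$ is at most $K$.

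Your padding devices do not rescue the count. A no-op symbol can occupy many slots, and partial assignments are additional objects; the $K$ root-only trees, for instance, are not among the complete distinct assignments. Both devices enlarge the count rather than fitting inside $P(K,2^d-1)$.

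The leaf-label step has the same character. ``Identifying $T$ with its internal rule assignment'' for majority-vote leaves is not an argument. Whenever refinement keeps the existing label ($c^*\le\tau$), that label is a function of the training sample. Two runs with the same rule assignment can then output different classifiers, and a union bound over assignments does not cover them. The paper simply asserts that leaf labels depend only on the path and $\tau$, which is your first convention; if you use it, present it as an assumption, not a derivation. Your fallback of multiplying by $K_{\mathrm{cls}}^{2^d}$ is the honest repair. However, the factor-$4$ slack absorbs it only when $2^d\log K_{\mathrm{cls}}\le 3\bigl(\log P(K,2^d-1)+\log(2/\delta)\bigr)$, so in general it changes the stated bound.
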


The proof is available at Appendix~\ref{app:proof}. \revised{While this PAC-style bound can be numerically loose in the extreme few-shot regime ($n$ is small), it provides crucial structural guidance. The bound explicitly shows that tree depth $d$ is the dominant complexity driver (via the $2^d$ dependence), justifying our shallow tree design. Moreover, by distilling the LLM into a finite rule set $K$, LLMT induces a finite, statistically analyzable hypothesis space, distinguishing it from mathematically intractable black-box prompting pipelines.} As shown in our experiments (Appendix G.3), LLMT consistently outperforms baseline methods across varying tree depths.

\begin{camerarevision}
\subsection{Ensemble Extension}
For high-dimensional datasets, we optionally extend LLMT into an ensemble termed LLMT Forest, since ensembles can improve model capacity while reducing the overfitting risk of a single deep tree~\cite{breiman2001random,chen2016xgboost}. To construct LLMT Forest, we randomly permute and partition the features into disjoint, approximately equal-sized subsets, train one LLMT tree on each subset using the same few-shot samples, and aggregate their predictions by majority vote. 
\end{camerarevision}

\section{Evaluation}
\label{sec:exp}
We report representative results in the main paper. In the Appendix, we also include results on more baselines, more datasets and different tree depths.

\subsection{Experimental Setup}

\paragraph{Datasets}  
We evaluate our method on \camerarevise{11} widely used public tabular datasets including \textbf{Nursery}~\cite{nursery_76},  \textbf{Diabetes}~\cite{diabetes_34},
\textbf{Spambase}~\cite{spambase_94},
\textbf{Abalone}~\cite{abalone_94},
\textbf{Blood}~\cite{blood_08},
\textbf{Iris}~\cite{iris_36},
\revise{
\textbf{Breast}~\cite{breast_resistin_2018},
\textbf{Glioma}~\cite{glioma_voting_2022}, }
\camerarevise{\textbf{Communities}~\citep{redmond2002communities}, \textbf{Ecom}~\citep{tabred2024}, and \textbf{Myocardial}~\citep{golovenkin2020myocardial}. }
Among them, we select Diabetes, Spambase, and Nursery as the major datasets for evaluation across various settings. These datasets span diverse domains and vary widely in sample size and feature dimensionality, providing a comprehensive testbed.

\paragraph{Baselines}
\camerarevise{We compare LLMT against the following baselines, organized by methodological category:
(1) \emph{zero-shot decision-tree methods}: \textbf{DirectZSDT}~\cite{knauer2024oh}, \textbf{StepZSDT}~\cite{carrasco2025zero};
(2) \emph{conventional machine-learning methods}: \textbf{LogReg} (logistic regression; \citealp{cox1958regression}), \textbf{SVM} (support vector machine; \citealp{cortes1995support}), and \textbf{CART}~\cite{loh2011classification};
(3) \emph{tree-ensemble methods}: \textbf{Random Forest}~\cite{breiman2001random}, \textbf{XGBoost}~\cite{chen2016xgboost}, \textbf{CatBoost}~\cite{prokhorenkova2018catboost}, and \textbf{LightGBM}~\cite{ke2017lightgbm};
(4) \emph{LLM prompting methods}: \textbf{IO-Tree}, \textbf{CoT-Tree}~\cite{wei2022chain}, and \textbf{ToT-Tree}~\cite{yao2023tree};
(5) \emph{pretrained black-box predictors}: \textbf{TabPFN}~\citep{hollmann2025tabpfn} and \textbf{TabLLM}~\citep{hegselmann2023tabllm}; and
(6) \emph{recent LLM-assisted tabular learning methods}: \textbf{FeatLLM}~\cite{Hanetal2024}, \textbf{DeLTa}~\cite{Yeetal2025}, and \textbf{GPTree}~\cite{Xiongetal2024}.
Due to space constraints, additional comparisons involving Random Forest, CatBoost, TabPFN, and TabLLM are reported in Appendix~\ref{app:additional_results}.}

\paragraph{Settings} Experiments are run on a Linux server with 4$\times$ Intel Xeon Gold 5117 CPUs and 4$\times$ Nvidia Tesla V100 GPUs. We use Qwen2.5-72B-Instruct via TogetherAI~\cite{togetherai} by default, and provide all prompts in the Appendix. We set the maximum tree depth to 3 or 4 depending on the dataset, and \revise{set the number of rules as $K=\max(10,2^d-1)$, where $d$ is the maximum depth.} The leaf-refinement threshold is $\tau=0.7$ for all datasets. We repeat each experiment for 10 trials with randomly sampled training sets and a fixed test set of 100 samples, and report mean $\pm$ standard deviation. Additional details are in Appendix~\ref{app:exp_settings}.

\begin{table*}[!t]
\centering
\fontsize{7.4}{8.2}\selectfont
\setlength{\tabcolsep}{1.75pt}
\renewcommand{\arraystretch}{1.2}
\begin{tabular*}{\textwidth}{@{}l@{\extracolsep{\fill}}*{14}{c}@{}}
\toprule
\textbf{Dataset} 
& \textbf{DirectZSDT}
& \revised{\textbf{StepZSDT}}
& \revised{\textbf{LogReg}}
& \revised{\textbf{SVM}}
& \textbf{CART}
& \textbf{XGBoost}
& \revised{\textbf{LightGBM}}
& \textbf{IO-Tree}
& \textbf{CoT-Tree}
& \textbf{ToT-Tree}
& \revised{\textbf{FeatLLM}}
& \revised{\textbf{DeLTa}}
& \revised{\textbf{GPTree}}
& \textbf{LLMT} \\
\midrule

Nursery   
& 0.485
& \revised{0.450}
& \revised{0.558}
& \revised{0.564}
& 0.602
& 0.434
& \revised{0.578}
& 0.439
& 0.461
& 0.353
& \revised{0.571}
& \revised{0.448}
& \revised{0.479}
& \tb{0.762} \\

Diabetes  
& 0.729
& \revised{0.641}
& \revised{0.618}
& \revised{0.629}
& 0.546
& 0.690
& \revised{0.543}
& 0.539
& 0.701
& 0.561
& \revised{0.700}
& \revised{0.541}
& \revised{0.616}
& \tb{0.764} \\

Spambase   
& 0.645
& \revised{0.732}
& \revised{0.790}
& \revised{0.783}
& 0.723
& 0.600
& \revised{0.733}
& 0.723
& 0.739
& 0.727
& \revised{0.802}
& \revised{0.747}
& \revised{0.661}
& \tb{0.811} \\

Abalone 
& 0.550
& \revised{0.526}
& \revised{0.684}
& \revised{0.672}
& 0.689
& 0.670
& \revised{0.658}
& 0.647
& 0.694
& 0.570
& \revised{\tb{0.721}}
& \revised{0.656}
& \revised{0.667}
& \underline{0.694} \\

Blood 
& 0.597
& \revised{0.371}
& \revised{0.604}
& \revised{0.635}
& 0.565
& 0.605
& \revised{0.508}
& 0.585
& 0.500
& 0.551
& \revised{0.642}
& \revised{0.550}
& \revised{0.572}
& \tb{0.675} \\

Iris 
& \tb{0.946}
& \revised{0.547}
& \revised{0.793}
& \revised{0.756}
& 0.882
& 0.739
& \revised{0.685}
& 0.870
& 0.903
& 0.867
& \revised{0.903}
& \revised{0.823}
& \revised{0.814}
& \underline{0.940} \\

\revise{Breast} 
& \revise{0.554}
& \revised{0.500}
& \revised{0.597}
& \revised{0.607}
& \revise{0.500}
& \revise{0.400}
& \revised{0.513}
& \revise{0.543}
& \revise{0.473}
& \revise{0.593}
& \revised{0.544}
& \revised{0.529}
& \revised{0.523}
& \tb{0.617} \\

\revise{Glioma} 
& \revise{0.504}
& \revised{0.743}
& \revised{\tb{0.760}}
& \revised{0.755}
& \revise{0.592}
& \revise{0.560}
& \revised{0.680}
& \revise{0.637}
& \revise{0.598}
& \revise{0.652}
& \revised{0.724}
& \revised{0.643}
& \revised{0.678}
& \tb{0.760} \\

\midrule
\textbf{Average}
& 0.626
& \revised{0.564}
& \revised{0.676}
& \revised{0.675}
& 0.637
& 0.587
& \revised{0.612}
& 0.623
& 0.634
& 0.609
& \revised{0.701}
& \revised{0.617}
& \revised{0.626}
& \tb{0.753} \\

\bottomrule

\end{tabular*}
\caption{Accuracy comparison of different methods when the number of training examples per class is two. \camerarevise{Best performances are bolded, and our method's performances, when second-best, are underlined.}}
\label{tab:dataset_method_results}
\end{table*}

\begin{table*}[!t]
\centering
\small
\setlength{\tabcolsep}{3.5pt}
\renewcommand{\arraystretch}{1.1}
\resizebox{\textwidth}{!}{%
\begin{tabular}{l ccccc ccccc ccccc}
\toprule
\textbf{Datasets} 
& \multicolumn{5}{c}{\textbf{Nursery}} 
& \multicolumn{5}{c}{\textbf{Diabetes}} 
& \multicolumn{5}{c}{\textbf{Spambase}} \\
\cmidrule(lr){2-6} \cmidrule(lr){7-11} \cmidrule(lr){12-16}

\textbf{\#Shots} 
& 3 & 6 & 12 & 24 & 48 
& 2 & 4 & 8 & 16 & 32 
& 2 & 4 & 8 & 16 & 32 \\
\midrule

DirectZSDT 
& 0.485 & 0.485 & 0.485 & 0.485 & 0.485
& \textbf{0.729} & 0.729 & 0.729 & 0.729 & 0.729
& 0.645 & 0.645 & 0.645 & 0.645 & 0.645 \\

StepZSDT 
& 0.450 & 0.450 & 0.450 & 0.450 & 0.450
& 0.641 & 0.641 & 0.641 & 0.641 & 0.641
& 0.732 & 0.732 & 0.732 & 0.732 & 0.732 \\

\revised{LogReg}
& \revised{0.505} & \revised{0.558} & \revised{0.670} & \revised{0.778} & \revised{\textbf{0.855}}
& \revised{0.542} & \revised{0.618} & \revised{0.674} & \revised{0.730} & \revised{0.748}
& \revised{0.685} & \revised{0.790} & \revised{0.785} & \revised{0.782} & \revised{0.835}\\

\revised{SVM}
& \revised{0.500} & \revised{0.564} & \revised{0.661} & \revised{0.781} & \revised{0.833}
& \revised{0.542} & \revised{0.629} & \revised{0.660} & \revised{0.671} & \revised{0.710}
& \revised{0.685} & \revised{0.783} & \revised{0.754} & \revised{0.771} & \revised{0.807}\\

CART 
& 0.351 & 0.602 & \textbf{0.786} & 0.803 & 0.801
& 0.519 & 0.546 & 0.624 & 0.663 & 0.686
& 0.639 & 0.723 & 0.743 & 0.705 & 0.757 \\

XGBoost
& 0.460 & 0.434 & 0.744 & 0.784 & 0.808
& 0.690 & 0.690 & 0.626 & 0.708 & 0.708
& 0.600 & 0.600 & 0.706 & 0.706 & 0.762 \\

\revised{LightGBM}
& \revised{0.378} & \revised{0.578} & \revised{0.780} & \revised{0.792} & \revised{0.803}
& \revised{0.600} & \revised{0.543} & \revised{0.631} & \revised{0.676} & \revised{0.680}
& \revised{0.652} & \revised{0.733} & \revised{0.759} & \revised{0.717} & \revised{0.769}\\

IO-Tree
& 0.378 & 0.439 & 0.519 & 0.511 & 0.516
& 0.525 & 0.539 & 0.605 & 0.649 & 0.573
& 0.661 & 0.723 & 0.671 & 0.743 & 0.711 \\

CoT-Tree
& 0.407 & 0.461 & 0.617 & 0.559 & 0.591
& 0.665 & 0.701 & 0.700 & 0.696 & 0.733
& 0.699 & 0.739 & 0.705 & 0.740 & 0.731 \\

ToT-Tree
& 0.379 & 0.353 & 0.357 & 0.370 & 0.411
& 0.645 & 0.561 & 0.583 & 0.563 & 0.551
& 0.673 & 0.727 & 0.722 & 0.695 & 0.720 \\

\revised{FeatLLM}
& \revised{0.539} & \revised{0.571} & \revised{0.717} & \revised{0.761} & \revised{0.781}
& \revised{0.658} & \revised{0.700} & \revised{0.717} & \revised{0.725} & \revised{0.730}
& \revised{0.769} & \revised{0.802} & \revised{0.814} & \revised{0.813} & \revised{0.837}\\

\revised{DeLTa}
& \revised{0.357} & \revised{0.448} & \revised{0.577} & \revised{0.673} & \revised{0.729}
& \revised{0.310} & \revised{0.541} & \revised{0.624} & \revised{0.637} & \revised{0.630}
& \revised{0.394} & \revised{0.747} & \revised{0.747} & \revised{0.811} & \revised{0.823}\\

\revised{GPTree}
& \revised{0.326} & \revised{0.479} & \revised{0.780} & \revised{0.741} & \revised{0.720}
& \revised{0.515} & \revised{0.616} & \revised{0.646} & \revised{0.643} & \revised{0.594}
& \revised{0.641} & \revised{0.661} & \revised{0.677} & \revised{0.746} & \revised{0.644}\\

\addlinespace[0.3em]

\textbf{LLMT (Ours)}
& \textbf{0.762} & \textbf{0.762} & \underline{0.784} & \textbf{0.818} & \underline{0.840}
& 0.662 & \textbf{0.764} & \textbf{0.770} & \textbf{0.770} & \textbf{0.770}
& \textbf{0.782} & \textbf{0.811} & \textbf{0.836} & \textbf{0.839} & \textbf{0.843}\\

\bottomrule
\end{tabular}%
}
\caption{Mean accuracy over ten runs across varying \#shots (standard deviation is available at Appendix~\ref{app:additional_results}). DirectZSDT and StepZSDT are constant across \#shots since they cannot utilize any training examples.}
\label{tab:accuracy}
\end{table*}

\subsection{Effectiveness}

We evaluate LLMT against all baselines in terms of classification accuracy. 
Table~\ref{tab:dataset_method_results} reports the results with \#shots=2 per class, while Table~\ref{tab:accuracy} reports results under varying numbers of training examples on three representative datasets.

LLMT consistently outperforms almost all baselines, with more than 10\% improvement over the strongest baseline in many settings. 
It also achieves robust gains over recent LLM-assisted methods such as FeatLLM, DeLTa, and GPTree: FeatLLM and DeLTa tend to overfit in extreme few-shot regimes, while GPTree can be affected by data-induced bias due to its reliance on large-scale training data. 
These results show the benefit of leveraging LLM knowledge in a structured, data-aware manner. 
In addition, ToT-Tree often performs worse than CoT-Tree, suggesting that recursive node-level prompting introduces noise and inconsistency for structured tabular data. 
By contrast, LLMT uses LLM-generated rules as decision units, leading to more effective tree construction.

\camerarevise{We further assess high-dimensional performance on three datasets with over 100 attributes. These results show that LLMT maintains its advantage in high-dimensional feature spaces and further support the effectiveness of its ensemble extension. Please refer to Appendix~\ref{app:high_dimensional} for detailed results.}

\subsection{Efficiency}

\begin{table*}[!t]
\centering
\resizebox{\textwidth}{!}{%
\begin{tabular}{@{}lllllll@{}}
\toprule
\multirow{2}{*}{} 
& \multicolumn{3}{l}{Time (s)}                
& \multicolumn{3}{l}{Token (\#Input token, \#Output token, \#Total)} \\ 
\cmidrule(l){2-4} \cmidrule(l){5-7} 
& Diabetes & Nursery & Spambase  
& Diabetes & Nursery & Spambase \\ 
\midrule

\revise{DirectZSDT}   
& \revise{43.64}
& \revise{31.75}
& \revise{55.33}
& \revise{(2.9k, 1.1k, 4k)}
& \revise{(3.8k, 1.3k, 4.1k)}
& \revise{(3.4k, 1.4k, 4.8k)} \\ 
\midrule

\revise{StepZSDT} 
& \revise{1006.67}
& \revise{1191.40}
& \revise{4036.18}
& \revise{(85.6k, 22.1k, 107.7k)}
& \revise{(128.3k, 27.2k, 155.5k)}
& \revise{(358.9k, 95.4k, 454.3k)} \\ 
\midrule

CoT-Tree         
& 17.00
& 9.58
& 12.05
& (2.7k, 1.3k, 4k)
& (2.7k, 1.0k, 3.7k)
& (5.6k, 0.9k, 6.5k) \\ 
\midrule

ToT-Tree         
& 33.39
& 33.22
& 64.56
& (32.9k, 0.5k, 33.4k)
& (35.4k, 0.6k, 36k)
& (140k, 1.3k, 141.3k) \\ 
\midrule

\revised{FeatLLM}
& \revised{259.07}
& \revised{416.16}
& \revised{324.06}
& \revised{(22.2k, 10.3k, 32.5k)}
& \revised{(24.4k, 16.3k, 40.7k)}
& \revised{(48.5k, 12.3k, 60.8k)} \\
\midrule

\revised{DeLTa} 
& \revised{30.20} 
& \revised{26.90} 
& \revised{28.84} 
& \revised{(3.1k, 3.3k, 6.4k)} 
& \revised{(3.1k, 2.8k, 5.9k)} 
& \revised{(5.2k, 3.1k, 8.3k)} \\
\midrule

\revised{GPTree}
& \revised{140.75}
& \revised{103.47}
& \revised{125.56}
& \revised{(12.5k, 2.3k, 14.8k)}
& \revised{(9.5k, 3.2k, 12.7k)}
& \revised{(27.6k, 2.5k, 30.0k)} \\
\midrule

LLMT              
& 20.48
& 16.33
& 18.18
& (3.1k, 0.2k, 3.3k)
& (3.4k, 0.2k, 3.6k)
& (6.0k, 0.4k, 6.4k) \\ 
\midrule

\tb{Savings}                  
& \cellcolor{gray!20}\tb{1.63x}
& \cellcolor{gray!20}\tb{2.03x}
& \cellcolor{gray!20}\tb{3.56x}
& \cellcolor{gray!20}\tb{10.1x}
& \cellcolor{gray!20}\tb{10x}
& \cellcolor{gray!20}\tb{22.1x} \\ 

\bottomrule
\end{tabular}%
}
\caption{The training time (s) and \#token in the \#shots=32 setting. The savings of LLMT are computed against ToT-Tree in terms of \#total tokens.}
\label{tab:efficiency}
\end{table*}
We evaluate computational efficiency by measuring \revised{training time} and token usage per prediction on three tabular datasets, as summarized in Table~\ref{tab:efficiency}. 
We omit CART, XGBoost, and IO-Tree because they are already highly efficient but have relatively poor classification performance. 
DirectZSDT incurs high latency due to its two-stage dialogue, where the LLM first generates a tree and then converts it into executable Python code. 
StepZSDT is even more expensive, as it prompts the LLM to generate feature-specific rules for each node.

Compared with CoT-Tree, LLMT achieves comparable \revised{training time} and token usage. \camerarevise{Compared with ToT-Tree, LLMT avoids repeated planning and voting, reducing token usage by 10--22$\times$ and construction time by 1.6--3.6$\times$.}
LLMT also has comparable computational cost to DeLTa, while being up to 10$\times$ faster than FeatLLM and more token-efficient than GPTree, which rely on repeated rule-parsing queries and sequential batch-wise insight generation, respectively.

\begin{figure*}[!t]
    \centering
    \includegraphics[width=0.85\textwidth]{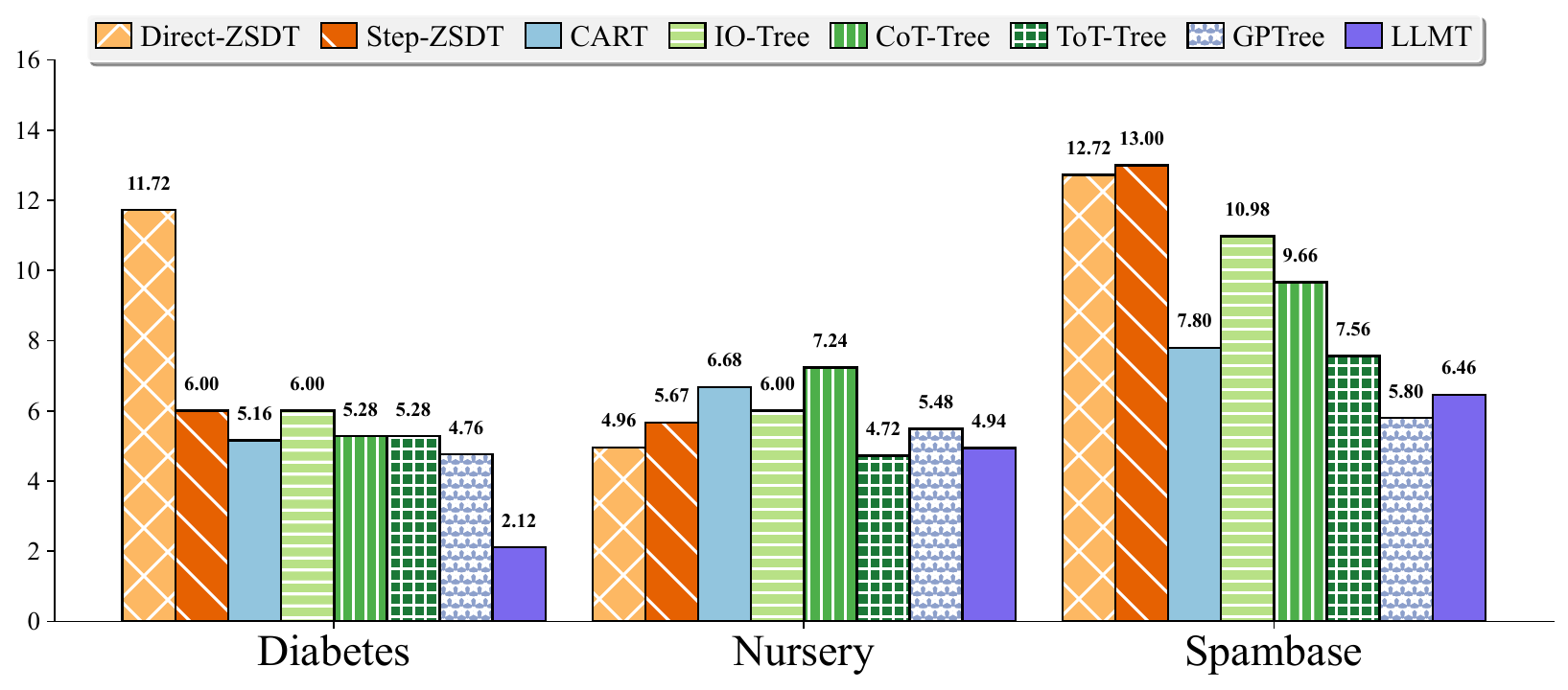}
    % \vspace{-10pt}
    \caption{\camerarevise{Average number of nodes in trees generated by each method. Fewer nodes indicate a more concise tree.}}
    \label{fig:interpretability}
\end{figure*}

\begin{table*}[!t]
    \centering
    \renewcommand{\arraystretch}{1.1}
    \resizebox{\textwidth}{!}{%
    \begin{tabular}{@{}lcccccccc@{}}
    \toprule
    \tb{Dataset} & \tb{Direct-ZSDT} & \tb{Step-ZSDT} & \tb{CART} & \tb{IO-Tree} & \tb{CoT-Tree} & \tb{ToT-Tree} & \tb{GPTree} & \tb{LLMT} \\
    \midrule
    Diabetes & 0.422 & 0.420 & 0.414 & 0.397 & 0.397 & 0.403 & 0.425 & \textbf{0.396} \\
    Nursery  & 0.665 & 0.634 & 0.516 & 0.582 & 0.534 & 0.629 & 0.552 & \textbf{0.456} \\
    Spambase & 0.378 & 0.478 & 0.360 & \textbf{0.341} & 0.361 & 0.351 & 0.396 & 0.352 \\
    \midrule
    \tb{Average} & 0.488 & 0.511 & 0.430 & 0.440 & 0.431 & 0.461 & 0.458 & \textbf{0.401} \\
    \bottomrule
    \end{tabular}%
    }
    \caption{Average Gini impurity of rules across different methods. Lower values indicate better rule quality.}
    \label{tab:gini_impurity_main}
\end{table*}

\subsection{Interpretability}

\camerarevise{Decision trees are intrinsically interpretable because each prediction follows an explicit root-to-leaf path, making the decision process directly inspectable. Compared with coefficient-based, hybrid, or black-box models, standalone trees generally provide more intuitive explanations for individual predictions. We therefore compare methods that output standalone trees in terms of structural conciseness and split purity, measured by average node count and average Gini impurity, as reported in Figure~\ref{fig:interpretability} and Table~\ref{tab:gini_impurity_main}, respectively.}

\camerarevise{LLMT produces the most concise trees on Diabetes and remains among the most compact methods on Nursery and Spambase. It also achieves the lowest average Gini impurity, performing best on Diabetes and Nursery and close to the best method on Spambase. Overall, LLMT retains the intrinsic readability of a standalone decision tree while generating compact structures with high-quality rules.}

\subsection{Ablation and Sensitivity Study}
\label{sec:ablation}
\begin{figure*}[!]
    \centering
    \begin{subfigure}[b]{0.47\textwidth}
        \centering
        \includegraphics[width=\textwidth]{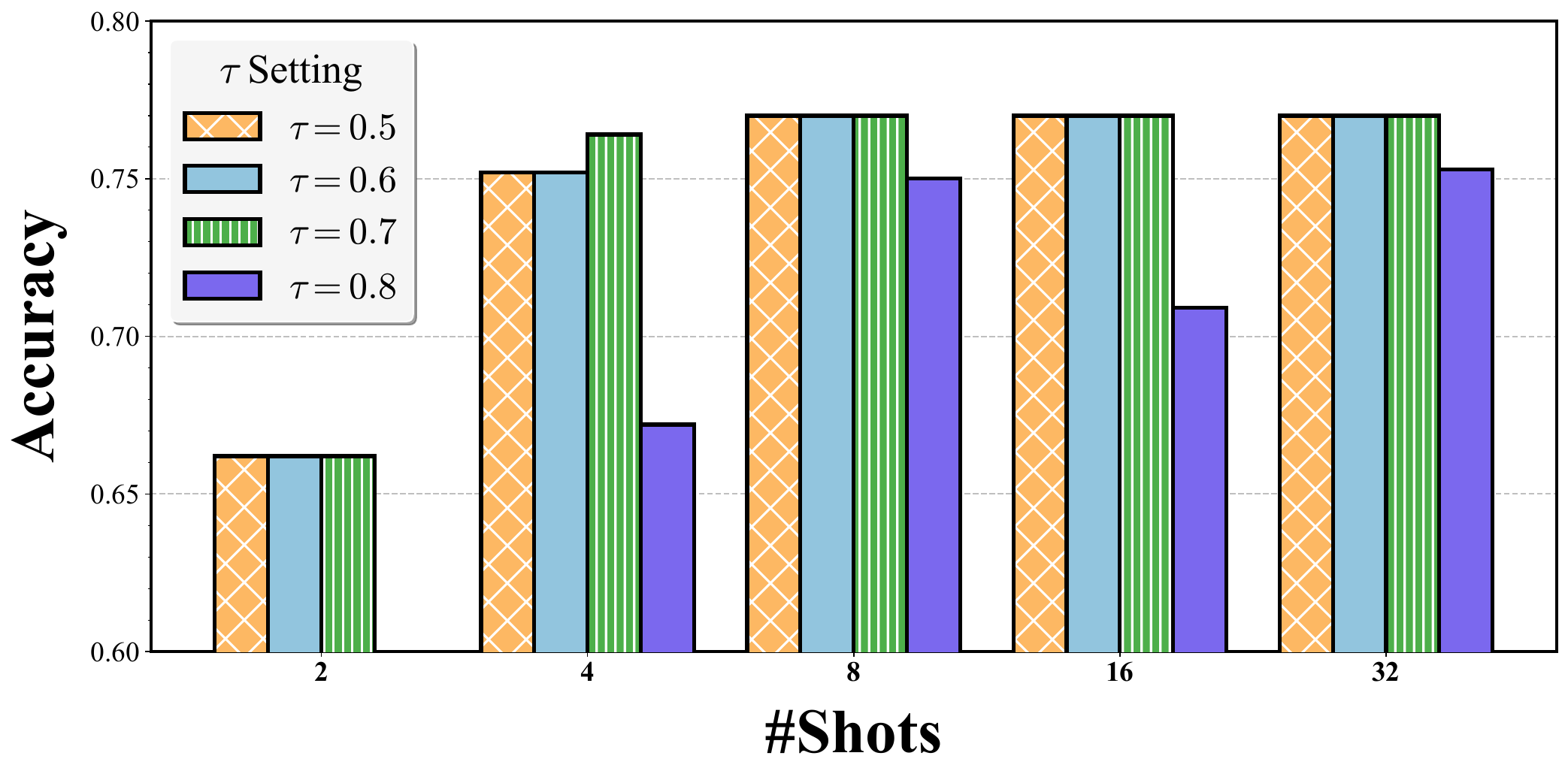} % 固定相同高度
        \caption{Diabetes}
        \label{fig:dia_sen}
    \end{subfigure}
    % \hfill
    \begin{subfigure}[b]{0.47\textwidth}
        \centering
        \includegraphics[width=\textwidth]{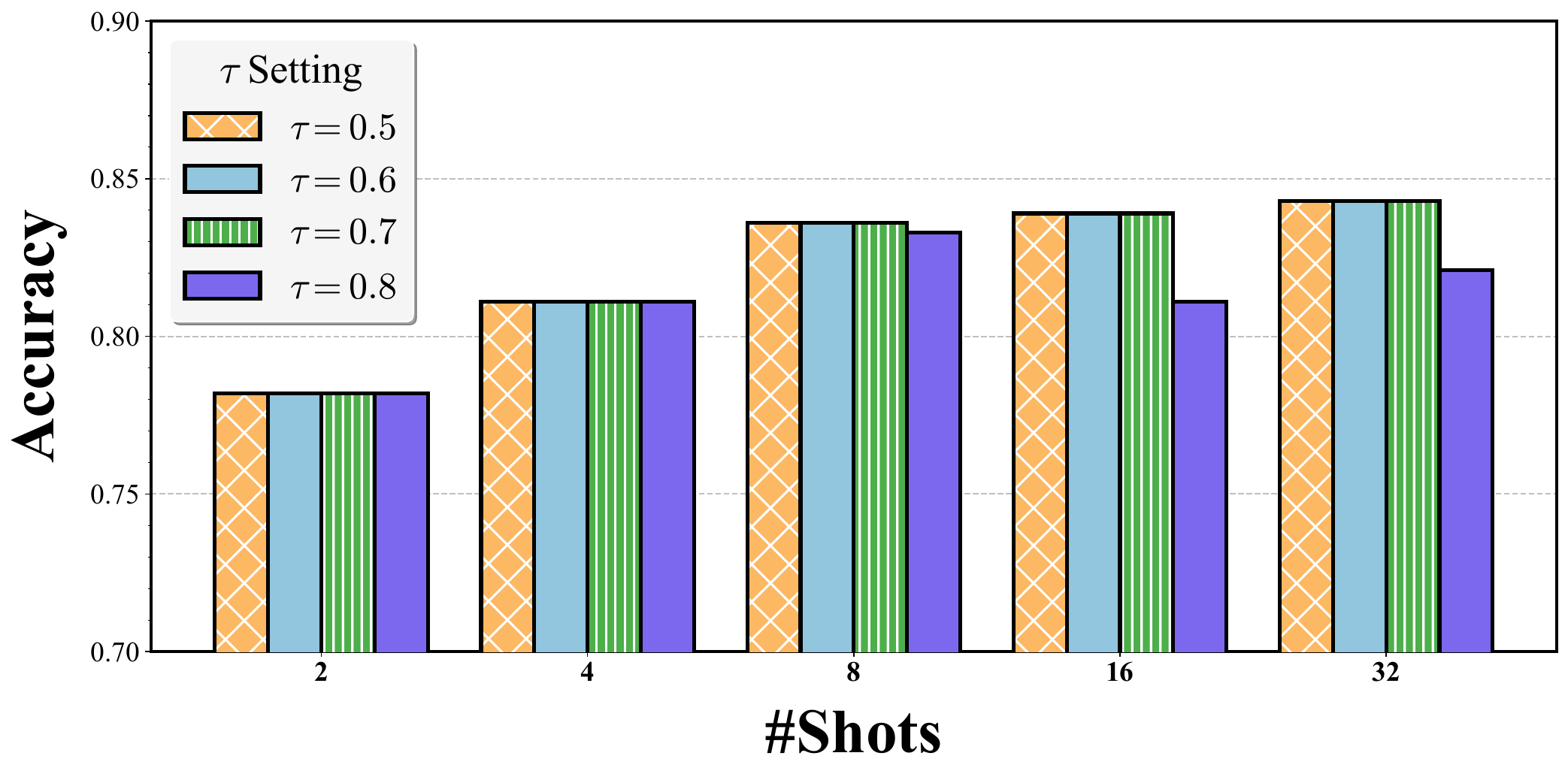} % 固定相同高度
        \caption{Spambase}
        \label{fig:spam_sen}
    \end{subfigure}
    % \vspace{-10pt}
    \caption{The accuracy of LLMT with different $\tau$}
    \label{fig:sensitivity}
\end{figure*}

\begin{figure*}[!]
    \centering
    % \begin{subfigure}[b]{0.32\textwidth} 
    %     \centering
    %     \includegraphics[width=\textwidth]{figures/nursery_refinement.pdf} % 固定高
    %     \caption{Nursery.}
    %     \label{fig:reasoning}
    % \end{subfigure}
    % \hfill
    \begin{subfigure}[b]{0.49\textwidth}
        \centering
        \includegraphics[width=\textwidth]{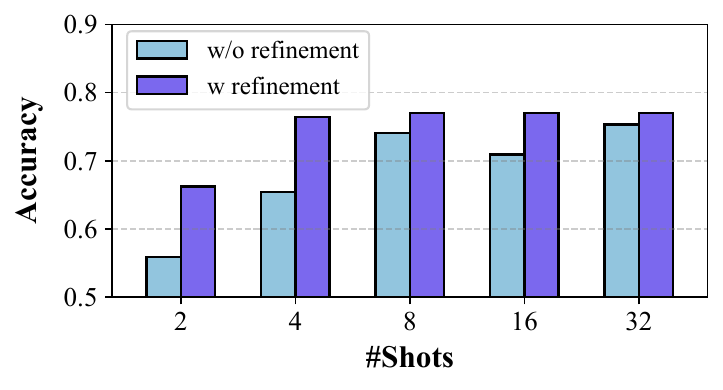} % 固定相同高度
        \caption{Diabetes}
        \label{fig:dia_ref}
    \end{subfigure}
    % \hfill
    \begin{subfigure}[b]{0.49\textwidth}
        \centering
        \includegraphics[width=\textwidth]{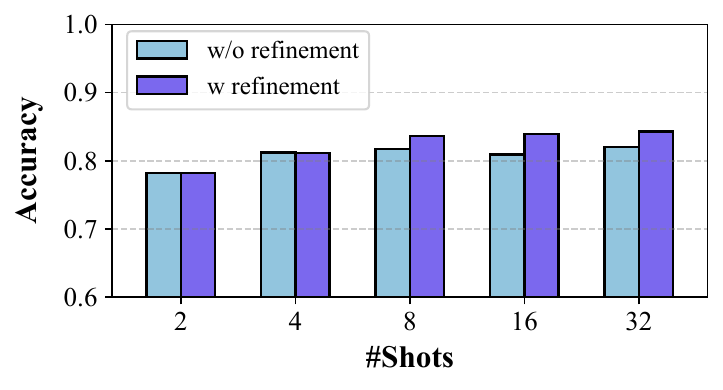} % 固定相同高度
        \caption{Spambase}
        \label{fig:spam_ref}
    \end{subfigure}
    % \vspace{-10pt}
    \caption{The accuracy of LLMT with/without leaf refinement}
    \label{fig:ablation}
\end{figure*}
We further analyze the effect of the \textit{Leaf Refinement} stage and its confidence threshold $\tau$. 
The threshold $\tau$ controls whether the LLM accepts a leaf node's decision based on its confidence in the corresponding reasoning path. 
As shown in Figure~\ref{fig:sensitivity}, LLMT is generally stable across different $\tau$ values, and $\tau=0.7$ usually achieves the best performance, suggesting a good balance between preserving the initial tree structure and applying LLM-based corrections. 
It indicates that LLMT does not require extensive hyperparameter tuning.

We then remove \textit{Leaf Refinement} to evaluate its contribution on Diabetes and Spambase. Figure~\ref{fig:ablation} shows consistent accuracy gains of 8\% and 2\%, respectively. Beyond accuracy, LLM verification may identify same-label sibling leaves for merging, simplifying the tree and mitigating spurious splits caused by limited few-shot data.

\section{Conclusion}

In this paper, we present a novel framework for distilling the knowledge of LLMs into interpretable and efficient decision trees for tabular data classification. Unlike prior approaches that prompt LLMs to directly generate full decision structures which often leads to instability and high inference costs, our method utilizes the ability of LLMs to generate discrete rules and organizes them as a tree statistically.
By bridging the strengths of LLMs and symbolic models, our method offers a promising direction for building trustworthy AI systems that can reason over structured data.

\section*{Limitations}

Despite its strengths, our method assumes that the LLM possesses sufficient background knowledge relevant to the target dataset. If the dataset lies outside the knowledge scope of the LLM, then the generated rules—and consequently the constructed tree—may be inaccurate or ineffective. Moreover, our method inherits any biases in the underlying LLM, which may influence the rules extracted and affect downstream decisions. Caution must be exercised when deploying the distilled trees in high-stakes applications, and future work should explore fairness-aware prompting and post-hoc auditing. 

\section*{Acknowledgements}
This research is supported in part by National Natural Science Foundation of China (Grant No. 62502174). 
This research is also supported in part by the National Research Foundation, Singapore and Infocomm Media Development Authority under its Trust Tech Funding Initiative. Any opinions, findings and conclusions or recommendations expressed in this material are those of the author(s) and do not reflect the views of National Research Foundation, Singapore and Infocomm Media Development Authority.

\bibliography{main}

\clearpage
\newpage
\appendix
% Shared widths keep all appendix tables aligned with the two reference tables:
% Table~\ref{tab:newer_masking} for full-width floats and
% Table~\ref{tab:datasets} for single-column floats.
\newlength{\appendixfulltablewidth}
\setlength{\appendixfulltablewidth}{\textwidth}
\newlength{\appendixsinglecoltablewidth}
\setlength{\appendixsinglecoltablewidth}{\columnwidth}

\section{Proof}
\renewcommand\thesection{4}
\label{app:proof}

\begin{proposition}[Expected Impurity Reduction]
Let \(r^*\in\mathcal{R}\) be the selected rule at node \(t\), splitting dataset \(D\) into \(D_l,D_r\). Then, the expected impurity after splitting satisfies:
\[
\E[\mathrm{Gini}(D_l,D_r)] \leq \mathrm{Gini}(D),
\]
with equality if and only if the split yields no information gain.
\end{proposition}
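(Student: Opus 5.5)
The key fact is that Gini impurity is a concave function of the class-proportion vector. Once that is established, the claim follows from Jensen's inequality applied to the mixture decomposition of the parent's class distribution.

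\textbf{Setup.} For a sample set $S$ with class proportions $p(S)=(p_1,\dots,p_K)$, write
\[
I_G(S)=\phi(p(S)), \qquad \phi(p)=1-\sum_{k=1}^K p_k^2 .
\]
The split of $\mathcal{D}_p$ by $r^*$ into $\mathcal{D}_l,\mathcal{D}_r$ has weights $w_l=|\mathcal{D}_l|/|\mathcal{D}_p|$ and $w_r=1-w_l$. The post-split quantity $\mathrm{Gini}(\mathcal{D}_l,\mathcal{D}_r)$ is the weighted impurity
\[
w_l\,\phi(p(\mathcal{D}_l))+w_r\,\phi(p(\mathcal{D}_r)),
\]
which is exactly the objective in Eq.~\eqref{eq:rule_selection}. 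I read the expectation $\E$ as the average over the child a random sample of $\mathcal{D}_p$ falls into. The same argument also covers the case where $\E$ is taken over a data-generating distribution, with the empirical proportions replaced by conditional class probabilities.

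\textbf{Steps.}
\begin{enumerate}
\item Show that $\phi$ is strictly concave on the probability simplex. Its Hessian is $-2I$, so $\phi$ is the sum of the linear term $1$ and the strictly concave term $-\|p\|^2$.
\item Since each sample lies in exactly one child, the parent proportions satisfy
\[
p(\mathcal{D}_p)=w_l\,p(\mathcal{D}_l)+w_r\,p(\mathcal{D}_r).
\]
\item Apply Jensen's inequality to the concave $\phi$:
\[
\phi(p(\mathcal{D}_p)) \ge w_l\,\phi(p(\mathcal{D}_l))+w_r\,\phi(p(\mathcal{D}_r)).
\]
This is the stated inequality. It holds for every rule, so in particular for the minimizer $r^*$; the selection step only makes the gap as large as possible.
\end{enumerate}

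\textbf{Equality case.} It is cleanest to compute the gap exactly rather than quote strict Jensen. Expanding the squares gives
\[
\mathrm{Gini}(\mathcal{D}_p)-\mathrm{Gini}(\mathcal{D}_l,\mathcal{D}_r)
= w_l w_r\,\bigl\|p(\mathcal{D}_l)-p(\mathcal{D}_r)\bigr\|_2^2 .
\]
The right-hand side is zero if and only if either one child is empty ($w_l w_r=0$) or both children have identical class distributions. That is precisely the situation in which the split yields no Gini gain, so I will define "no information gain" this way. This also matches the paper's termination rule in the Implementation Details, which stops when no remaining rule yields a positive gain.

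The only delicate point is the interpretation of the expectation and of "no information gain." I will fix these as above and note that degenerate splits with an empty child fall into the equality case. Beyond that, the computation is routine algebra.
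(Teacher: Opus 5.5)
Your proof is correct, and it takes a different and more rigorous route than the paper's. The paper defines the weighted child impurity and notes that $r^*$ minimizes it. It then asserts that ``since $r^*$ minimizes the above expression'' the inequality $\E[\mathrm{Gini}(D_l,D_r)]\le\mathrm{Gini}(D)$ must hold. Equality is dismissed as the case where ``no effective class separation occurs.'' That step only works if some candidate split is already known to have weighted impurity at most $\mathrm{Gini}(D)$, for example a trivial split. The paper never shows this, and its candidate pool consists only of LLM-generated rules.

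Your argument supplies exactly that missing ingredient. You combine concavity of $1-\|p\|^2$ with the mixture identity $p(D)=w_l\,p(D_l)+w_r\,p(D_r)$, so the inequality holds for \emph{every} binary partition. The minimality of $r^*$ is therefore irrelevant to the inequality itself; it only makes the gap as large as possible.

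Your exact identity $\mathrm{Gini}(D)-\E[\mathrm{Gini}(D_l,D_r)]=w_l w_r\,\|p(D_l)-p(D_r)\|_2^2$ is correct; it follows from expanding the squares and using $w_l+w_r=1$. It turns the paper's vague equality clause into a concrete condition: either one child is empty, or the two children have identical class distributions. This matches the algorithm's positive-gain stopping rule.

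The paper's version gains only brevity. Yours gives a self-contained proof, an explicit formula for the Gini gain, and a precise meaning for ``no information gain.''
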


\begin{proof}
Define the Gini impurity for dataset \(D\) with \(C\) classes as:
\[
\mathrm{Gini}(D)=1-\sum_{i=1}^{C}p_i^2,\quad p_i=\frac{|D^{(i)}|}{|D|}.
\]

The selected rule \(r^*\) minimizes:

\[
\E[\mathrm{Gini}(D_l,D_r)]
= \frac{|D_l|}{|D|}\mathrm{Gini}(D_l)
\!+\!
\frac{|D_r|}{|D|}\mathrm{Gini}(D_r).
\]

Since \(r^*\) minimizes the above expression, it must hold that:
\[
\E[\mathrm{Gini}(D_l,D_r)]\leq\mathrm{Gini}(D).
\]

Equality occurs precisely when no effective class separation occurs, resulting in no impurity reduction.
\end{proof}

\begin{theorem}[Generalization Error Bound]
\label{thm:gen-appendix}
Consider a decision tree \(T\) constructed using LLMT of maximum depth \(d\) from a finite rule set \(\mathcal{R}\) with size \(K\). Given \(n\) training samples, for any \(\delta>0\), with probability at least \(1-\delta\), the generalization error \(R(T)\) satisfies:
\par\nobreak
\begingroup\small
\setlength{\abovedisplayskip}{2pt plus 1pt minus 2pt}
\setlength{\abovedisplayshortskip}{0pt plus 1pt}
\setlength{\belowdisplayskip}{8pt plus 2pt minus 2pt}
\setlength{\belowdisplayshortskip}{5pt plus 2pt minus 2pt}
\begin{align}
R(T) &\;\le\; \hat R(T)
\;+\; \sqrt{\frac{2\Bigl(\log P(K,2^d-1)+\log \tfrac{2}{\delta}\Bigr)}{n}}\\
&\;\le\; \hat R(T) \;+\;
\sqrt{\frac{2\Bigl((2^d-1)\log K+\log \tfrac{2}{\delta}\Bigr)}{n}},
\label{eq:genbound-appendix}
\end{align}
\endgroup\nobreak
where $P(K,2^d-1)=K!/(K-(2^d-1))!$.
\end{theorem}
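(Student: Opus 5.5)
We treat the tree produced by LLMT as an element of a finite hypothesis class $\mathcal{H}$ that is fixed before the training sample is observed. The bound then follows from Hoeffding's inequality combined with a union bound over $\mathcal{H}$, i.e.\ the Occam/PAC argument of \cite{blumer1989learnability}.

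\textbf{Fixing the class.} The rule set $\mathcal{R}$ is generated by the LLM from feature semantics only, with no raw training values passed to it. We therefore condition on $\mathcal{R}$ and regard it as independent of the $n$ labeled samples. A tree of maximum depth $d$ has at most $2^d-1$ internal nodes under the heap indexing. Each internal node carries one rule from $\mathcal{R}$, and no rule is repeated along a path. We pad shorter trees so that every tree corresponds to an ordered assignment of rules to the $2^d-1$ heap positions.

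\textbf{Counting trees.} If all positions must receive distinct rules, the number of structures is at most $P(K,2^d-1)=K!/(K-(2^d-1))!$. This quantity is in turn at most $K^{2^d-1}$, which gives the second inequality, since $\log P(K,2^d-1)\le(2^d-1)\log K$.

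The leaf labels also have to be handled. They are chosen by majority vote and then by LLM refinement, so they are not fixed in advance. Since the paper's statement has no $K^{\text{leaves}}$-type term, I would argue in one of two ways.
\begin{itemize}
\item The leaf assignment is a deterministic function of the structure together with the LLM, which is independent of the data, so the class is indexed by structures alone.
\item Alternatively, absorb the label choices into the constant by working with the loss of the best labeling.
\end{itemize}
I expect this step to be the main obstacle. Majority labels do depend on the data, so the cleanest honest route is to add a factor for the number of labelings, or to note that the stated bound treats each structure together with its induced labels as one hypothesis. I would state this modeling assumption explicitly.

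\textbf{Concentration.} For each fixed $h\in\mathcal{H}$, the 0--1 loss is bounded in $[0,1]$. Hoeffding's inequality gives
\[
\Pr\bigl(R(h)-\hat R(h)>\epsilon\bigr)\le e^{-2n\epsilon^2},
\]
and the two-sided version gives $2e^{-2n\epsilon^2}$. A union bound over $|\mathcal{H}|\le P(K,2^d-1)$ and setting the total failure probability to $\delta$ yields
\[
\epsilon=\sqrt{\frac{\log|\mathcal{H}|+\log(2/\delta)}{2n}}.
\]
This is already smaller than the stated $\sqrt{2(\cdots)/n}$, so the claimed bound follows a fortiori. The slack factor of $4$ inside the root can also absorb a crude accounting of the data-dependent selection step. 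Finally, since $T\in\mathcal{H}$, the uniform bound applies to $T$, and substituting the counting inequality gives \eqref{eq:genbound-appendix}.
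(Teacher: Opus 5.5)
Your proposal is essentially the paper's proof: the paper also bounds $|\mathcal{H}|\le P(K,2^d-1)$, though it simply assumes a full depth-$d$ tree in which no rule is reused anywhere rather than padding (your padding map is not injective, since a pruned tree and a full tree can pad to the same rule assignment while computing different functions), it disposes of the leaves exactly as in your first bullet by asserting that the refined labels depend only on the rule path and $\tau$, and it finishes with two-sided Hoeffding, a union bound, and $\log P(K,2^d-1)\le(2^d-1)\log K$. Your observation that solving $2|\mathcal{H}|e^{-2n\varepsilon^2}=\delta$ actually gives $\varepsilon=\sqrt{(\log|\mathcal{H}|+\log(2/\delta))/(2n)}$, so the stated bound holds a fortiori, is correct (the paper writes the looser value as an equality), but you should commit to the first bullet: a structure paired with data-induced majority labels is not fixed before the sample is drawn, so Hoeffding cannot be applied to it without paying for the labelings.
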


\begin{proof}
Let $N_{\mathrm{int}}=2^d-1$, which is the number of internal nodes of a tree with depth $d$. We first count the number of possible trees produced by LLMT.
At each internal node, the algorithm selects a distinct rule from 
the global pool $\mathcal{R}$ of size $K$, without replacement.
Thus, the number of possible ordered selections of $N_{\mathrm{int}}$ distinct rules is
\[
P(K,N_{\mathrm{int}}) \;=\; \frac{K!}{(K-N_{\mathrm{int}})!}.
\]
Each selection uniquely determines the split structure of the tree,
since the structure is fixed (binary tree of maximum depth $d$) 
and rules are not reused globally.

Next, we consider leaf labels.
In LLMT, leaf predictions are determined by the \emph{leaf-refinement step}:
given the path of rules leading to the leaf, the LLM outputs a 
confidence distribution, and the label is set using a fixed threshold~$\tau$.
This procedure depends only on the rule path and $\tau$, not on the training data.
Therefore, once the split structure is fixed, the leaf labels are fixed as well,
and there is no additional multiplicative factor for label assignments.

Let $\mathcal{H}$ be the set of all trees that can be produced by the algorithm.
Then $|\mathcal{H}|\le P(K,N_{\mathrm{int}})$.
For each tree $T\in \mathcal{H}$, let $\hat R(T)$ be its empirical error on $n$ i.i.d.\ samples 
and $R(T)$ its population error.
By Hoeffding's inequality,
\[
\Pr\bigl(|R(T)-\hat R(T)| \ge \varepsilon\bigr) \;\le\; 2\,e^{-2n\varepsilon^2}.
\]
Applying the union bound over all $T\in\mathcal{H}$ gives
\[
\Pr\Bigl(\sup_{T\in\mathcal{H}} |R(T)-\hat R(T)| \ge \varepsilon\Bigr)
\;\le\; 2\,|\mathcal{H}|\,e^{-2n\varepsilon^2}.
\]
Setting the right-hand side to $\delta$ and solving for $\varepsilon$ yields
\begin{align}
\varepsilon 
&\;=\; \sqrt{\frac{2\bigl(\log |\mathcal{H}| + \log(2/\delta)\bigr)}{n}}\\
&\;\le\; \sqrt{\frac{2\bigl(\log P(K,N_{\mathrm{int}}) + \log(2/\delta)\bigr)}{n}}.
\end{align}
This proves the first inequality in~\eqref{eq:genbound-appendix}.
The second inequality follows from the fact that
\[
\log P(K,N_{\mathrm{int}})
= \sum_{t<N_{\mathrm{int}}}\log(K-t)
\le N_{\mathrm{int}}\log K.
\]

\end{proof}

\renewcommand\thesection{\Alph{section}}
\setcounter{section}{1}

\section{Algorithm}

\camerarevise{We summarize the three-stage LLMT framework in Algorithm~\ref{alg:distillation} and its optional ensemble extension, LLMT Forest, in Algorithm~\ref{alg:llmt_forest}. Algorithm~\ref{alg:distillation} presents rule generation (lines 1--2), tree assembly (lines 3--12), and leaf refinement (lines 13--20), while Algorithm~\ref{alg:llmt_forest} describes random feature-subspace sampling, bootstrap sampling, independent LLMT tree construction, and majority-vote aggregation.}

\begin{algorithm}[t]
\DontPrintSemicolon
\SetNoFillComment
\LinesNumbered
\SetArgSty{textnormal}
\KwIn{Training samples $\mc{D}$, depth $d$, LLM $f_\theta(\cdot)$}
\KwOut{Decision Tree $T$}
\BlankLine

\textbf{Step 1 - Rule Generation:} \\
$\mc{R} \leftarrow f_\theta(P_{rule}) \quad \text{where } \mc{R} = [(r_k, s_k)]_{k=1}^K$

\BlankLine

% \begin{minipage}[t]{0.48\textwidth}
\textbf{Step 2 – Tree Assembly:}\\
\For{$i = 1,\dots,d-1$}{
    \For{$j = 2^{\,i-1},\dots,2^i-1$}{
        $R_{\mathrm{path}} \leftarrow \{\text{rules used until node }j\}$;\\
        $\mc{R}_{\mathrm{candidate}} \leftarrow \mc{R} \setminus R_{\mathrm{path}} $ (group by $s_k$);\\
        \For{$G \in \mc{R}_{\mathrm{candidate}}$}{
            select $r^* \in G$ by Eq.~\eqref{eq:rule_selection} on $\mathcal{D}_j$\\
            \If{Gini‐gain$(r^*,\mathcal{D}_j)>0$}{
                 $T[j]\leftarrow r^*$;\\
                 \textbf{break};
            }
        }
    }
}
% \vspace{10pt}
% \end{minipage}
% \hfill
% \BlankLine
% \begin{minipage}[r]{0.48\textwidth}

\textbf{Step 3 – Leaf Refinement:}\\
\ForEach{leaf node $T[i]\in T$}{
    % Evaluate $p$ using $\mathcal{M}$ to obtain confidence $c_p$;\\
    $C_p \leftarrow f_\theta(T[i]; P_{refine})$\;
    $y^* \leftarrow \arg\max \mathbf{C_p}$\quad\;
    $c^* \leftarrow \max \mathbf{C_p}$\;
    \If{$c^*>\tau \;\wedge\; y^*\neq T[i]$}{
        $T[i]\leftarrow y^*$
    }
}
\Return $T$;
% \vspace{20pt}

% \vspace{10pt}
% \end{minipage}
\caption{The process of LLMT}
\label{alg:distillation}
\end{algorithm}

\begin{algorithm}[t]
\begin{camerarevision}
\DontPrintSemicolon
\SetNoFillComment
\LinesNumbered
\SetArgSty{textnormal}
\KwIn{Training samples $\mc{D}$, feature set $\mc{F}$, number of trees $M$, depth $d$, LLM $f_\theta(\cdot)$}
\KwOut{LLMT Forest $\mc{T}$}
\BlankLine

$q \leftarrow \left\lceil \sqrt{|\mc{F}|} \right\rceil$\;

\For{$m=1,\ldots,M$}{
    $\mc{F}_m \leftarrow \operatorname{RandomSubset}(\mc{F},q)$\;
    $\mc{D}_m \leftarrow
    \operatorname{Project}_{\mc{F}_m}
    \bigl(\operatorname{Bootstrap}(\mc{D})\bigr)$\;
    $T_m \leftarrow
    \operatorname{LLMT}(\mc{D}_m,d,f_\theta(\cdot))$\;
}

$\mc{T}(x) \leftarrow
\operatorname{MajorityVote}
\bigl(\{T_m(x|_{\mc{F}_m})\}_{m=1}^{M}\bigr)$\;

\Return $\mc{T}$\;
\caption{The process of LLMT Forest}
\label{alg:llmt_forest}
\end{camerarevision}
\end{algorithm}

\section{\revise{Theoretical Time Complexity Analysis}}
\label{app:time_complexity}

In practice, dominant runtime cost of our method comes from LLM calls; the tree-assembly steps is lightweight and takes less than 5\% of the total wall-clock time across all datasets. Here we provide a theoretical analysis of the time complexity.

Let $n$ be the number of training examples, $K$ the number of generated rules, $d_{\max}$ the maximum tree depth, and $T_{\text{LLM}}$ the time of a single LLM call. 

\textbf{Rule Generation and Leaf Refinement:} For rule generation, we need to generate rules for each internal node. In the worst case, a complete binary tree of depth $d_{\max}$ has $2^{d_{\max}} - 1$ internal nodes, requiring at most $2^{d_{\max}} - 1$ LLM calls. For leaf refinement, we need to evaluate each leaf node, which requires at most $2^{d_{\max}}$ LLM calls (one per leaf). Therefore, the total cost for LLM calls is at most $(2^{d_{\max}} + 1) T_{\text{LLM}}$.

\textbf{Tree Assembly:} For tree assembly, at each internal node, we evaluate at most $K$ rules on its local subset of examples. Since each example appears in at most $d_{\max}$ nodes along its path from root to leaf, the total number of rule evaluations is at most $K \cdot n \cdot d_{\max}$. Each rule evaluation involves checking the condition against an example, which takes constant time. Therefore, the total cost for tree assembly is $O(K n d_{\max})$.

\textbf{Total Time Complexity:} Combining both components, the total time complexity is:
\[
(2^{d_{\max}} + 1) T_{\text{LLM}} + O(K n d_{\max}).
\]

Since $T_{\text{LLM}}$ is typically much larger than the time for a single rule evaluation, and $K$ is set to $\max(10, 2^{d_{\max}} - 1)$ (as described in Section~\ref{sec:exp}), the LLM call cost dominates the overall runtime in practice.

\section{\revise{Step-by-Step Example of LLMT}}
\label{app:example}

We provide a concrete example below to illustrate how LLMT works step-by-step. This example uses the Diabetes dataset with 4 training samples, maximum depth 3, and refinement threshold $\tau=0.7$.

\textbf{Training samples (toy subset):}
\begin{itemize}
    \item Sample 1: [Glucose=78, BMI=31.2, Age=42, DPF=0.382, \ldots] $\rightarrow$ no
    \item Sample 2: [Glucose=64, BMI=29.2, Age=21, DPF=0.192, \ldots] $\rightarrow$ no
    \item Sample 3: [\ldots] $\rightarrow$ yes
    \item Sample 4: [\ldots] $\rightarrow$ yes
\end{itemize}

\textbf{Step 1 -- Rule generation.}

The LLM, given only task metadata, outputs meta-rules with confidence scores, e.g.:
\begin{itemize}
    \item \texttt{Glucose < 140 [confidence: 10]}
    \item \texttt{BMI < 30.0 [confidence: 9]}
    \item \texttt{Age < 30 [confidence: 8]}
    \item \texttt{DPF < 0.5 [confidence: 8]}
    \item \ldots
\end{itemize}

\textbf{Step 2 \& 3 -- Tree assembly + leaf refinement.}

At the root, we evaluate candidate rules by Gini gain and select \texttt{BMI < 30.0}. This splits the data into a pure left node (1 sample, class: no) and a mixed right node (3 samples). The left node becomes a leaf; we send its path (\texttt{BMI < 30.0}) to the LLM, which returns calibrated probabilities (e.g., no: 0.700, yes: 0.300), so the leaf remains \texttt{no} since confidence $\geq \tau$. For the right node, we again select the rule with the highest Gini gain (e.g., \texttt{DPF < 0.5}) and recurse until either purity or depth constraints are met. For each resulting leaf path (e.g., \texttt{BMI $\geq$ 30.0 AND DPF $\geq$ 0.5}), we perform the same refinement step via the LLM and only change the leaf label when confidence exceeds $\tau$.

The final toy tree has the form of Figure \ref{fig:llmt-example-step}:
\begin{figure}[h]
    \centering
    \includegraphics[width=0.65\columnwidth]{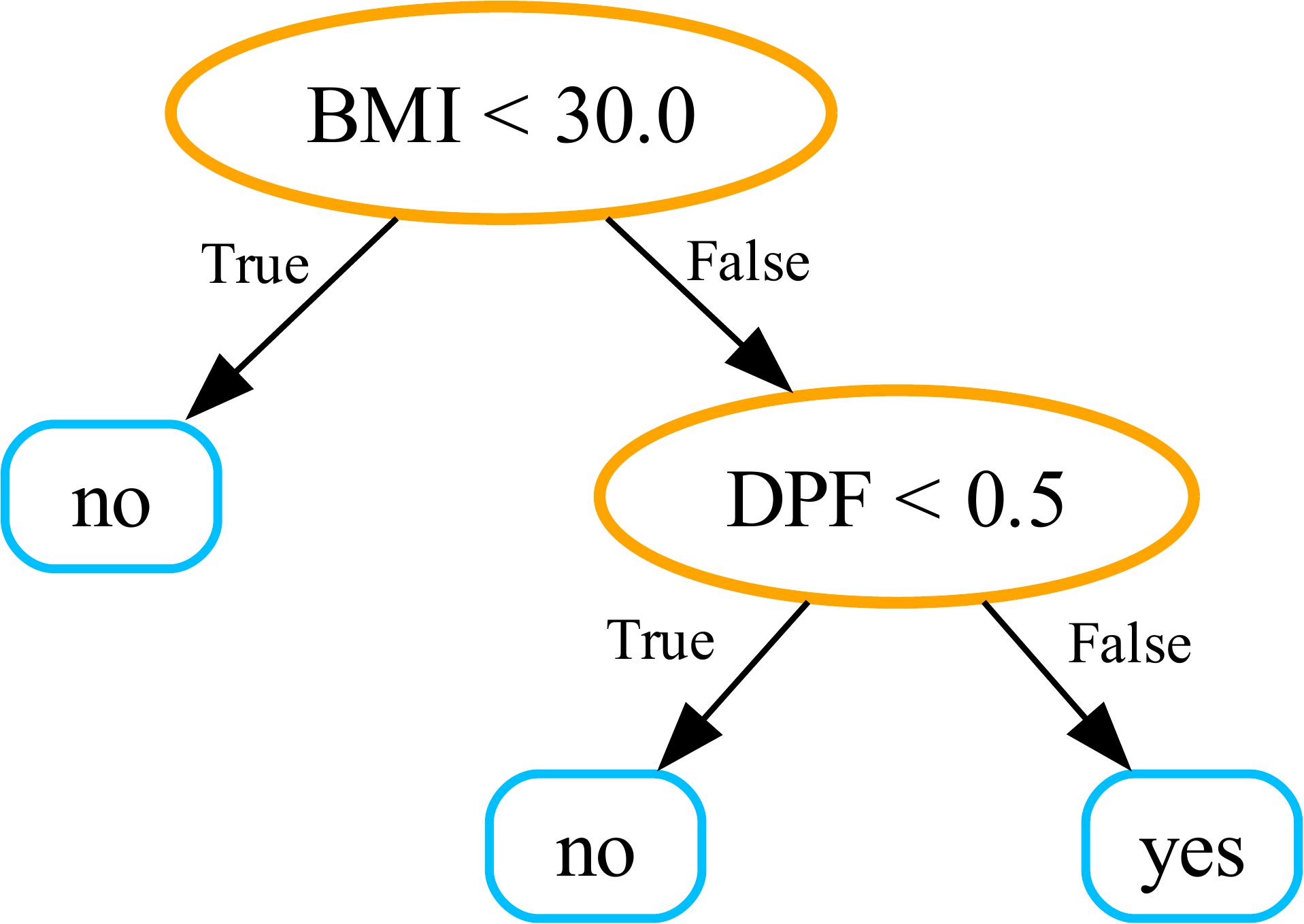}
    \caption{LLMT Example Tree (Step-by-Step)}
    \label{fig:llmt-example-step}
\end{figure}

\section{\revise{Information of Datasets}}
\label{app:data}

The details of the evaluated datasets are presented below:  
1) \textbf{Nursery}~\cite{nursery_76}: A multi-class classification dataset originally used to evaluate nursery school application outcomes;  
2) \textbf{Diabetes}~\cite{diabetes_34}: A binary classification dataset from the Pima Indians Diabetes Database to predict the onset of diabetes;  
3) \textbf{Spambase}~\cite{spambase_94}: A binary classification dataset from the UCI repository to determine whether an email is spam;  
4) \textbf{Abalone}~\cite{abalone_94}: A binary dataset aiming to predict the age of abalones (measured by the number of rings);  
5) \textbf{Blood}~\cite{blood_08}: A binary classification dataset on blood donation behavior to predict whether a donor will give blood in the future based on historical donation patterns;  
6) \textbf{Iris}~\cite{iris_36}: A classical multi-class dataset consisting of flower measurements for classifying iris species;
\revise{
7) \textbf{Breast}~\cite{breast_resistin_2018}: A binary classification dataset using resistin, glucose, age, and BMI to predict breast cancer presence;
8) \textbf{Glioma}~\cite{glioma_voting_2022}: A binary classification dataset for glioma grading based on clinical and molecular characteristics; }
\camerarevise{9) \textbf{Communities}~\citep{redmond2002communities}: A high-dimensional classification dataset with 103 features for predicting violent-crime risk tiers;  
10) \textbf{Ecom}~\citep{tabred2024}: A 119-feature coupon-redemption dataset derived from the Acquire Valued Shoppers data using the TabReD preprocessing pipeline;  
11) \textbf{Myocardial}~\citep{golovenkin2020myocardial}: A 111-feature clinical dataset for identifying chronic heart failure after myocardial infarction.}
Among them, we select Diabetes, Spambase, and Nursery as the major datasets for evaluation across various settings. These datasets vary widely in sample size, feature dimensionality, and number of classes, providing a comprehensive testbed.

We list all the dataset information in Table \ref{tab:datasets}. All datasets are publicly available. The UCI-hosted datasets used in this paper are released under the Creative Commons Attribution 4.0 license (CC BY 4.0), which permits reuse and redistribution with proper attribution.

\begin{table}[h]
    \centering
    \setlength{\tabcolsep}{5pt}
    \begin{tabular*}{\appendixsinglecoltablewidth}{@{\extracolsep{\fill}}lccc@{}}
    \toprule
    Dataset                  & \#Instances & \#Features & \#Classes \\ \midrule
    Nursery                  & 12,960      & 8          & 3         \\
    Diabetes                 & 768         & 8          & 2         \\
    Spambase                 & 4,601       & 18         & 2         \\
    Abalone                  & 4,177       & 8          & 2         \\
    Blood                    & 748         & 4          & 2         \\
    Iris                     & 150         & 4          & 3         \\
    % Bankruptcy               & 78,682      & 19         & 2         \\
    Breast                   & 116         & 9          & 2         \\
    % Penguins                 & 333         & 7          & 3         \\
    Glioma                   & 839         & 23         & 2         \\
    \camerarevise{Communities} & \camerarevise{1,994} & \camerarevise{103} & \camerarevise{3} \\
    \camerarevise{Ecom}       & \camerarevise{160,057} & \camerarevise{119} & \camerarevise{2} \\
    \camerarevise{Myocardial} & \camerarevise{686} & \camerarevise{111} & \camerarevise{2} \\ \bottomrule
    \end{tabular*}
    \caption{Statistics of datasets used in our experiments.}
    \label{tab:datasets}
\end{table}

\section{\revise{Experimental Settings}}
\label{app:exp_settings}

This section provides adequate experimental parameters for reproducibility. All hyperparameters, model configurations, and experimental settings are detailed below.

\subsection{\revise{Experimental Details for Observation 1 E}}
\label{app:obs1_scenarios}

In Observation 1, we compare LLM performance under two scenarios: \textit{original} and \textit{masked}. In the \textit{original} scenario, the model has access to the full dataset description, including feature names and label names. For example, a feature appears as \texttt{Age} or \texttt{Glucose} with its semantic meaning. In the \textit{masked} scenario, the model does not have access to feature names or label names. Instead, features are represented only as column numbers (e.g., \texttt{Feature 2} or \texttt{Column 3}), and labels are similarly anonymized (e.g., \texttt{Class 0} and \texttt{Class 1} instead of \texttt{yes} and \texttt{no}). This masking removes the semantic context that LLMs rely on for leveraging their background knowledge about the task domain.

\begin{camerarevision}
To reduce dependence on old, widely used benchmarks, we extend the same original-versus-masked protocol to six newer datasets. Breast (2018) and Glioma (2022) already appear in our main evaluation. Paddy was released by UCI in July 2025~\citep{paddy2025}, and Students was released in July 2025~\citep{students2025}; both postdate Llama-4 Maverick, the newest model in this experiment. Loan and Personality are synthetic datasets from the 2025 Kaggle Playground Series~\citep{kaggleloan2025,kagglepersonality2025}, so their generated records could not have appeared in model pretraining. We evaluate the same four LLMs used in Figure~\ref{fig:mot1}. Table~\ref{tab:newer_masking} reports accuracy and the absolute percentage-point reduction caused by masking.
\end{camerarevision}

\begin{table*}[!]
\centering
\begin{camerarevision}
\setlength{\tabcolsep}{2.5pt}
\begin{tabular*}{\appendixfulltablewidth}{@{\extracolsep{\fill}}llrrr@{\hspace{10pt}}llrrr@{}}
\toprule
Dataset & Model & Original & Masked & $\Delta$ & Dataset & Model & Original & Masked & $\Delta$ \\
\midrule
Breast & DeepSeek-V3 & 56.0 & 50.0 & 6.0 & Paddy & DeepSeek-V3 & 34.5 & 31.5 & 3.0 \\
Breast & Llama-4 & 56.9 & 49.1 & 7.8 & Paddy & Llama-4 & 40.0 & 37.5 & 2.5 \\
Breast & Gemma-2 & 55.2 & 52.6 & 2.6 & Paddy & Gemma-2 & 37.0 & 32.5 & 4.5 \\
Breast & Qwen2.5 & 54.3 & 54.3 & 0.0 & Paddy & Qwen2.5 & 32.5 & 30.5 & 2.0 \\
\addlinespace
Glioma & DeepSeek-V3 & 89.0 & 34.0 & 55.0 & Personality & DeepSeek-V3 & 93.0 & 59.0 & 34.0 \\
Glioma & Llama-4 & 89.0 & 35.0 & 54.0 & Personality & Llama-4 & 93.0 & 64.0 & 29.0 \\
Glioma & Gemma-2 & 53.0 & 57.0 & $-4.0$ & Personality & Gemma-2 & 93.0 & 61.5 & 31.5 \\
Glioma & Qwen2.5 & 85.5 & 33.0 & 52.5 & Personality & Qwen2.5 & 93.5 & 81.5 & 12.0 \\
\addlinespace
Loan & DeepSeek-V3 & 68.0 & 52.5 & 15.5 & Students & DeepSeek-V3 & 77.0 & 56.0 & 21.0 \\
Loan & Llama-4 & 68.0 & 49.5 & 18.5 & Students & Llama-4 & 72.5 & 39.0 & 33.5 \\
Loan & Gemma-2 & 54.5 & 44.0 & 10.5 & Students & Gemma-2 & 66.5 & 43.5 & 23.0 \\
Loan & Qwen2.5 & 70.5 & 34.0 & 36.5 & Students & Qwen2.5 & 78.5 & 46.0 & 32.5 \\
\bottomrule
\end{tabular*}
\end{camerarevision}
\caption{\camerarevise{Original-versus-masked zero-shot accuracy (\%) on six newer datasets. $\Delta$ is Original minus Masked in percentage points. Llama-4 denotes Llama-4 Maverick, Gemma-2 denotes Gemma-2-27B-IT, and Qwen2.5 denotes Qwen2.5-72B-Instruct.}}
\label{tab:newer_masking}
\end{table*}

\camerarevise{Masking lowers accuracy in 22 of 24 comparisons, leaves one unchanged, and improves one, with an average reduction of 20.14 percentage points. The pattern on post-cutoff and synthetic datasets reduces the likelihood that the effect is explained only by memorization of old benchmark records. Nevertheless, masking changes the semantic information available to the model and therefore cannot by itself prove the absence of pretraining memorization.}

\subsection{\revise{Experimental Details for Observation 2}}
\label{app:obs2_exp}

\begin{camerarevision}
We compare rule sets and path sets with the same total number of atomic feature conditions. Under a condition budget $n$, a rule set contains $n$ independent single-condition rules, whereas a path set contains one or more conjunctive paths with $n$ conditions in total. The two prompts use identical features, labels, and comparison operators. We use Qwen2.5-72B with temperature 0.8 and evaluate condition budgets $n\in\{3,4\}$.

Let $\{(x_i,y_i)\}_{i=1}^{N}$ be the evaluation set and $S$ a generated rule or path set. A member $s\in S$ is triggered when $x_i$ satisfies its condition or all conditions along its path, and predicts label $\hat y_s$. We define
\begin{align*}
T_i(S)&=\{s\in S:s\text{ is triggered by }x_i\},\\
C_i(S)&=\{s\in T_i(S):\hat y_s=y_i\}.
\end{align*}
Let $t_i=|T_i(S)|$ and $c_i=|C_i(S)|$. Set Utility and Set Accuracy are computed as
\[
\operatorname{Utility}(S)=\frac{1}{N}\sum_{i=1}^{N}\mathbf{1}[c_i>0],
\]
and
\begin{align*}
\operatorname{Accuracy}(S)&=\frac{1}{N}\sum_{i=1}^{N}a_i(S),\\
a_i(S)&=
\begin{cases}
c_i/t_i, & t_i>0,\\
0, & t_i=0.
\end{cases}
\end{align*}
Set Utility measures whether an instance is covered by at least one correct member, whereas Set Accuracy measures the proportion of correct predictions among triggered members and assigns zero when none is triggered. We average both metrics over 100 generations for each dataset, set type, and condition budget.
\end{camerarevision}

\begin{table*}[t]
\centering
\begin{camerarevision}
\begin{tabular*}{\appendixfulltablewidth}{@{\extracolsep{\fill}}lrrrrrrrr@{}}
\toprule
& \multicolumn{4}{c}{Condition budget $n=3$}
& \multicolumn{4}{c}{Condition budget $n=4$} \\
\cmidrule(lr){2-5}\cmidrule(lr){6-9}
Dataset & Rule U. & Path U. & Rule A. & Path A.
        & Rule U. & Path U. & Rule A. & Path A. \\
\midrule
Nursery    & 31.87 & 28.32 & 24.75 & 25.86 & 41.34 & 41.35 & 31.20 & 33.32 \\
Diabetes   & 35.28 & 32.21 & 32.60 & 29.48 & 64.52 & 65.14 & 50.62 & 50.29 \\
Spambase   & 54.55 & 41.06 & 45.44 & 36.64 & 60.42 & 53.63 & 52.04 & 47.74 \\
Abalone    & 33.41 & 29.68 & 33.05 & 29.45 & 53.11 & 32.23 & 51.42 & 32.05 \\
Blood      & 40.01 & 35.45 & 33.92 & 32.03 & 64.24 & 31.66 & 43.48 & 24.49 \\
Iris       & 74.73 & 60.81 & 65.18 & 54.84 & 73.51 & 69.10 & 63.72 & 63.45 \\
Breast     & 38.99 & 36.28 & 36.77 & 35.66 & 51.82 & 18.39 & 44.57 & 18.12 \\
Glioma     & 65.94 & 65.88 & 60.76 & 62.94 & 75.96 & 50.17 & 68.29 & 46.43 \\
\midrule
Mean & \textbf{46.85} & 41.21 & \textbf{41.56} & 38.36
     & \textbf{60.62} & 45.21 & \textbf{50.67} & 39.49 \\
\bottomrule
\end{tabular*}
\end{camerarevision}
\caption{\camerarevise{Rule-set and path-set results under matched condition budgets (\%). U. and A. denote Set Utility and Set Accuracy. Each entry is averaged over 100 generations.}}
\label{tab:controlled_rule_path}
\end{table*}

\camerarevise{Table~\ref{tab:controlled_rule_path} shows that rule sets achieve higher mean Set Utility and Set Accuracy than path sets under both budgets, indicating that the advantage persists when the total number of atomic conditions is controlled.}\subsection{\revise{Dataset-Specific Parameters}}
\label{app:dataset_params}

Table~\ref{tab:dataset_params} lists dataset-specific parameters. Max Depth (3 or 4, depending on dataset) is shared by all methods. Test Size is 30 for Breast dataset and 100 for others. The n\_estimators column applies to XGBoost, Random Forest and LLMT Forest.

\begin{table}[h]
\centering
\begin{tabular*}{\appendixsinglecoltablewidth}{@{\extracolsep{\fill}}lcc@{}}
\toprule
\tb{Dataset} & \tb{Max Depth}  & \tb{n\_estimators} \\ 
\midrule

Diabetes & 3  & 3 \\
Nursery & 3  & 3 \\
Spambase & 4  & 6 \\
Abalone & 3  & 3 \\
Blood & 3  & 3 \\
Iris & 3  & 3 \\
Breast & 3  & 3 \\
Glioma & 4  & 6 \\
\revised{Communities} & \revised{4}  & \revised{30} \\
\revised{Ecom} & \revised{3}  & \revised{5} \\
\revised{Myocardial} & \revised{3}  & \revised{16} \\

\bottomrule
\end{tabular*}
\caption{Dataset-specific parameters for all experiments.}
\label{tab:dataset_params}
\end{table}
\subsection{\revise{LLM Configuration Settings}}
\label{app:llm_config}

Table~\ref{tab:llm_config} details the LLM API configuration used for all LLM-based methods.
\camerarevise{The same Qwen2.5-72B backbone is used for all baselines. Request intervals and timeouts are operational parameters adjusted to dataset workload and provider status rather than fixed experimental hyperparameters.}

\begin{table}[t]
\centering
\begin{tabularx}{\appendixsinglecoltablewidth}{@{}lX@{}}
\toprule
\tb{Parameter} & \tb{Value} \\ \midrule

Model
& Qwen2.5-72B-Instruct \\

API Provider
& TogetherAI \\

Temperature
& 0.0 \\

Max Tokens
& 2048\\

Parallel Batch Size
& 6 \\

Presence Penalty
& 0.0 \\

Frequency Penalty
& 0.0 \\

Random Seed
& 42 \\

\bottomrule
\end{tabularx}
\caption{LLM API configuration settings used for all LLM-based methods.}
\label{tab:llm_config}
\end{table}
\subsection{\revise{Method-Specific Settings}}
\label{app:method_params}

Table~\ref{tab:method_params} provides hyperparameters for all methods. The max\_depth settings match those in Table~\ref{tab:dataset_params} for each dataset. The n\_estimators values are dataset-specific and listed in Table~\ref{tab:dataset_params}. All LLM-based methods (IO-Tree, CoT-Tree, ToT-Tree, LLMT) use the LLM configuration specified in Table~\ref{tab:llm_config}. It is worth noting for LLMT and LLMT Forest, confidence groups are formed by grouping meta-rules with the same confidence value together.

\begin{table*}[t]
\centering
\fontsize{9.5}{10.5}\selectfont
\renewcommand{\arraystretch}{1.02}

\begin{minipage}{\appendixfulltablewidth}
\begin{minipage}[t]{0.485\linewidth}
\vspace{0pt}
\begin{tabularx}{\linewidth}{@{}
    >{\raggedright\arraybackslash}p{0.24\linewidth}
    >{\raggedright\arraybackslash}p{0.38\linewidth}
    >{\raggedright\arraybackslash}X@{}}
\toprule
\tb{Method} & \tb{Parameter} & \tb{Value} \\
\midrule

Tree methods
& Maximum depth
& Table~\ref{tab:dataset_params} \\

Tree ensembles
& Ensemble size
& Table~\ref{tab:dataset_params} \\
\midrule

Step-ZSDT
& Leaf-stopping threshold $\tau$
& 0.9 \\
\midrule

XGBoost
& \texttt{learning\_rate}
& 0.1 \\
\midrule

LogReg
& \texttt{max\_iter}
& 1000 \\
\midrule

Linear SVM
& Estimator
& \texttt{LinearSVC} \\
& \texttt{max\_iter}
& 10000 \\
\midrule

LightGBM
& \texttt{learning\_rate}
& 0.1 \\
\midrule

CatBoost
& \texttt{depth}
& Table~\ref{tab:dataset_params} \\
& \texttt{learning\_rate}
& 0.1 \\
\midrule

ToT-Tree
& Candidate rules per node
& 5 \\
& Voting rounds
& 5 \\
\bottomrule
\end{tabularx}
\end{minipage}\hfill%
\begin{minipage}[t]{0.485\linewidth}
\vspace{0pt}
\begin{tabularx}{\linewidth}{@{}
    >{\raggedright\arraybackslash}p{0.20\linewidth}
    >{\raggedright\arraybackslash}p{0.34\linewidth}
    >{\raggedright\arraybackslash}X@{}}
\toprule
\tb{Method} & \tb{Parameter} & \tb{Value} \\
\midrule

FeatLLM
& Rule-generation trials
& 5 \\
\midrule

DeLTa
& Residual learner
& Decision-tree regressor \\
& Fusion coefficient $\eta$
& 0.1 \\
\midrule

GPTree
& Node type
& \textsc{code} (our reproduction) \\
\midrule

TabPFN
& Pretraining-limit override
& Enabled \\
\midrule

TabLLM
& Backbone
& \texttt{bigscience/T0pp} \\
& Adaptation
& IA3 via T-Few \\
\midrule

LLMT
& Leaf-refinement threshold $\tau$
& 0.7 \\
& Numeric split candidates per feature
& 10 \\
& Requested meta-rules $K$
& $\max(10,2^{d_{\max}}-1)$ \\
\midrule

LLMT Forest
& Base estimator
& LLMT (same settings) \\
& Feature partition
& Randomized, balanced, and disjoint \\
& Aggregation
& Majority vote \\

\bottomrule
\end{tabularx}
\end{minipage}
\end{minipage}

\caption{\camerarevise{Method-specific settings.}}
\label{tab:method_params}
\end{table*}
\subsection{\revise{Experimental Running Parameters}}
\label{app:running_params}

Table~\ref{tab:running_params} lists general experimental running parameters. Number of Trials per Setting means that for each training sample size, we randomly sample 10 different training sets and run experiments, i.e., build 10 trees.

\begin{table}[t]
\centering
\begin{tabularx}{\appendixsinglecoltablewidth}{@{}lX@{}}
\toprule
\tb{Parameter} & \tb{Value} \\
\midrule
Number of Trials per Setting & 10 \\
Random Seed & 0 \\
Train Batch Size & 8 \\
Test Batch Size & 8 \\
\bottomrule
\end{tabularx}
\caption{General experimental running parameters.}
\label{tab:running_params}
\end{table}

\subsection{\revise{Implementation of two Zero-shot Baselines}}

To further evaluate the effectiveness and efficiency of our proposed LLMT framework, we compare it with two representative zero-shot LLM-based decision tree baselines: (1) Direct-ZSDT (\citet{knauer2024oh}); (2) Step-ZSDT (\citet{carrasco2025zero}). For baseline Direct-ZSDT, we reuse the official code and ensure identical prompt and parameter settings to guarantee fair comparison. For baseline Step-ZSDT, whose code is not released, we follow the prompt templates and methodology described by the authors and set the probability threshold to 0.9 as specified in the original paper.

\section{Additional Experimental Results}
\label{app:additional_results}

\subsection{Accuracy}

The mean accuracy and standard deviation across different \#shots are presented in Figure~\ref{fig:accuracy_three}.

\begin{figure*}[t]
    \centering
    \includegraphics[width=\linewidth]{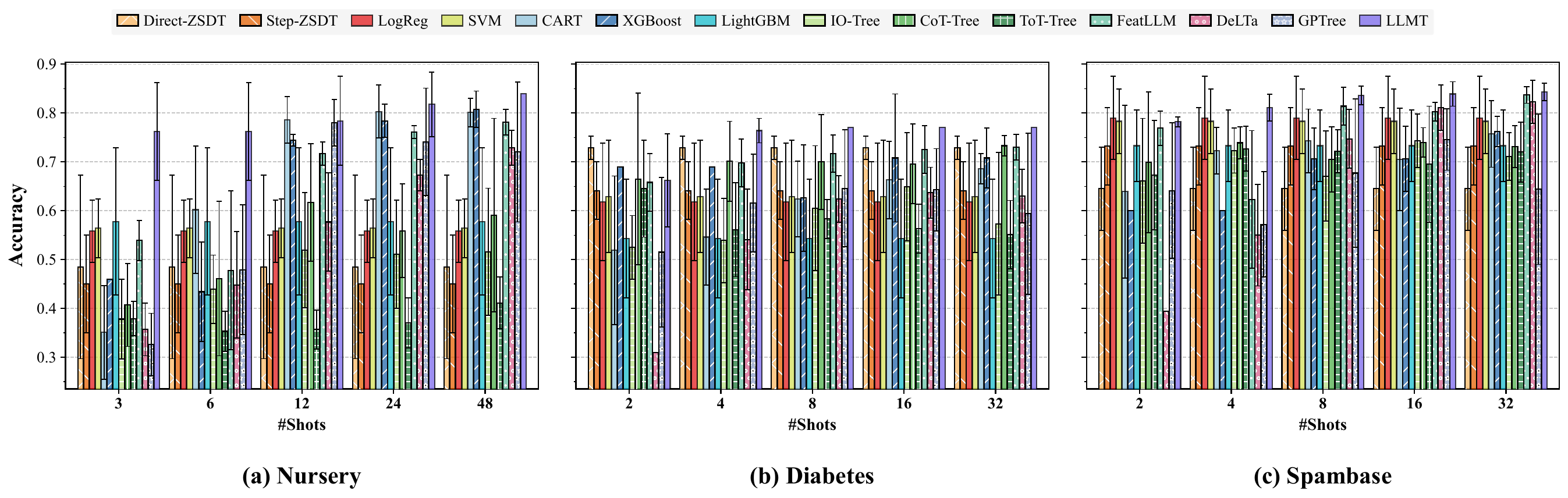}
    \caption{\newrevise{The accuracy and standard deviation of different methods on three datasets. }}
    \label{fig:accuracy_three}
\end{figure*}

\subsection{\camerarevise{Comparisons with Additional Baselines}}

\camerarevise{Tables~\ref{tab:conventional_fixed}, \ref{tab:conventional_shots}, and~\ref{tab:blackbox_results} present expanded comparisons with tree-ensemble and pretrained black-box baselines. }

\begin{table*}[t]
\centering
\begin{camerarevision}
\begin{tabular*}{\appendixfulltablewidth}{@{\extracolsep{\fill}}lrrr@{}}
\toprule
Dataset & Random Forest & CatBoost & LLMT \\
\midrule
Nursery & 0.397 & 0.473 & \textbf{0.762} \\
Diabetes & 0.584 & 0.590 & \textbf{0.764} \\
Spambase & 0.725 & 0.743 & \textbf{0.811} \\
Abalone & 0.688 & \textbf{0.700} & \underline{0.694} \\
Blood & 0.549 & 0.558 & \textbf{0.675} \\
Iris & 0.769 & 0.829 & \textbf{0.940} \\
Breast & 0.530 & 0.517 & \textbf{0.617} \\
Glioma & 0.733 & 0.759 & \textbf{0.760} \\
\midrule
Average & 0.622 & 0.646 & \textbf{0.753} \\
\bottomrule
\end{tabular*}
\end{camerarevision}
\caption{\camerarevise{Accuracy of Random Forest and CatBoost with two training examples per class, compared with LLMT.}}
\label{tab:conventional_fixed}
\end{table*}

\begin{table*}[t]
\centering
\begin{camerarevision}
\begin{tabular*}{\appendixfulltablewidth}{@{\extracolsep{\fill}}llrrr@{}}
\toprule
Dataset & \#Shots & Random Forest & CatBoost & LLMT \\
\midrule
\multirow{5}{*}{Nursery}
& 3  & 0.349 & 0.464 & \textbf{0.762} \\
& 6  & 0.397 & 0.473 & \textbf{0.762} \\
& 12 & 0.707 & 0.411 & \textbf{0.784} \\
& 24 & 0.754 & 0.721 & \textbf{0.818} \\
& 48 & 0.798 & 0.740 & \textbf{0.840} \\
\midrule
\multirow{5}{*}{Diabetes}
& 2  & 0.486 & 0.523 & \textbf{0.662} \\
& 4  & 0.584 & 0.590 & \textbf{0.764} \\
& 8  & 0.659 & 0.650 & \textbf{0.770} \\
& 16 & 0.618 & 0.671 & \textbf{0.770} \\
& 32 & 0.683 & 0.700 & \textbf{0.770} \\
\midrule
\multirow{5}{*}{Spambase}
& 2  & 0.677 & 0.620 & \textbf{0.782} \\
& 4  & 0.725 & 0.743 & \textbf{0.811} \\
& 8  & 0.740 & 0.785 & \textbf{0.836} \\
& 16 & 0.783 & 0.788 & \textbf{0.839} \\
& 32 & 0.821 & 0.785 & \textbf{0.843} \\
\bottomrule
\end{tabular*}
\end{camerarevision}
\caption{\camerarevise{Accuracy of Random Forest and CatBoost across shot counts, compared with LLMT.}}
\label{tab:conventional_shots}
\end{table*}

\begin{table*}[t]
\centering
\begin{camerarevision}
\begin{tabular*}{\appendixfulltablewidth}{@{\extracolsep{\fill}}lrrrr@{}}
\toprule
Dataset & TabPFN & TabLLM & LLMT & LLMT Forest \\
\midrule
Nursery & 0.750 & 0.759 & \textbf{0.762} & -- \\
Diabetes & 0.710 & 0.510 & \textbf{0.764} & -- \\
Spambase & 0.802 & 0.673 & \textbf{0.811} & -- \\
Abalone & \textbf{0.754} & 0.569 & \underline{0.694} & -- \\
Blood & 0.658 & 0.496 & \textbf{0.675} & -- \\
Iris & 0.918 & 0.327 & \textbf{0.940} & -- \\
Breast & 0.510 & 0.587 & \textbf{0.617} & -- \\
Glioma & \textbf{0.878} & 0.521 & \underline{0.760} & -- \\
\midrule
Communities & 0.443 & N/A & 0.438 & \textbf{0.490} \\
Ecom & 0.518 & N/A & \underline{0.582} & \textbf{0.757} \\
Myocardial & 0.541 & N/A & \underline{0.618} & \textbf{0.773} \\
\bottomrule
\end{tabular*}
\end{camerarevision}
\caption{\camerarevise{Comparison with pretrained black-box few-shot baselines. TabLLM runs out of memory on Communities and Ecom and is prohibitively slow on Myocardial under the T0pp protocol; these cases are marked N/A.}}
\label{tab:blackbox_results}
\end{table*}

\subsection{Results on High-Dimensional Datasets}
\label{app:high_dimensional}

\camerarevise{Non-ensemble and ensemble methods are reported in Tables~\ref{tab:additional_dataset_results} and~\ref{tab:forest_results}, respectively. Step-ZSDT is omitted because its per-feature, per-node LLM queries are prohibitively expensive in high dimensions. Overall, LLMT performs best among non-ensemble methods, achieving the highest average accuracy. LLMT Forest achieves the highest average accuracy among ensemble methods.}

\begin{table*}[!t]
\centering
\fontsize{9.0}{9.8}\selectfont
\setlength{\tabcolsep}{3pt}
\renewcommand{\arraystretch}{1.2}
\begin{tabular*}{\appendixfulltablewidth}{@{}l@{\extracolsep{\fill}}*{10}{c}@{}}
\toprule
\textbf{Dataset}
& \revised{\textbf{LogReg}}
& \revised{\textbf{SVM}}
& \textbf{CART}
& \textbf{Direct-ZSDT}
& \textbf{IO-Tree}
& \textbf{CoT-Tree}
& \textbf{ToT-Tree}
& \revised{\textbf{FeatLLM}}
& \revised{\textbf{GPTree}}
& \textbf{LLMT (Ours)} \\
\midrule

Communities
& 0.424
& \revised{\tb{0.439}}
& 0.404
& 0.418
& 0.336
& 0.411
& 0.402
& \revised{0.432}
& \revised{0.407}
& \underline{0.438} \\

Ecom
& 0.509
& 0.508
& 0.554
& 0.561
& 0.511
& 0.524
& 0.534
& \revised{0.579}
& \revised{0.490}
& \tb{0.582} \\

Myocardial
& 0.526
& 0.522
& 0.540
& 0.510
& 0.535
& 0.503
& 0.498
& \revised{0.596}
& \revised{0.556}
& \tb{0.618} \\

\midrule
\textbf{Average}
& 0.486
& 0.490
& 0.499
& 0.496
& 0.461
& 0.479
& 0.478
& 0.536
& 0.484
& \tb{0.546} \\

\bottomrule
\end{tabular*}
\caption{\camerarevise{Accuracy comparison of non-ensemble methods on three high-dimensional datasets.}}
\label{tab:additional_dataset_results}
\end{table*}

\begin{table*}[!t]
\centering
\fontsize{9.5}{10.4}\selectfont
\setlength{\tabcolsep}{2pt}
\renewcommand{\arraystretch}{1.2}
\begin{tabular*}{\appendixfulltablewidth}{@{}l@{\extracolsep{\fill}}*{6}{c}@{}}
\toprule
\textbf{Dataset}
& \textbf{XGBoost}
& \textbf{RandomForest}
& \textbf{LightGBM}
& \textbf{CatBoost}
& \revised{\textbf{DeLTa}}
& \textbf{LLMT Forest (Ours)} \\
\midrule

Communities
& 0.428
& 0.440
& 0.434
& 0.437
& \revised{0.473}
& \tb{0.490} \\

Ecom
& 0.695
& 0.504
& 0.412
& 0.503
& \revised{0.619}
& \tb{0.757} \\

Myocardial
& 0.702
& 0.545
& 0.581
& 0.535
& \revised{0.565}
& \tb{0.773} \\

\midrule
\textbf{Average}
& 0.608
& 0.496
& 0.476
& 0.492
& 0.552
& \tb{0.673} \\

\bottomrule
\end{tabular*}
\caption{\camerarevise{Accuracy comparison of ensemble methods on three high-dimensional datasets.}}
\label{tab:forest_results}
\end{table*}

\subsection{\revise{Interpretability Analyses}}

\subsubsection{\revise{Rule Diversity Analysis}}

We ensure rule diversity through careful prompt design. Our meta-rule generation prompt (see Figures~\ref{fig:generation_prompt} and~\ref{fig:generation_prompt_cont} for prompt details) encourages diversity through the following constraints:
\begin{itemize}
    \item \textbf{No Redundancy}
    \item \textbf{Maximize Purity}
\end{itemize}

\camerarevise{To quantify feature coverage, we generate 10 meta-rule sets for each of the eight original datasets and count the distinct features used. We use the same rule-pool size $K$ as in the main experiments (Table~\ref{tab:method_params}). Table~\ref{tab:rule_diversity} reports both the number and proportion of covered features.}

\begin{table*}[h]
    \centering
    \resizebox{\appendixfulltablewidth}{!}{%
    \begin{tabular}{@{}lccccccccc@{}}
    \toprule
    \tb{Dataset} & \tb{Diabetes} & \tb{Iris} & \tb{Spambase} & \tb{Nursery} & \tb{Abalone} & \tb{Blood} & \tb{Breast} & \tb{Glioma} \\ \midrule
    \#Rules ($K$) & 10 & 10 & 15 & 10 & 10 & 10 & 10 & 15 \\
    \#Features & 8 & 4 & 18 & 8 & 8 & 4 & 9 & 23 \\ 
    Avg. Features Used & 8.0 & 4.0 & 15.0 & 8.0 & 8.0 & 4.0 & 9.0 & 14.9 \\
    Feature Coverage & 100\% & 100\% & 83.3\% & 100\% & 100\% & 100\%  & 100\% & 64.8\% \\ \bottomrule
    \end{tabular}
    }
    \caption{Rule diversity analysis: Feature coverage (Avg. Features Used / \#Features) measures the proportion of features utilized in the generated rules, with higher values indicating better diversity.}
    \label{tab:rule_diversity}
\end{table*}

\camerarevise{Coverage is 100\% on six datasets, 83.3\% on Spambase, and 64.8\% on the 23-feature Glioma dataset. Coverage is not itself an interpretability score---a smaller, focused rule set may be easier to inspect---but these values show that rule generation is not restricted to a single dominant feature.}

\subsubsection{\revise{Visualized Tree Examples}}

\camerarevise{Figures~\ref{fig:direct-zsdt-tree}--\ref{fig:llmt-tree} provide representative Diabetes trees from every compared tree generator. The LLMT example uses glucose as its principal decision feature and exposes every threshold and leaf prediction directly. These examples complement the aggregate impurity and tree-size results in Section~\ref{sec:exp} by allowing readers to inspect the learned decision logic, while we avoid treating a single medically plausible example as quantitative evidence of clinical validity.}

\begin{figure}[!]
    \centering
    \includegraphics[width=0.82\columnwidth]{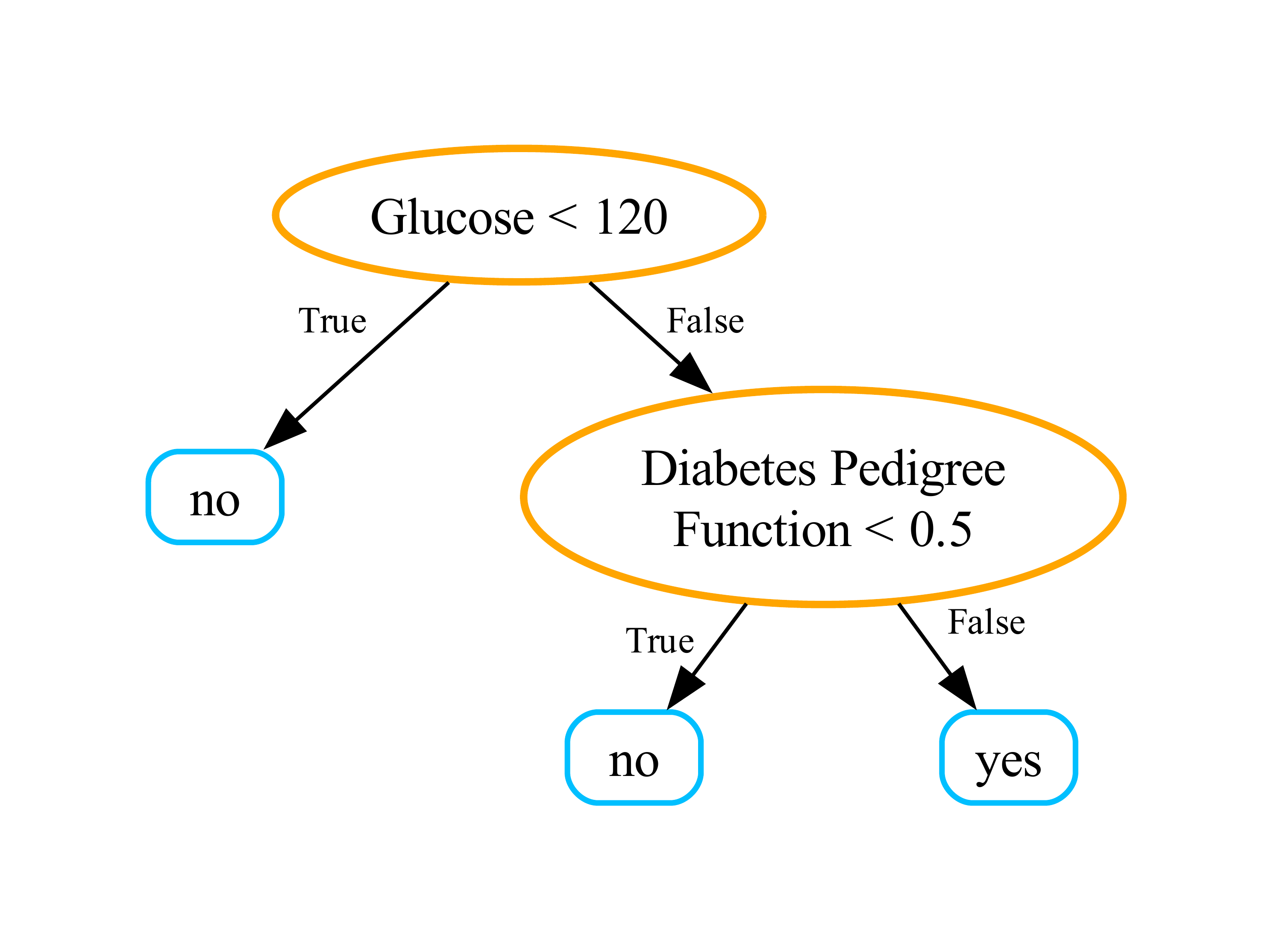}
    \caption{\newrevise{An Example of Direct-ZSDT}}
    \label{fig:direct-zsdt-tree}
\end{figure}

\begin{figure}[!]
    \centering
    \includegraphics[width=0.82\columnwidth]{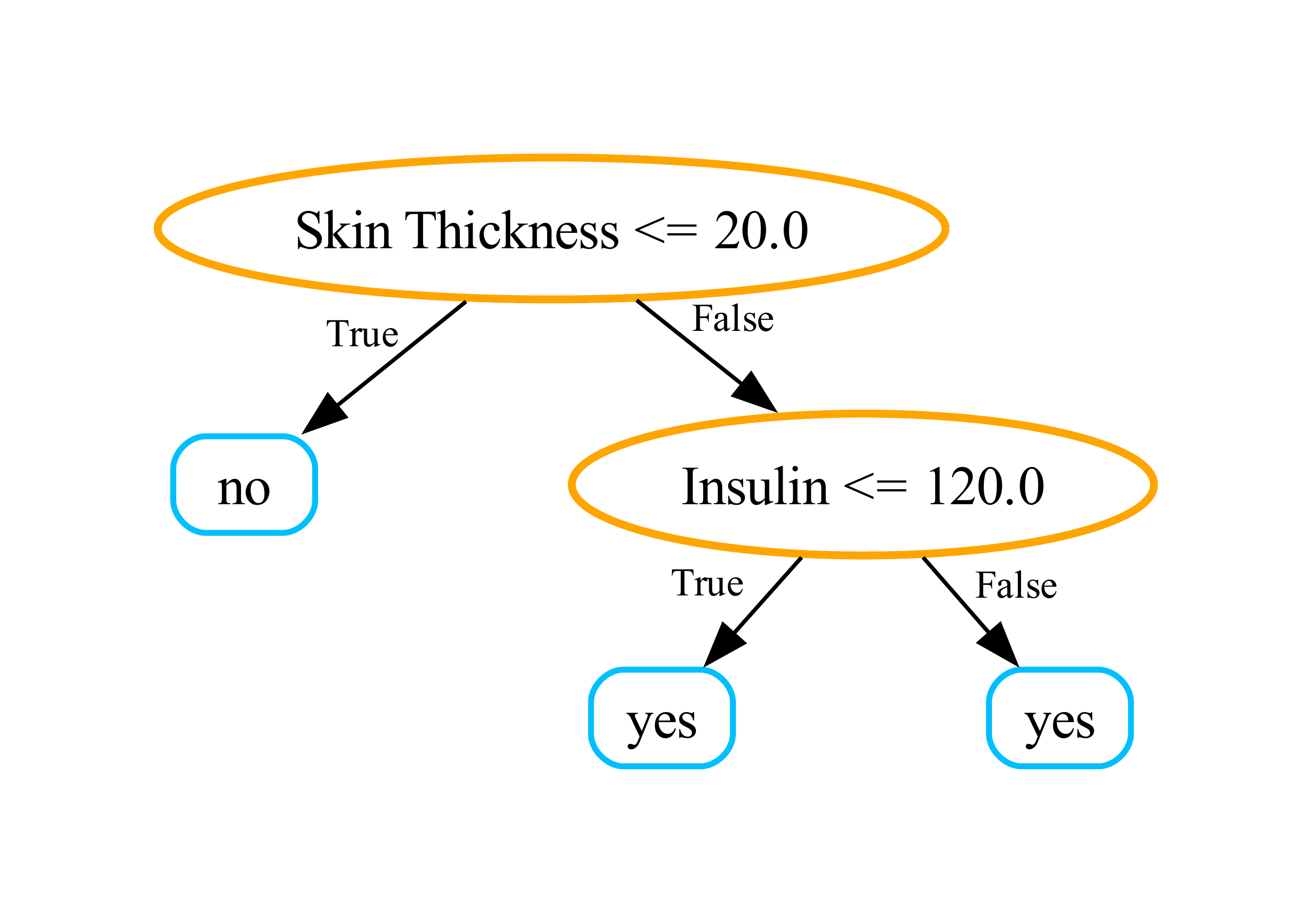}
    \caption{\newrevise{An Example of Step-ZSDT}}
    \label{fig:step-zsdt-tree}
\end{figure}

\begin{figure}[!]
    \centering
    \includegraphics[width=0.45\columnwidth]{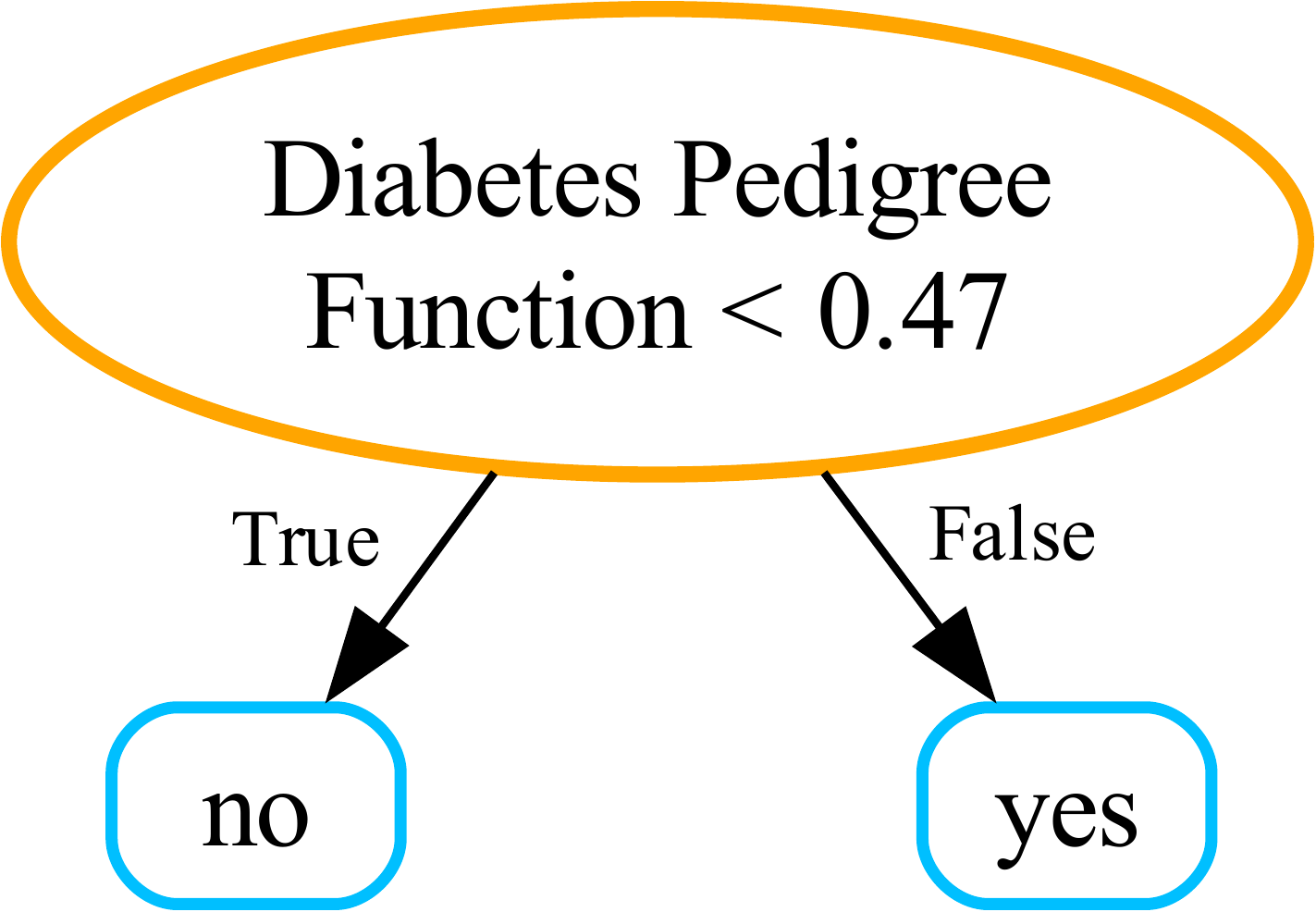}
    \caption{An Example of CART Tree}
    \label{fig:cart-tree}
\end{figure}

\begin{figure}[!]
    \centering
    \includegraphics[width=0.55\columnwidth]{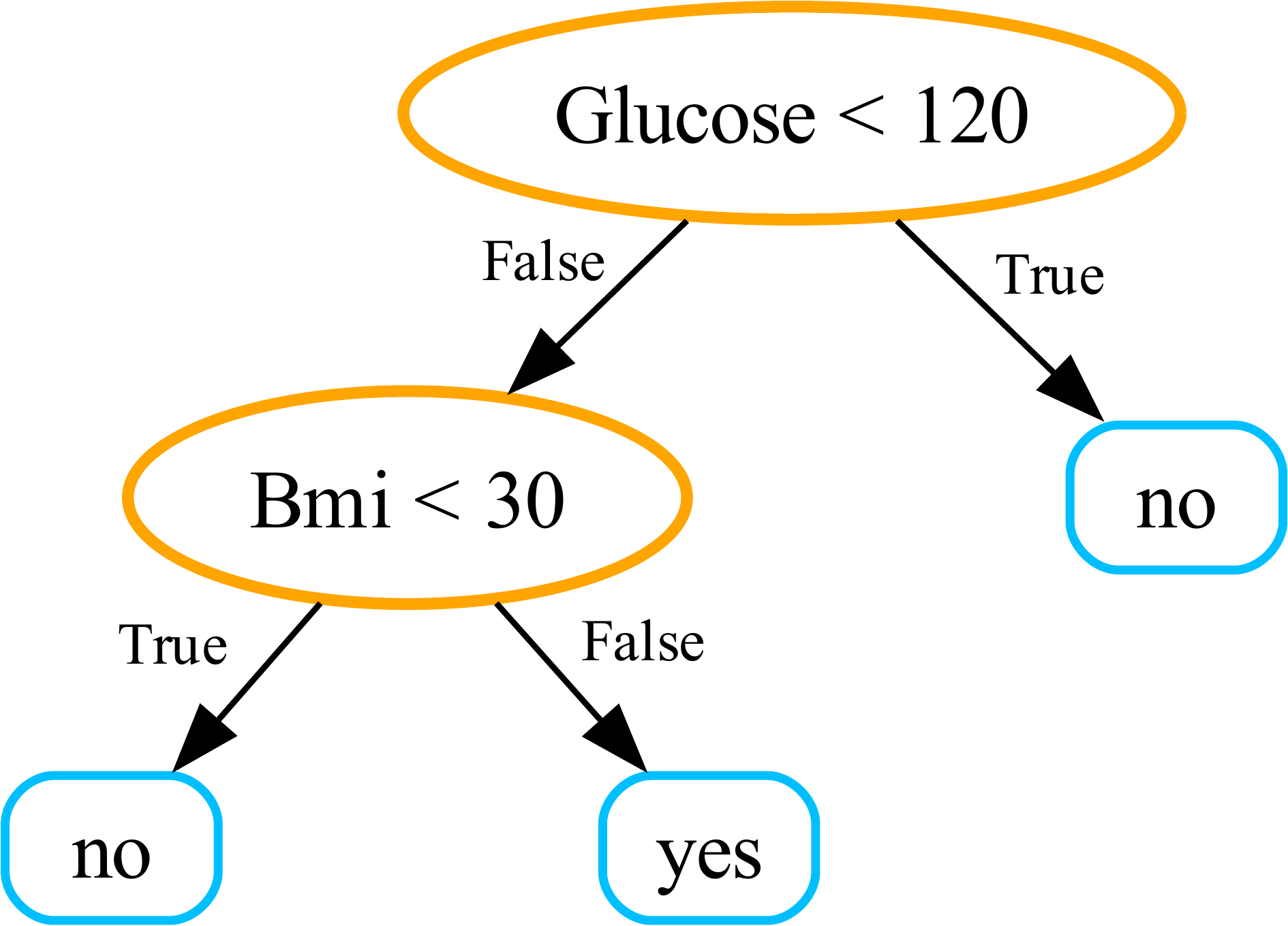}
    \caption{An Example of IO-Tree}
    \label{fig:io-tree}
\end{figure}

\begin{figure}[!]
    \centering
    \includegraphics[width=0.48\columnwidth]{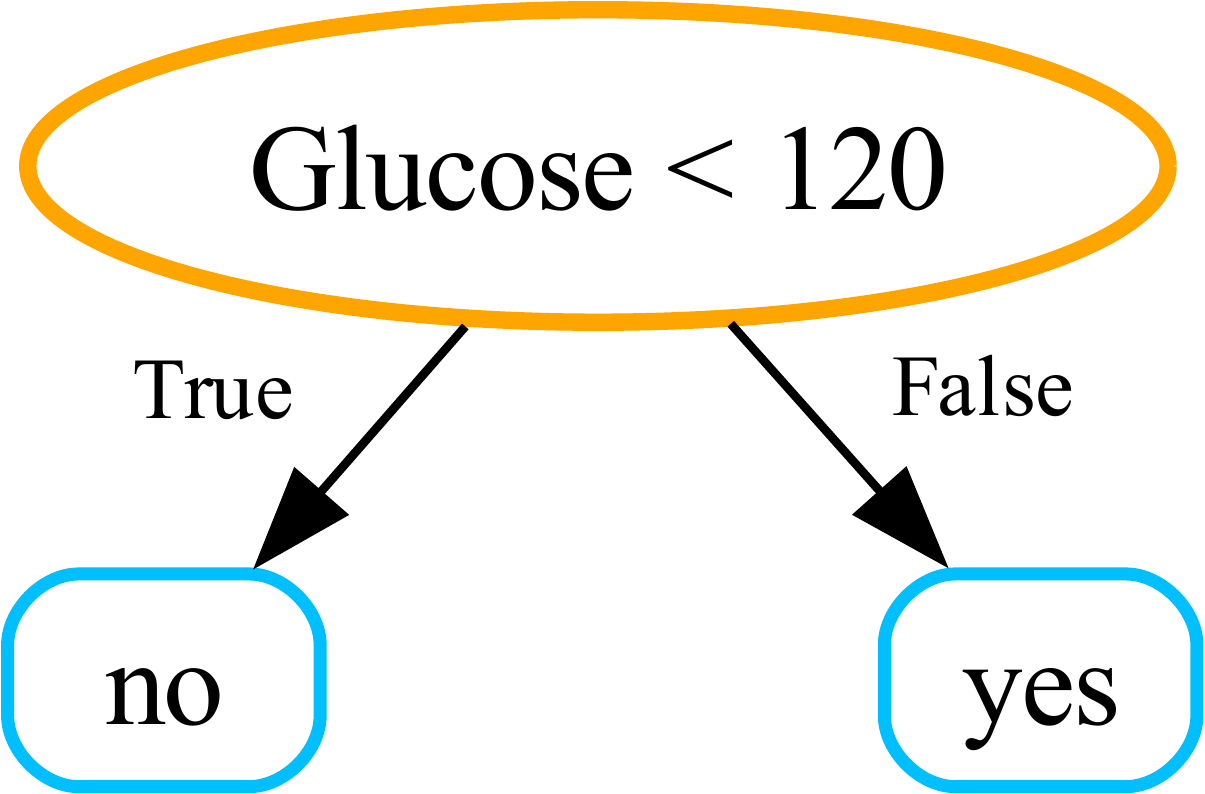}
    \caption{An Example of CoT-Tree}
    \label{fig:cot-tree}
\end{figure}

\begin{figure}[!]
    \centering
    \includegraphics[width=0.90\columnwidth]{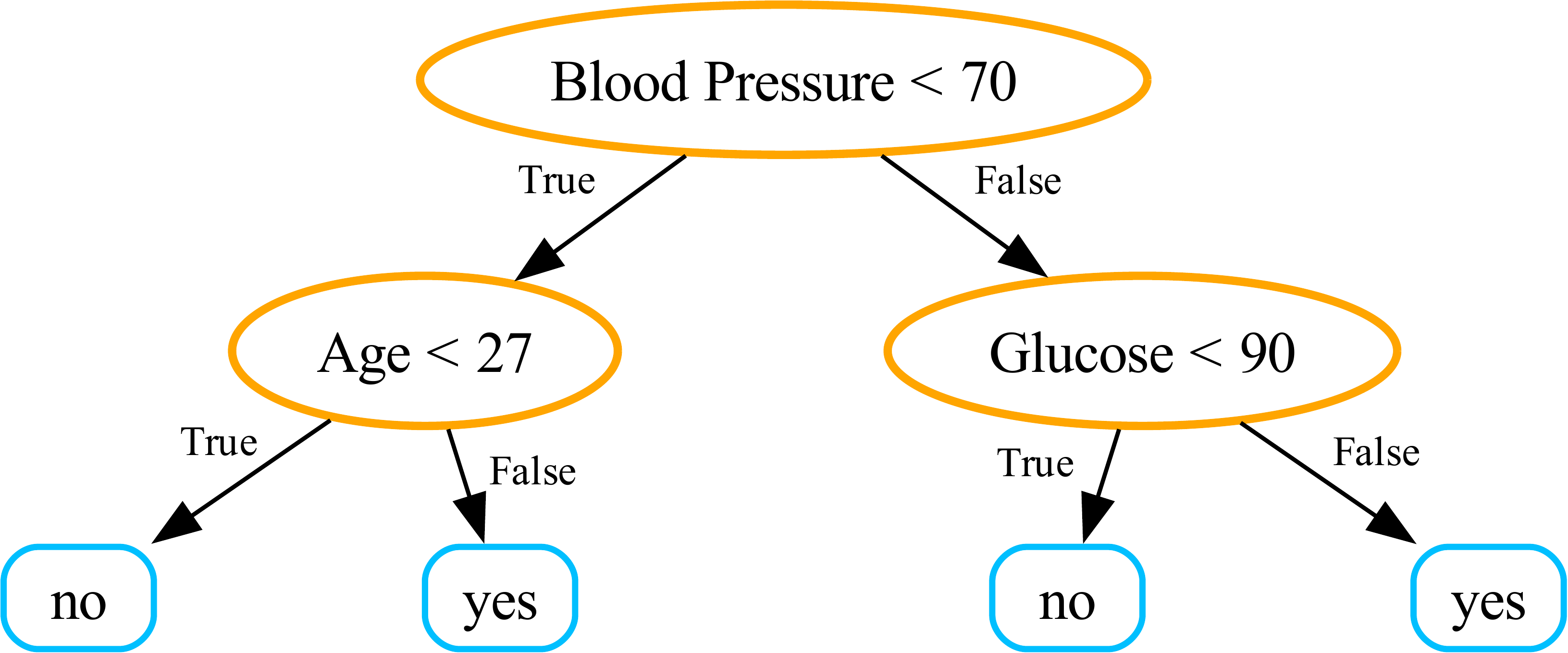}
    \caption{An Example of ToT-Tree}
    \label{fig:tot-tree}
\end{figure}

\begin{figure}[!]
    \centering
    \includegraphics[width=0.82\columnwidth]{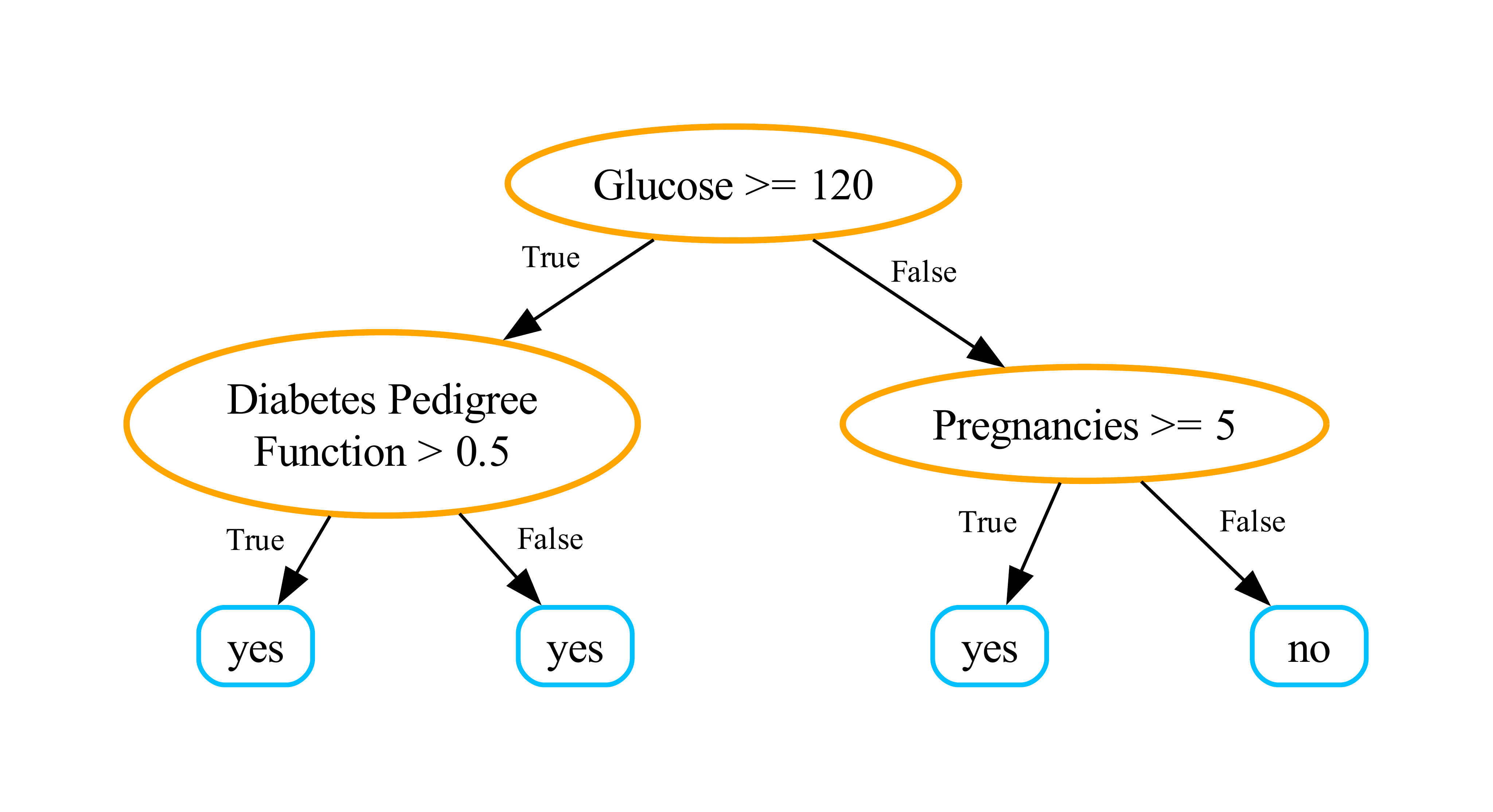}
    \caption{\newrevise{An Example of GPTree}}
    \label{fig:gptree-tree}
\end{figure}

\begin{figure}[!]
    \centering
    \includegraphics[width=0.45\columnwidth]{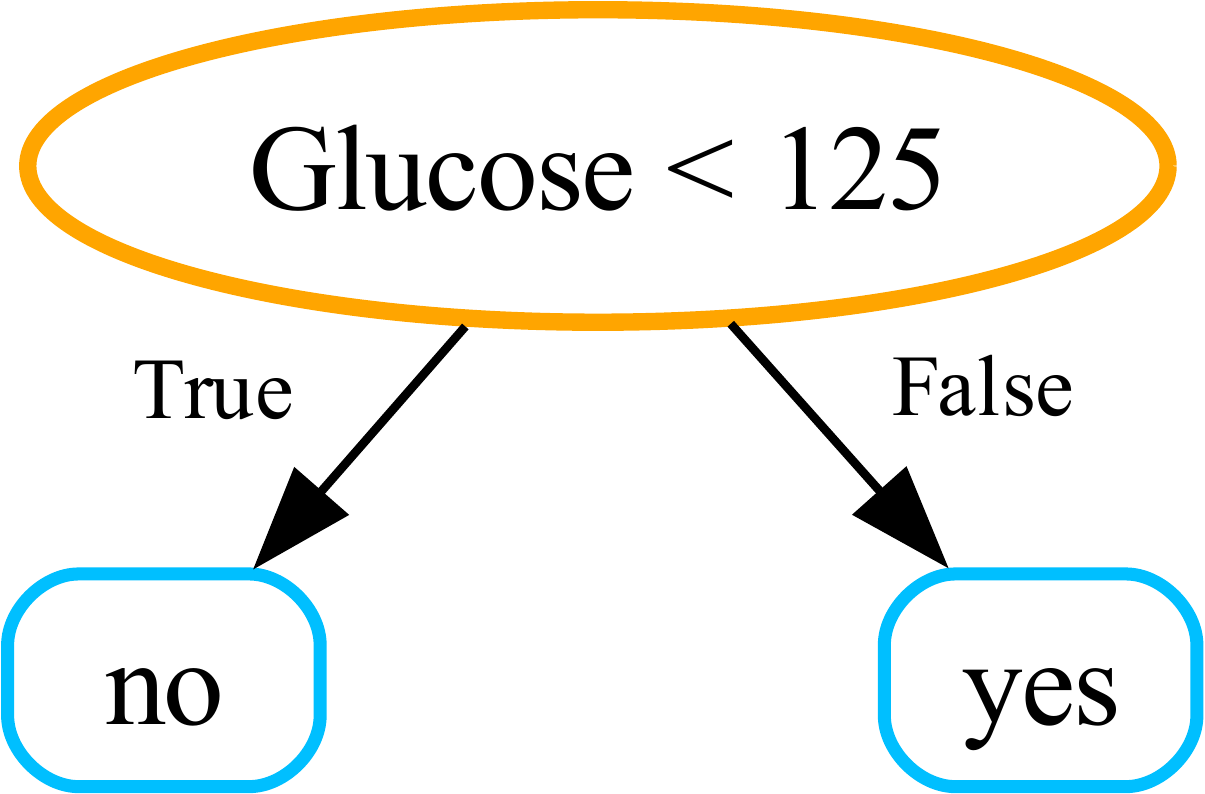}
    \caption{An Example of LLMT}
    \label{fig:llmt-tree}
\end{figure}

% \newpage
\subsection{Sensitivity to Maximum Tree Depth}
\label{sec:depth}
During parameter tuning, we examine the effect of varying maximum depth in the tree‐building process; results are shown in Table~\ref{tab:depth}. Performance proves sensitive to this hyperparameter. On low‐dimensional datasets (e.g., Diabetes), even shallow trees (depth = 2-3) yield stable accuracy. By contrast, on higher‐dimensional datasets with numerous categorical features (e.g., Nursery), deeper trees (depth = 4) consistently achieve better splits and higher accuracy. Therefore, careful tuning of maximum depth is essential - deeper trees better capture complex feature interactions when feature diversity is high.

\begin{table*}[!]
\centering
% \resizebox{0.95\textwidth}{!}{% 缩小5%的缩放比例
\begin{tabular*}{\appendixfulltablewidth}{@{\extracolsep{\fill}}llcccc@{}}
\toprule
\tb{Datasets} & \tb{\#Shots} & \tb{2} & \tb{3} & \tb{4} & \tb{5} \\ \midrule

\multirow{5}{*}{Nursery}    
& 3  & 0.740$\pm$0.000 & 0.762$\pm$0.100 & 0.762$\pm$0.100 & 0.762$\pm$0.100 \\ 
& 6  & 0.740$\pm$0.000 & 0.762$\pm$0.100 & 0.707$\pm$0.063 & 0.731$\pm$0.077 \\ 
& 12 & 0.740$\pm$0.000 & 0.784$\pm$0.091 & 0.742$\pm$0.074 & 0.737$\pm$0.054 \\ 
& 24 & 0.740$\pm$0.000 & 0.818$\pm$0.066 & 0.713$\pm$0.084 & 0.735$\pm$0.045 \\ 
& 48 & 0.740$\pm$0.000 & 0.840$\pm$0.000 & 0.788$\pm$0.042 & 0.742$\pm$0.039 \\ \midrule

\multirow{5}{*}{Diabetes}     
& 2  & 0.662$\pm$0.095 & 0.662$\pm$0.095 & 0.662$\pm$0.095 & 0.669$\pm$0.083 \\ 
& 4  & 0.710$\pm$0.092 & 0.764$\pm$0.025 & 0.736$\pm$0.064 & 0.736$\pm$0.064 \\ 
& 8  & 0.770$\pm$0.000 & 0.770$\pm$0.000 & 0.763$\pm$0.016 & 0.758$\pm$0.020 \\ 
& 16 & 0.770$\pm$0.000 & 0.770$\pm$0.000 & 0.763$\pm$0.016 & 0.756$\pm$0.024 \\ 
& 32 & 0.770$\pm$0.000 & 0.770$\pm$0.000 & 0.766$\pm$0.020 & 0.762$\pm$0.015 \\ \midrule

\multirow{5}{*}{Spambase}  
& 2  & 0.782$\pm$0.010 & 0.782$\pm$0.010 & 0.782$\pm$0.010 & 0.782$\pm$0.010 \\ 
& 4  & 0.788$\pm$0.006 & 0.811$\pm$0.027 & 0.811$\pm$0.027 & 0.784$\pm$0.028 \\ 
& 8  & 0.788$\pm$0.006 & 0.843$\pm$0.028 & 0.836$\pm$0.019 & 0.810$\pm$0.023 \\ 
& 16 & 0.790$\pm$0.000 & 0.825$\pm$0.035 & 0.839$\pm$0.025 & 0.810$\pm$0.018 \\ 
& 32 & 0.790$\pm$0.000 & 0.811$\pm$0.032 & 0.843$\pm$0.018 & 0.798$\pm$0.016 \\ 
\bottomrule
\end{tabular*}%
\caption{Experimental results across different maximum tree depths.}
\label{tab:depth}
% }
\end{table*}

\section{Usage of LLMs}
In this work, we use ChatGPT to polish the writing of our paper.

\section{Prompting Templates}
\label{app:prompt}
% The examples of three table-to-text serialization templates are presented in Figure~\ref{fig:table2text}.
To facilitate application across diverse tasks, we design two prompt templates: one for generating meta‐rule candidates (Figure~\ref{fig:generation_prompt} and Figure~\ref{fig:generation_prompt_cont}) and another for producing supervised probability outputs for leaf labels (Figure~\ref{fig:refinement_prompt}). We also include Figure \ref{fig:io_prompt}, \ref{fig:io_prompt_cont}, \ref{fig:cot_prompt}, \ref{fig:cot_prompt_cont}, \ref{fig:tot_prompt}, and \ref{fig:tot_prompt_cont}, which respectively show the reference I/O-Tree, CoT-Tree, and ToT-Tree generation templates used as baselines in the main text. Text in \textcolor{blue}{blue} font denotes the title of the prompt section, \textcolor{red}{red} font signifies variables that vary across datasets, \textcolor{orange}{orange} font represents system commands, and text in \tb{black} font is the fixed input text used for prompting.

% \enlargethispage{100cm} % 扩展1cm高度

% \enlargethispage{100cm} % 扩展1cm高度

\begin{figure*}[t]
    \centering
    \includegraphics[width=0.8\linewidth]{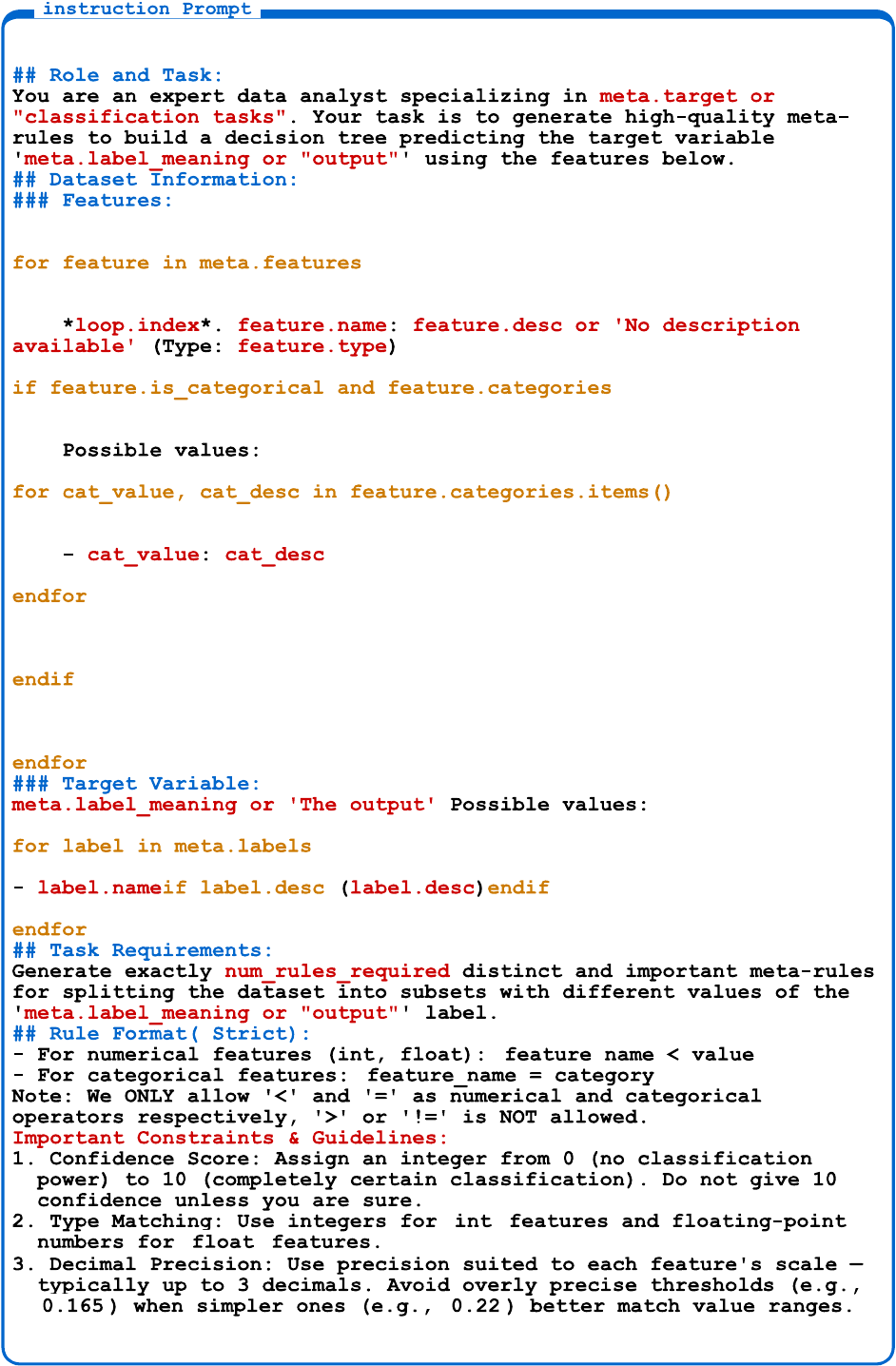}
    \caption{Meta Rule Generation Prompting Template.}
    \label{fig:generation_prompt}
\end{figure*}
\begin{figure*}[t]
    \centering
    \includegraphics[width=0.8\linewidth]{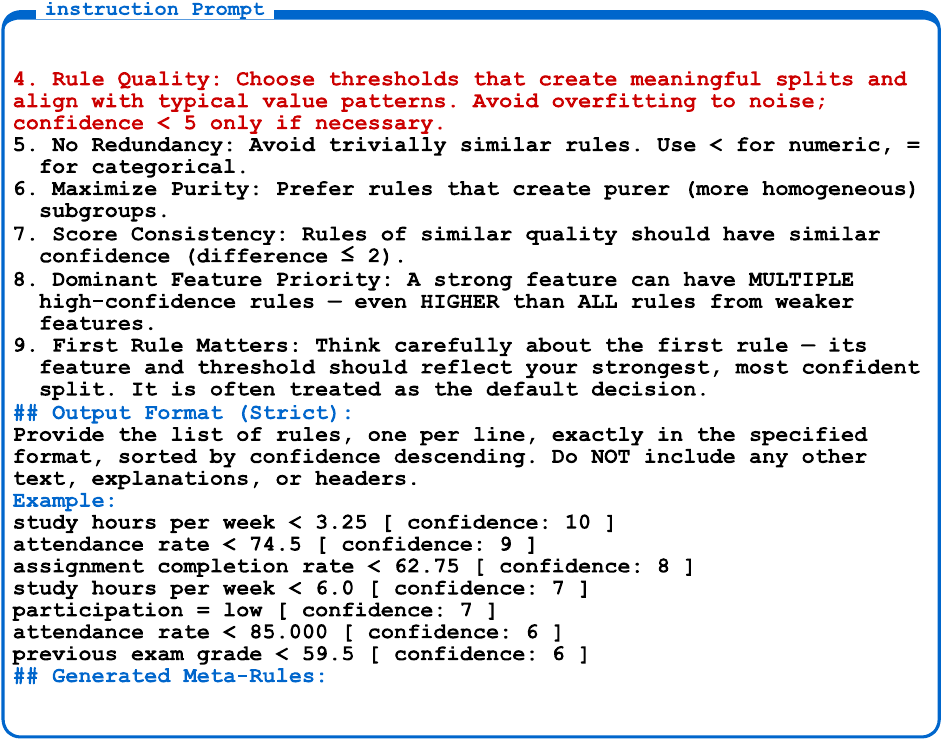}
    \caption{Meta Rule Generation Prompting Template (continued).}
    \label{fig:generation_prompt_cont}
\end{figure*}

\enlargethispage{100cm} % 扩展1cm高度

\begin{figure*}[t]
    \centering
    \includegraphics[width=0.8\linewidth]{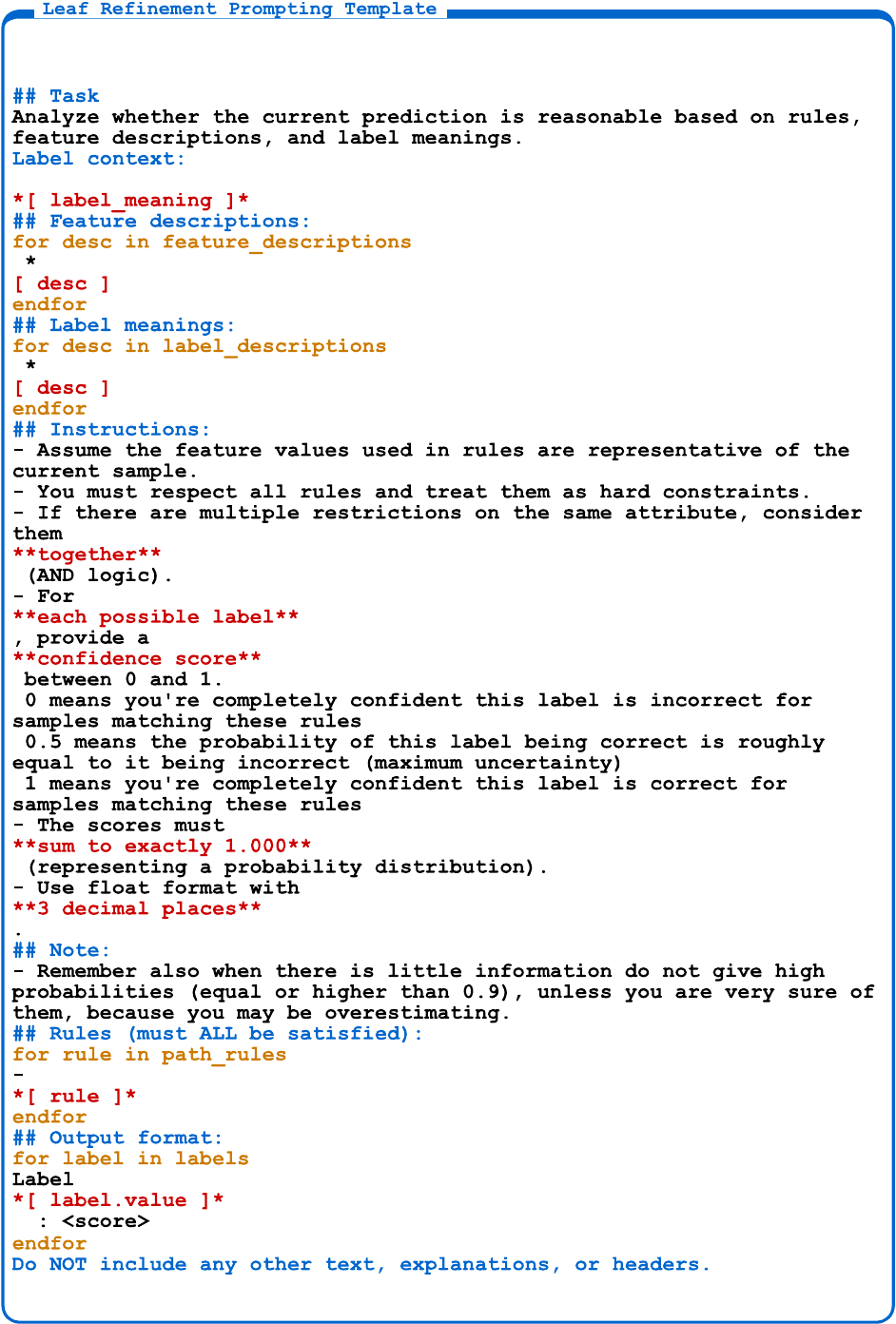}
    \caption{Leaf Refinement Prompting Template.}
    \label{fig:refinement_prompt}
\end{figure*}

\begin{figure*}[t]
    \centering
    \includegraphics[width=0.8\linewidth]{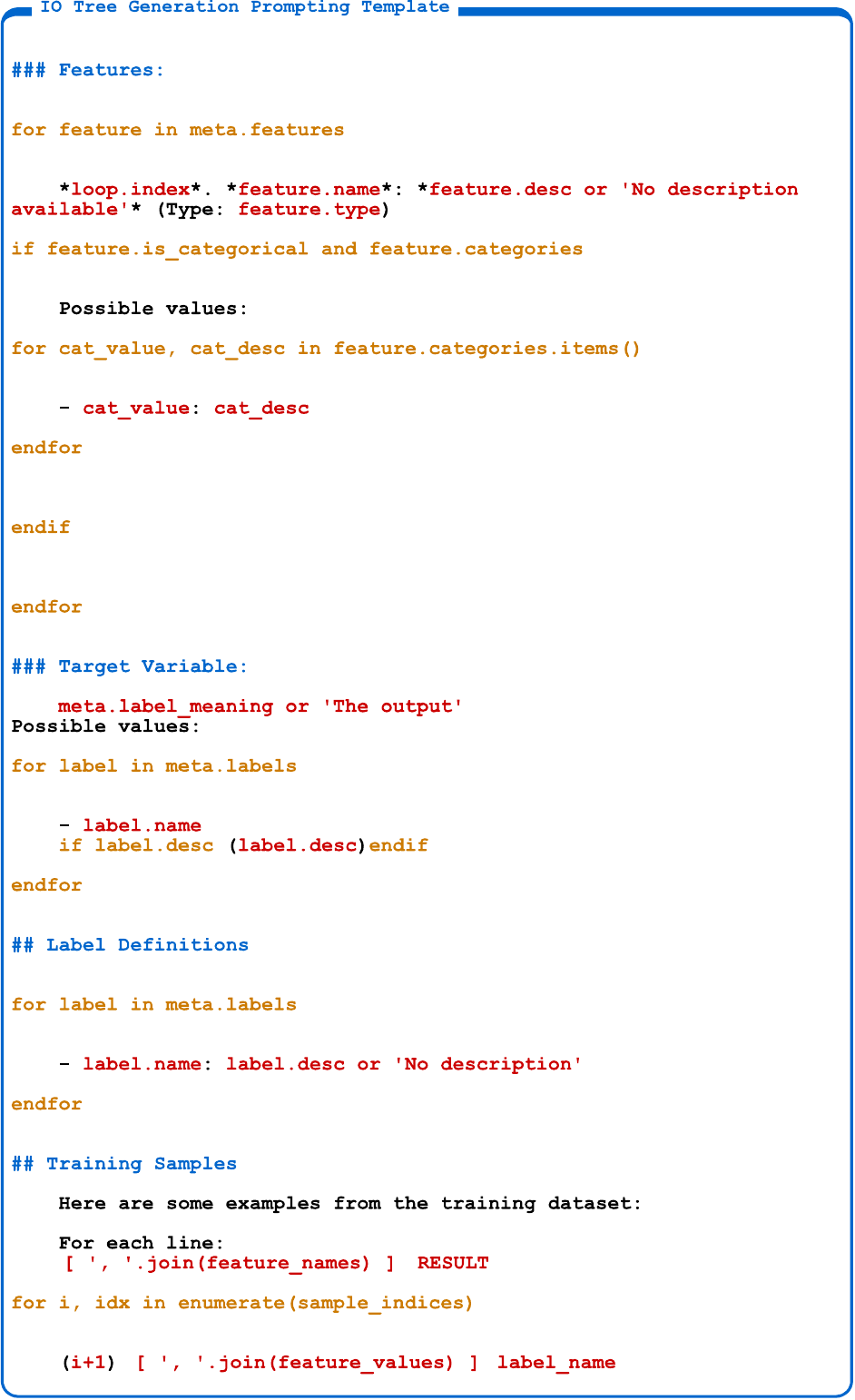}
    \caption{IO Tree Generation Prompting Template.}
    \label{fig:io_prompt}
\end{figure*}

\begin{figure*}[t]
    \centering
    \includegraphics[width=0.8\linewidth]{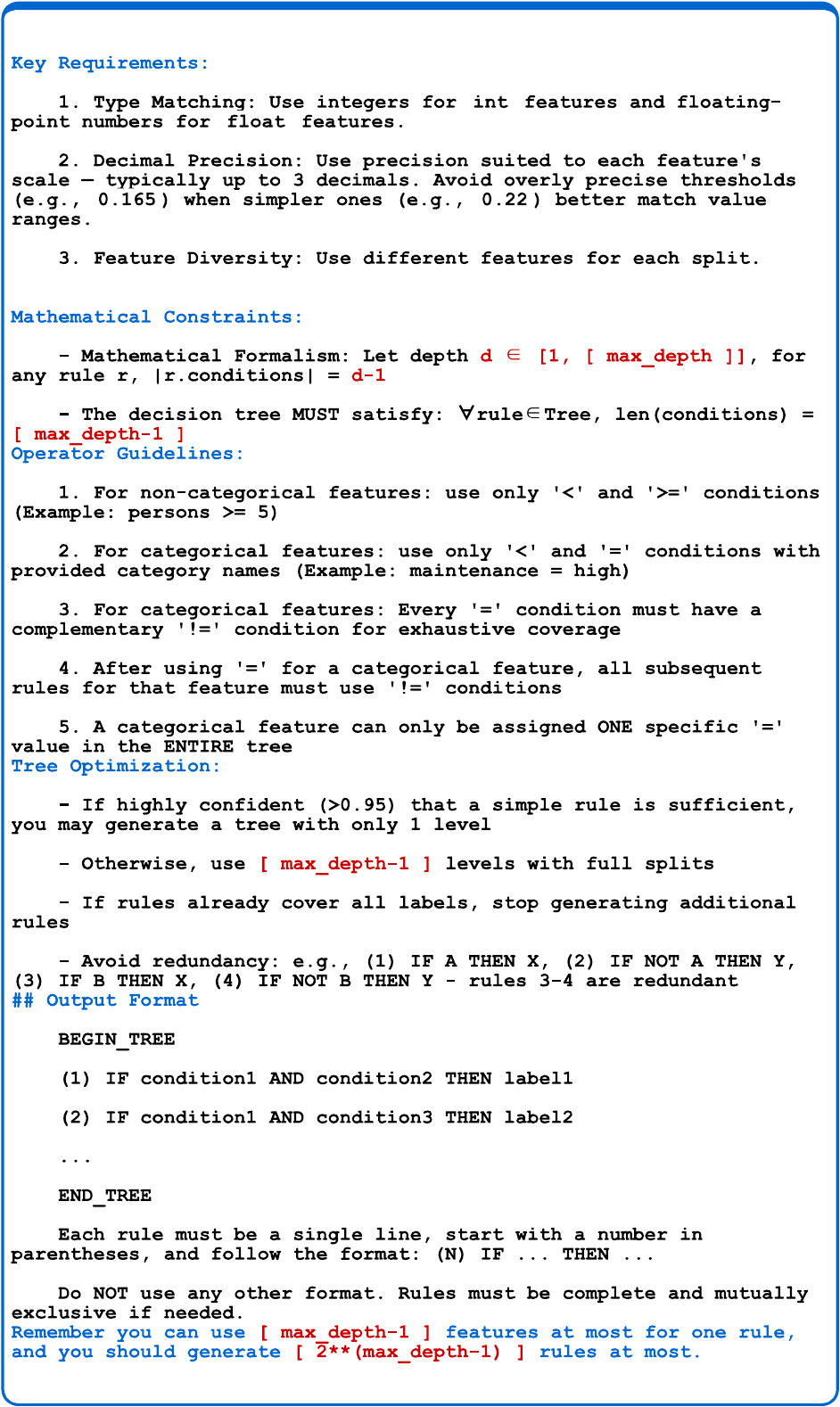}
    \caption{IO Tree Generation Prompting Template (continued).}
    \label{fig:io_prompt_cont}
\end{figure*}

\begin{figure*}[t]
    \centering
    \includegraphics[width=0.8\linewidth]{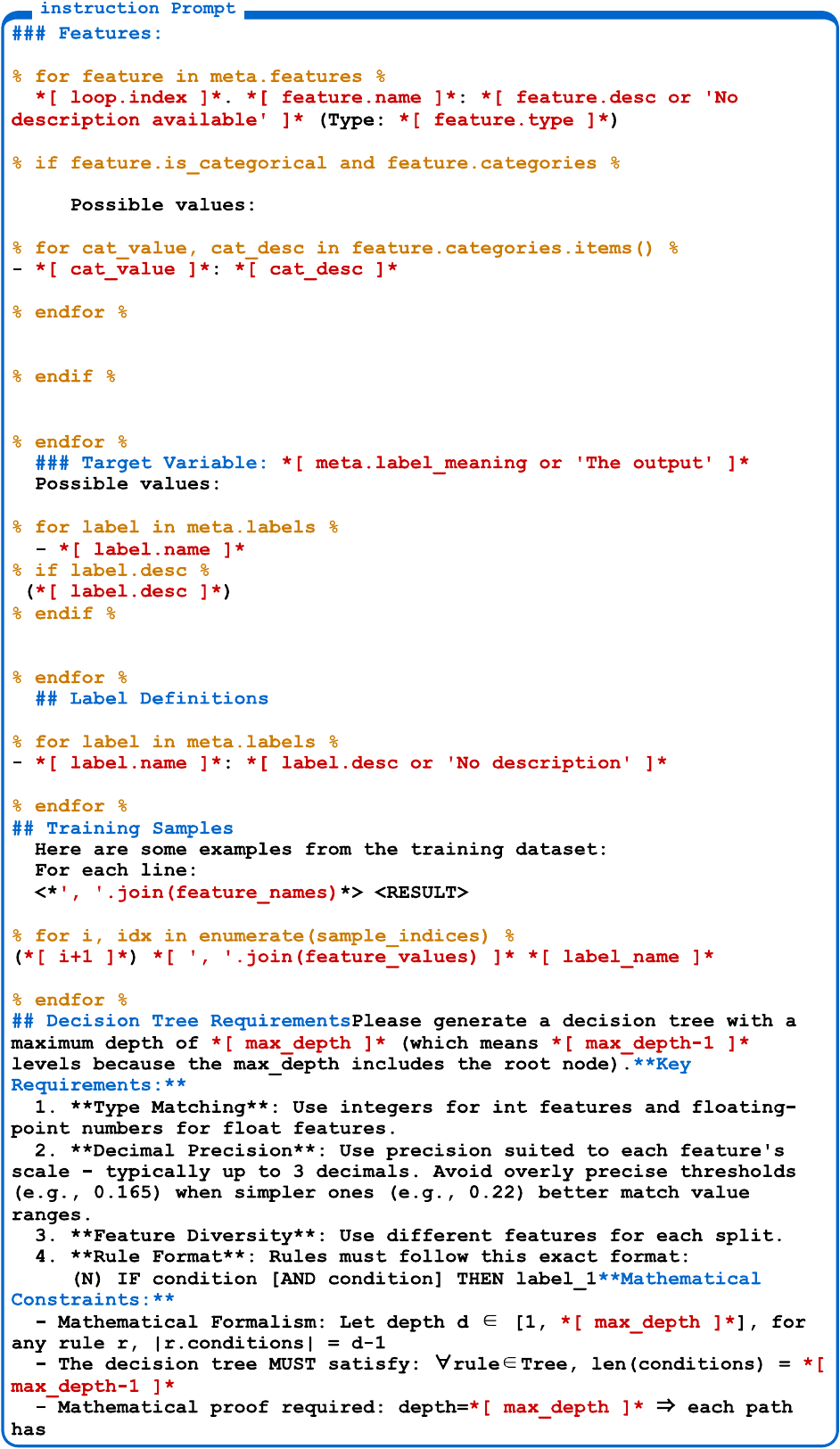}
    \caption{CoT Tree Generation Prompting Template.}
    \label{fig:cot_prompt}
\end{figure*}

\begin{figure*}[t]
    \centering
    \includegraphics[width=0.8\linewidth]{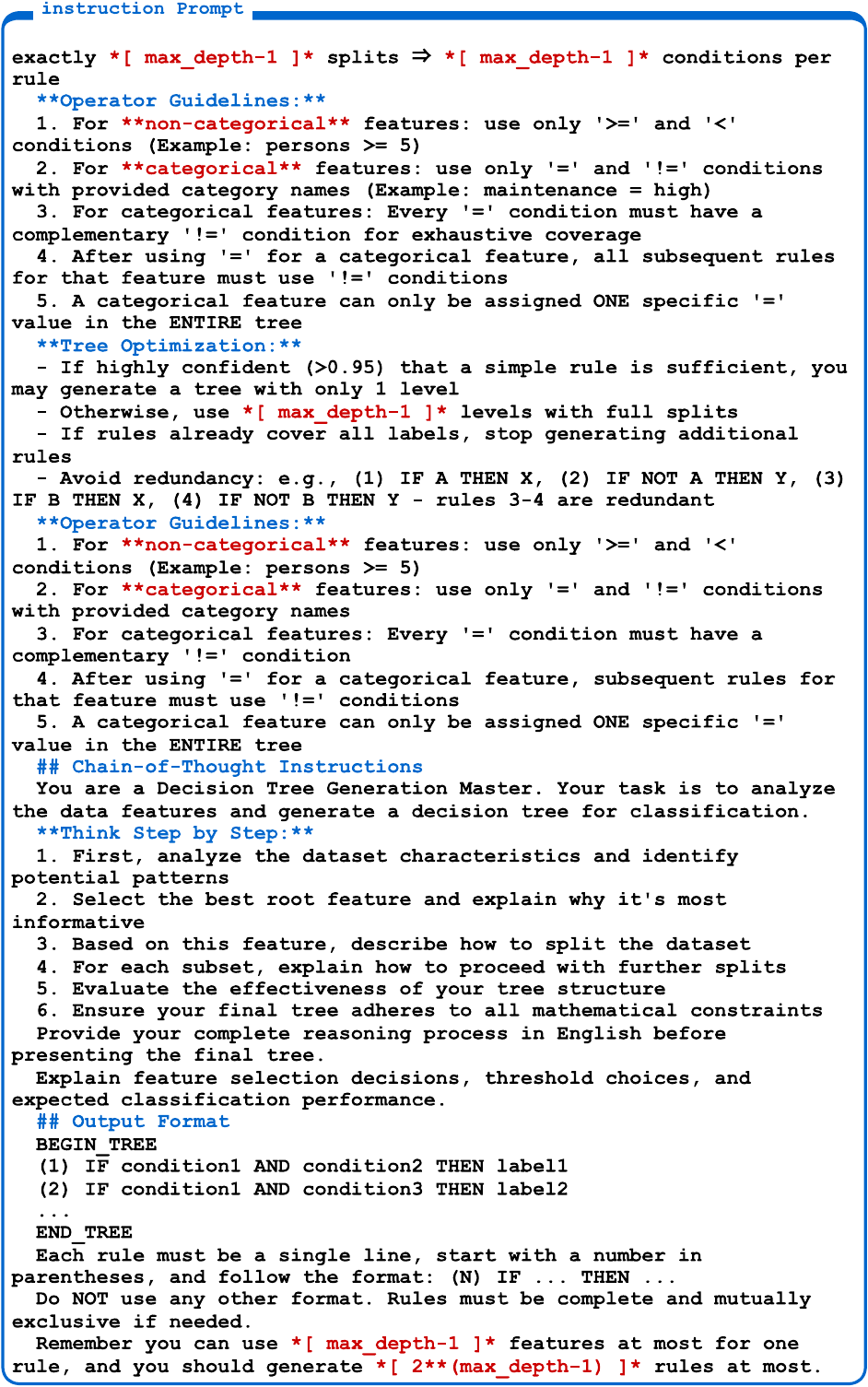}
    \caption{CoT Tree Generation Prompting Template (continued).}
    \label{fig:cot_prompt_cont}
\end{figure*}

\begin{figure*}[t]
    \centering
    \includegraphics[width=0.8\linewidth]{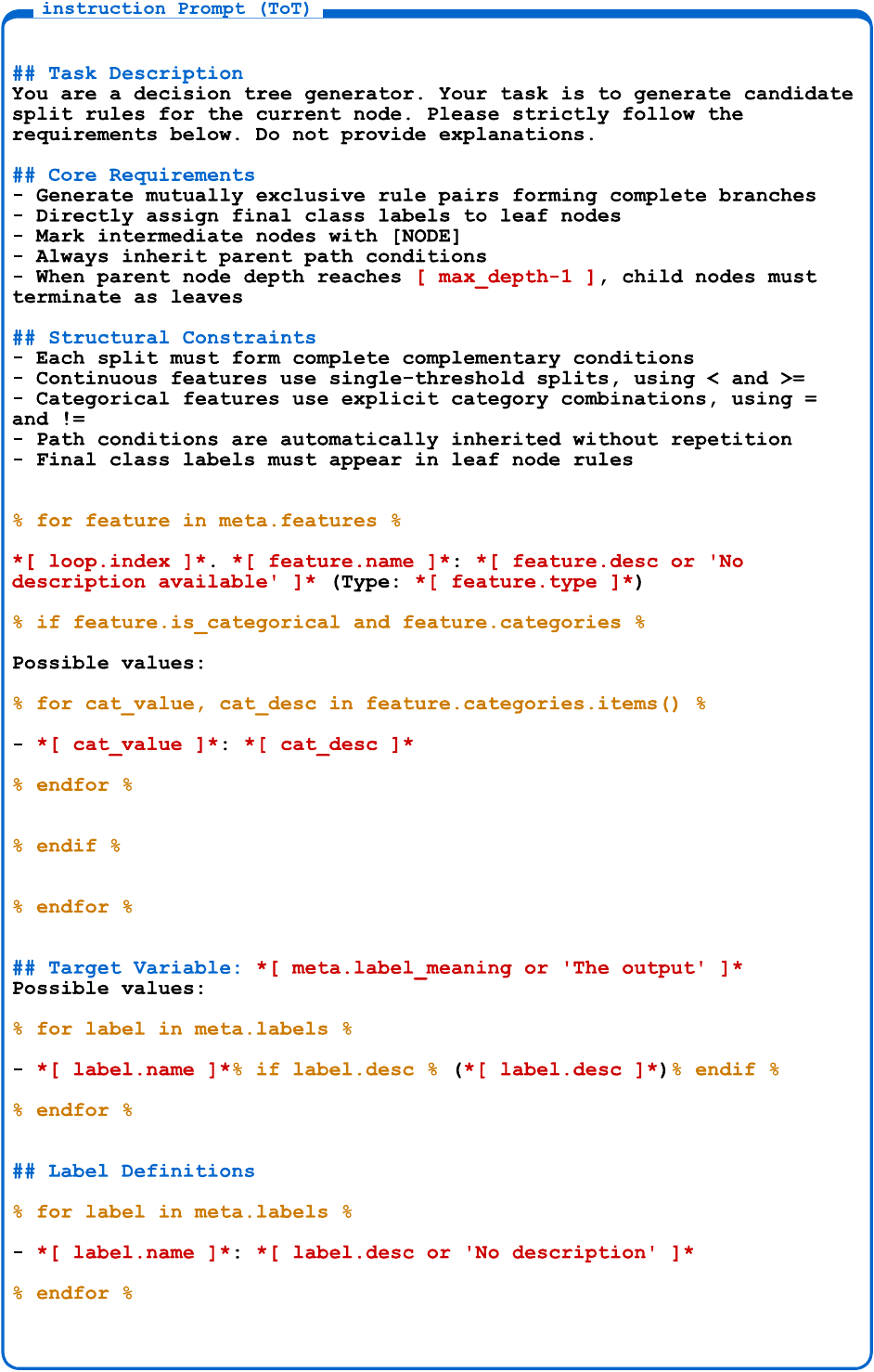}
    \caption{ToT Tree Generation Prompting Template.}
    \label{fig:tot_prompt}
\end{figure*}

\begin{figure*}[t]
    \centering
    \includegraphics[width=0.8\linewidth]{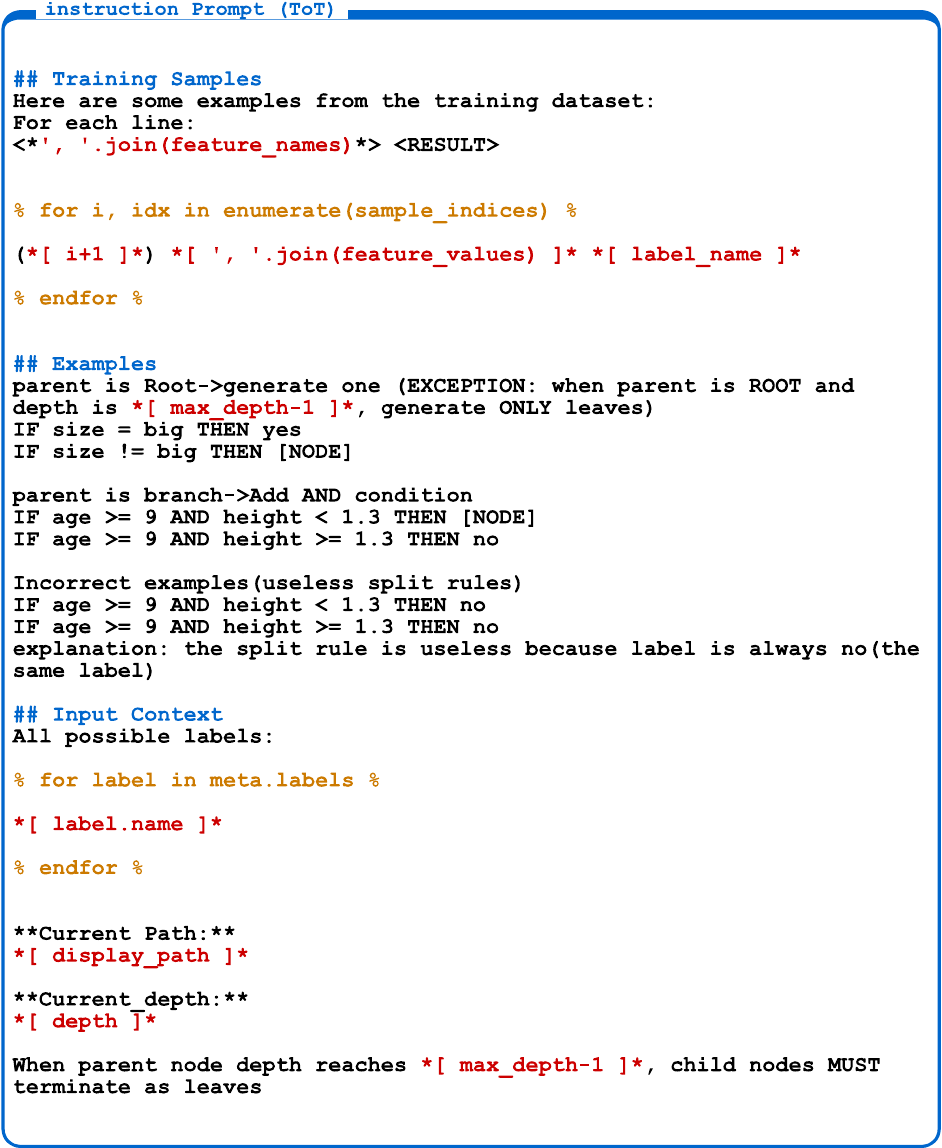}
    \caption{ToT Tree Generation Prompting Template (continued).}
    \label{fig:tot_prompt_cont}
\end{figure*}

\end{document}